\DeclareMathOperator{\Law}{Law}
\DeclareMathOperator{\Tr}{Tr}
\DeclareMathOperator{\Hess}{Hess}
\DeclareMathOperator*{\arginf}{arg\,inf}
\DeclareMathOperator{\supp}{supp}
\DeclareMathOperator{\Ran}{Ran}
\newtheorem{assumption}{Assumption}
\begin{document}

\title{On the geometry of Stein variational gradient descent}

\author{\name A. Duncan \email a.duncan@imperial.ac.uk\\ \addr Imperial College London, Department of Mathematics\\ London SW7 2AZ and The Alan Turing Institute, UK
\AND 
\name N. N\"usken\thanks{corresponding author} \email nikolas.nusken@kcl.ac.uk \\  \addr Kings's College London, Department of Mathematics \\ 
London WC2R 2LS, UK
\AND 
\name L. Szpruch \email l.szpruch@ed.ac.uk \\ \addr University of Edinburgh \\ School of Mathematics \\ Edinburgh, EH9 3JZ, UK and The Alan Turing Institute, London, UK
}

\editor{}

\editor{Kenji Fukumizu}
\maketitle

\begin{abstract}
Bayesian inference problems require sampling or approximating high-dimensional probability distributions. The focus of this paper is on the recently introduced Stein variational gradient descent methodology, a class of algorithms that rely on iterated steepest descent steps with respect to a reproducing kernel Hilbert space norm. This construction leads to interacting particle systems, the mean-field limit of which is a gradient flow on the space of probability distributions equipped with a certain geometrical structure. We leverage this viewpoint to shed some light on the convergence properties of the algorithm, in particular addressing the problem of choosing a suitable positive definite kernel function. Our analysis leads us to considering certain nondifferentiable kernels with adjusted tails. We demonstrate significant performance gains of these in various numerical experiments.  
\end{abstract}

\par\bigskip

\keywords{Bayesian inference, gradient flows, geometry of optimal transport, Stein's method, reproducing kernel Hilbert spaces}

\section{Introduction}
\label{sec:introduction}

Sampling and Variational Inference (VI) are the most common paradigms for extracting information from posterior distributions arising from Bayesian inference problems.   This is a particularly challenging problem in high dimensions, where the posterior distribution will only be known up to a constant of normalisation.    Markov Chain Monte Carlo (MCMC) methods based on the Metropolis-Hastings algorithm provide a generic approach to sampling from such distributions.  However, in high dimensions these methods suffer from poor scalability due to correlation between successive samples.    
Variational techniques reformulate inference as an optimisation problem; seeking a distribution from a family of simple probability distributions which best approximates the target posterior distribution.  VI typically permits faster inference, albeit at the cost of losing asymptotic exactness. 
\\\\
Recently there has been interest in \emph{particle optimisation techniques} which combine aspects of both approaches.   Here, an ensemble of particles are collectively evolved forward, seeking to approximate the posterior distribution.    One such approach, known as Stein Variational Gradient Descent (SVGD), was introduced in \citet{liu2016stein}.  In this method, an ensemble of $N$ particles in $\mathbb{R}^d$ defining an empirical measure $\rho^N$ is moved forward in a series of discrete steps via the map
$$
	x \mapsto  T(x) = x + \varepsilon \psi(x),
$$
where $\varepsilon$ is the step size and $\psi$ is a vector field, which is chosen such that the pushforward measure $T_{\sharp}\rho^N$ has minimal KL divergence with respect to the target posterior $\pi \propto \exp(-V)$.   Choosing $\psi$ from within the unit ball of a vector valued RKHS $\mathcal{H}_k^d$ with positive definite kernel $k:\mathbb{R}^d\times \mathbb{R}^d \rightarrow \mathbb{R}$ results in discrete dynamics of the form
$$
	X^i_{n+1} = X^i_{n} - \frac{\varepsilon}{N} \left( \sum_{j=1}^N \nabla k(X^i_{n}, X^j_{n})  +  \sum_{j=1}^N k(X^i_{n},X^j_{n}) \nabla V(X^j_{n})\right), 
$$
where $\nabla k$ denotes the gradient with respect to the first variable. 
In the continuous time limit, as $\varepsilon \rightarrow 0$, this results in the  following system of ordinary differential equations (ODEs) describing the evolution of the particles $X^1, \ldots, X^N$,
\begin{equation} 
\label{eq:Stein ode}
\frac{\mathrm{d}X_t^i}{\mathrm{d}t} = - \frac{1}{N} \sum_{j=1}^N \nabla k(X_t^i, X_t^j)  - \frac{1}{N} \sum_{j=1}^N k(X_t^i,X_t^j) \nabla V(X_t^j), \quad i=1,\ldots,N.
\end{equation}

It was observed in \citet{liu2017stein} that the scaling limit of \eqref{eq:Stein ode} as $N\rightarrow \infty$ is given by the mean-field equation
\begin{equation}
\label{eq:Stein pde}
\partial_t \rho_t(x) = \nabla\cdot\left( \rho_t(x) \int_{\mathbb{R}^d}   k(x,y) \left[ \nabla \rho_t(y) + \rho_t(y) \nabla V(y) \right] \mathrm{d}y \right),
\end{equation}
where $\rho$ denotes the limiting density of the particles as $N$ tends to infinity. The convergence of $\rho^N$ to $\rho$ was  proved rigorously in \citet{lu2019scaling} together with existence and uniqueness for \eqref{eq:Stein pde}, as well as convergence  to equilibrium, albeit without quantitative rates.  In \citet{liu2017stein} it was observed that the evolution equation \eqref{eq:Stein pde} can be viewed as a gradient flow on the space of probability densities, equipped with a certain distance that depends on the kernel $k$. Remarkably, this observation places SVGD in direct correspondence with the more conventional (overdamped) Langevin dynamics \citep{pavliotis2014stochastic}, see Appendix \ref{app:Langevin}. Our main focus in this paper is to follow the thread of this parallel and leverage the gradient flow perspective for the study of contraction and equilibration properties of \eqref{eq:Stein pde}. To wit, we develop a second order calculus and study the convexity properties of the $\mathrm{KL}$-divergence with respect to an appropriately constructed geometry on the space of probability densities, henceforth called \emph{Stein geometry}, and identify conditions in the form of functional inequalities which are necessary for exponential convergence of $\rho_t$ to the equilibrium $\pi$.   Building on this analysis, we are able to derive principled guidelines for making a suitable choice of the kernel function $k$. In particular, we explore analytically and numerically the use of singular kernel functions, i.e. those that are not continuously differentiable. In our experiments we demonstrate significant performance gains in a variety of inference tasks.  

\subsection{Previous work}
\label{sec:previous}

The SVGD method has attracted a lot of interest since it was  introduced in \citet{liu2016stein}.  Indeed, numerous variants have been proposed which improve scalability by exploiting additional information such as the conditional dependency structure \citep{zhuo2017message} or the underlying geometry of the posterior \citep{chen2019projected,detommaso2018stein,liu2018riemannian,wang2019stein}.  Stochastic variants which introduce noise into the dynamics in order to aid exploration and efficiency of SVGD have also been proposed \citep{gallego2018stochastic,li2019stochastic,zhang2018stochastic,zhang2018towards}. 	Other methods in the spirit of particle optimisation have been proposed, such as \citet{ambrogioni2018wasserstein,bigoni2019greedy,chen2018stein,liu2019understanding,mroueh2017sobolev,mroueh2019sobolev}. The potential of SVGD has also been explored in the context of sequentially updated Bayesian posteriors \citep{detommaso2019stein,pulido2018kernel}.
\\\\
Gradient flows provide a natural formalism in which to analyse the long-term behaviour of certain classes of nonlinear, nonlocal partial differential equations with dissipative behaviour.  This includes many PDEs arising as the mean-field equations of ensembles of interacting stochastic particle systems.  
\\\\
The space of densities equipped with the  quadratic Wasserstein metric formally defines a Riemannian structure over which gradient flows can be defined.  It is well known that solutions to the Fokker-Plank equation associated with the overdamped Langevin dynamics can be formulated as gradient flows of the $\mathrm{KL}$-divergence (or relative entropy) with respect to the Wasserstein metric.  Analysis of the geodesic convexity of the $\mathrm{KL}$-divergence yields conditions under which exponential convergence to equilibrium can be established.   This differential-geometric perspective was put forward by F. Otto and coworkers (see for example \citet{jordan1998variational,otto2001geometry,otto2005eulerian} or \citet[Chapter 15]{V2009} and \citet[Chapter 9]{V2003}). Of particular importance for the development in Section \ref{sec:2nd order} is the discussion in \citet[Section 3]{otto2000generalization}.   Extensions to systems of overdamped Langevin particles with various forms of interactions and their relationships to ensemble Kalman filters and inverse problems \citep{iglesias2013ensemble} have also been considered \citep{garbuno2019affine,garbuno2019gradient,nusken2019note}, see also the extension to $\gamma$-drift diffusions studied in \cite{li2019diffusion}.  In \citet{lu2019accelerating}, the Langevin dynamics are augmented with interactions giving rise to a nonlocal birth-death term in the mean-field equations.   By reformulating the system as a gradient flow of the $\mathrm{KL}$-divergence with respect to the Wasserstein-Fisher-Rao
metric, sufficient conditions for expontential convergence to equilibrium are obtained with quantitative rates. The dynamics put forward in \citet{pathiraja2019discrete,reich2019fokker} are based on approximations of the particle-density within a suitably chosen RKHS; this approach should be contrasted with SVGD which relies on a driving vector field with minimal RKHS-norm. We would also like to refer the reader to \citet{wang2020information}, where Newton gradient flows have been developed, holding the promise of accelerating convergence in the face of ill-conditioning.
\\\\
In the context of machine learning a number of recent works have proposed gradient flow formulations of methods for sampling and variational inference, see for example \citet{arbel2019maximum,li2018natural,lu2019accelerating,wang2019function,yuan2019deep,li2020ricci}.   In particular, a number of approaches which unify Langevin dynamics and SVGD via the common framework of Wasserstein gradient flows have also appeared \citep{chen2017particle,chen2018unified}.

\subsection{Our contribution}
The contributions in this paper are:
\begin{itemize}
	\item Following \citet{liu2017stein} we formulate the mean-field limit of SVGD as a gradient flow of the $\mathrm{KL}$-divergence in the so-called \emph{Stein geometry}. We define appropriate tangent spaces and study foundational properties of the structure thus obtained.   
	\item We derive expressions for the geodesics in this geometry and based on these, explore second order properties of the gradient flow dynamics. The latter are intimately related to a qualitative and quantitative understanding of the convergence to equilibrium, as has been widely recognised in the literature on Wasserstein gradient flows (see \citet{V2009} and references therein). By way of counterexample, we show that, within this framework and using only entropy as the driving force, it is in general impossible to obtain bounds on the Stein-Hessian operator that would allow us to conclude exponential convergence as in the Wasserstein case. 
	\item Moreover, we study the curvature of the $\mathrm{KL}$-divergence  around equilibrium, and identify conditions in the form of functional inequalities which are equivalent to exponential decay when near equilibrium. In certain scenarios we show that there is a direct correspondence with functional-analytic properties of the reproducing kernel Hilbert space (RKHS) associated to the kernel function $k$.
	\item Based on this we derive a series of guidelines for making a suitable choice of kernel function $k$, especially placing emphasis on regularity and tail properties.
\end{itemize}
We would like to point out that differential-geometric tools at this point mainly serve for intuition, and that a rigorous formulation in the framework of metric length spaces has been carried out in \citet{ambrosio2008gradient} for the Wasserstein case. Adapting those techniques to the Stein geometry is an interesting direction for future work. 
\\\\
The remainder of the paper will be as follows. In Section \ref{sec:formulation} we shall introduce basic notation and a number of preliminary assumptions.  In Section \ref{sec:noisy} we discuss a stochastic variant of the SVGD dynamics (originally proposed in \citet{gallego2018stochastic}) and show that the resulting mean-field PDE coincides with \eqref{eq:Stein pde}. In Section \ref{sec:gradient flow} we
recall and extend the Stein geometry introduced in \citet{liu2017stein}, in particular characterising the solution of the mean-field equation \eqref{eq:Stein pde} as a gradient flow of the $\mathrm{KL}$-divergence with respect to this geometry.  In Section \ref{sec:2nd order} we study the geodesic equations under the Stein metric and investigate the geodesic convexity of the $\mathrm{KL}$-divergence. In Section \ref{sec:curvature} we focus on the long-time behaviour when close to equilibrium, and in particular identify conditions in the form of functional inequalities for exponential return. In Section \ref{sec:polynomial} we give a brief outlook at applications of the developed theory for polynomial kernels.  In Section \ref{sec:numerics} a number of numerical experiments are presented to confirm and complement the theory.   Comments and conclusions are deferred to Section \ref{sec:conclusions}. In Appendix \ref{app:Langevin} we draw parallels between SVGD and the Stein geometry on the one hand, and Langevin dynamics and the Wasserstein geometry on the other hand.

\section{Assumptions and Preliminaries}
\label{sec:formulation}

\subsection{Notation and preliminaries}
\label{sec:preliminaries}
We first briefly define the function spaces which will be used throughout this paper. The space $C_c^\infty(\mathbb{R}^d)$ consists of smooth functions with compact support, and $\mathcal{D}'(\mathbb{R}^d)$ refers to its topological dual, the space of distributions.
Given a probability measure $\rho$ on $\mathbb{R}^d$ we define $L^2(\rho)$ to be the Hilbert space of square-integrable functions with respect to $\rho$ with inner product $\langle \phi, \psi \rangle_{L^2(\rho)} = \int_{\mathbb{R}^d} \phi \psi \, \mathrm{d}\rho$.  The subspace $L_0^2(\rho)$ consists of centered functions in $L^2(\rho)$, that is,
\begin{equation}
\label{eq:L20}
L_0^2(\rho) = \left\{ \phi \in L^2(\rho) : \quad \int_{\mathbb{R}^d} \phi \, \mathrm{d}\rho =  0 \right\}.
\end{equation} 
We define the (weighted) Sobolev space $H^1(\rho)$ to be the subspace of $L^2(\rho)$ functions having derivatives also in $L^2(\rho)$, i.e.
\begin{equation}
\nonumber
H^1(\rho) = \left\{\phi \in L^2(\rho): \quad \Vert \nabla \phi \Vert_{L^2(\rho)} < \infty \right\}.
\end{equation}
The following assumption on $k$ is fundamental:
\begin{assumption}[Assumptions on $k$]\label{ass:pdef} The kernel $k:\mathbb{R}^d \times \mathbb{R}^d \rightarrow \mathbb{R}$ is continuous, symmetric and positive definite, i.e. 
\begin{equation}
\nonumber
\sum_{i,j = 1}^n \alpha_i \alpha_j k(x_i, x_j) \ge 0,
\end{equation}	
	for all $n \in \mathbb{N}$, $\alpha_1, \ldots \alpha_n \in \mathbb{R}$ and $x_1, \ldots, x_n \in \mathbb{R}^d$. 
\end{assumption}

Canonical examples of kernels satisfying Assumption \ref{ass:pdef} include the Gaussian kernel $k(x,y) = \exp\left(-\frac{\vert x - y \vert^2}{\sigma^2}\right)$, and  Laplace kernel $k(x,y) = \exp \left( -\frac{\vert x- y \vert }{\sigma}\right)$. More generally, we will consider the kernels $k_{p,\sigma}:\mathbb{R}^d \times \mathbb{R}^d \rightarrow \mathbb{R}$, defined via
\begin{equation}
	\label{eq:p kernel}
	k_{p,\sigma}(x,y) = \exp\left(-\frac{\vert x - y \vert^p}{\sigma^p}\right),
\end{equation}
where  $p \in (0,2]$ is a smoothness parameter, and $\sigma > 0$ is called the kernel width.

 Let $(\mathcal{H}_k, \langle \cdot, \cdot \rangle_{\mathcal{H}_k})$ be the reproducing kernel Hilbert space (RKHS) associated to the kernel $k$, \citep[Sec 4.2]{steinwart2008support}, that is, $\mathcal{H}_k$ is the Hilbert space of all functions on $\mathbb{R}^d$ such that, for $x \in \mathbb{R}^d$, $k(x, \cdot) \in \mathcal{H}_k$ and $f(x) = \langle f, k(x, \cdot)\rangle_{\mathcal{H}_k}$.  We let $\Vert \cdot \Vert_{\mathcal{H}_k}$ be the norm induced by the inner product on $\mathcal{H}_k$. The $d$-fold Cartesian product 
\begin{equation}
\label{eq:vector rkhs}
\mathcal{H}^d_k = \underbrace{\mathcal{H}_k \times \ldots \times \mathcal{H}_k}_{d \, \text{times}}
\end{equation}
is a Hilbert space of vector fields $v = (v_1, \ldots, v_d) :\mathbb{R}^d \rightarrow \mathbb{R}^d$, equipped with the norm
\begin{equation}
\nonumber
\Vert v \Vert_{\mathcal{H}_k^d}^2 = \sum_{i=1}^d \Vert v_i \Vert^2_{\mathcal{H}_k}.
\end{equation}
\begin{remark}[Vector-valued RKHS] More generally one can consider matrix-valued kernels of the form $\bar{k}:\mathbb{R}^d \times \mathbb{R}^d \rightarrow \mathbb{R}^{d \times d}$, \citep{carmeli2006vector,micchelli2005learning}, as has recently been done in \citet{wang2019stein}. The associated RKHS $\mathcal{H}_{\bar{k}}$ then consists of vector-valued functions. We leave the analysis of SVGD algorithms based on matrix-valued kernels for future work.	
\end{remark}
The following is a nondegeneracy assumption on $k$, instrumental in guaranteeing convergence of solutions to \eqref{eq:Stein pde} towards the target $\pi$.
\begin{assumption}
	\label{ass:ispd}\citep{fukumizu2009kernel,sriperumbudur2010hilbert}
	The kernel $k$ is \emph{integrally strictly positive definite} (ISPD), i.e. 
	\begin{equation}
 \nonumber
	\int_{\mathbb{R}^d} \int_{\mathbb{R}^d} k(x,y) \,\mathrm{d}\rho(x) \mathrm{d}\rho(y) > 0 
	\end{equation}
	holds for all finite nonzero signed Borel measures $\rho$. 
\end{assumption}
From \citet[Theorem 7]{sriperumbudur2010hilbert}, ISPD kernels are characteristic, i.e. the kernel mean embedding $\rho \mapsto \int k(\cdot, y) \, \mathrm{d}\rho(y)$ is injective.  
We note that the kernels defined in \eqref{eq:p kernel} (in particular, the Gauss and Laplace kernels) are ISDP, see Lemma \ref{lem:exp p kernel} below.
\\\\
Throughout this article, we will denote by $\mathcal{P}(\mathbb{R}^d)$ the space of probability measures on $\mathbb{R}^d$. Abusing the notation, we will use the same letter for their Lebesgue densities in case they exist.
Given a kernel $k$, we define the following subset of  $\mathcal{P}(\mathbb{R}^d)$,
\begin{subequations}
\begin{align*}
\mathcal{P}_k(\mathbb{R}^d) = \Bigg\{ \rho \in \mathcal{P}(\mathbb{R}^d) :\quad & \rho \,\,\text{admits a smooth Lebesgue density,} \quad \supp \rho = \mathbb{R}^d, 
\\
&\quad \int_{\mathbb{R}^d} k(x,x) \, \mathrm{d}\rho(x) < \infty \Bigg\},
\end{align*}
\end{subequations}
and, for $\rho \in \mathcal{P}_k(\mathbb{R}^d)$, the linear operator $\mathcal{T}_{k, \rho}:L^2(\rho)\rightarrow \mathcal{H}_k$ via
\begin{equation}
\label{eq:T k rho}
\mathcal{T}_{k,\rho} \phi = \int_{\mathbb{R}^d} k(\cdot,y) \phi(y) \, \mathrm{d}\rho(y), \quad \phi \in L^2(\rho).
\end{equation}
For $\rho \in \mathcal{P}_k(\mathbb{R}^d)$, $\mathcal{T}_{k,\rho}$ is compact, self-adjoint and positive semi-definite. Furthermore,  by \citet[Theorem 4.26]{steinwart2008support} the associated RKHS $\mathcal{H}_k$ will consist of $L^2(\rho)$-functions.   By Assumption \ref{ass:ispd} and the fact that $\supp \rho = \mathbb{R}^d$, $\mathcal{T}_{k,\rho}$ is injective, and consequently, the embedding $\mathcal{H}_k \subset L^2(\rho)$ is dense. For a normed vector space $V$ (such as $L^2(\rho)$, $H^1(\rho)$ or $\mathcal{H}_k$ above) and a subset $A\subset V$, we denote by $\overline{A}^{V} \subset V$ the closure in the corresponding norm. That is, $\overline{A}^V$ is the smallest set containing $A$ that is closed with respect to $\Vert \cdot \Vert_V$.
\\\\
Finally, our objective will be to generate samples from the target density $\pi \propto e^{-V}$ on $\mathbb{R}^d$.  We shall make the following basic assumptions on $\pi$ and $V$:

\begin{assumption}
\label{ass:V and pi}
The potential $V: \mathbb{R}^d \rightarrow \mathbb{R}$ is continuously differentiable, with $e^{-V} \in L^1(\mathbb{R}^d)$. The target density is given by
\begin{equation}
\label{eq:target}
\pi = \frac{1}{Z} e^{-V},
\end{equation}	
where $Z = \int_{\mathbb{R}^d} e^{-V} \mathrm{d}x$ is the normalising constant. Furthermore, $\pi \in \mathcal{P}_k(\mathbb{R}^d)$.
\end{assumption}

\section{Stochastic SVGD and its Mean Field Limit}
\label{sec:noisy}
Before turning our focus towards the main topic of this paper in Section \ref{sec:gradient flow}, we comment on a stochastic variant of \eqref{eq:Stein ode}, providing another link to the overdamped Langevin dynamics. This section can be skipped (or read independently from the rest of the paper). The follow-up work \citet{nusken2021stein} connects the deterministic dynamics \eqref{eq:Stein ode} to its stochastic augmentation \eqref{eq:noisy Stein} discussed below using the theory of large deviations and the geometric framework developed in this paper. 

In \citet{gallego2018stochastic}, the following modification of \eqref{eq:Stein ode} was introduced,
\begin{equation}
\label{eq:noisy Stein}
\mathrm{d}\bar{X}_t = \left(-\mathcal{K}(\bar{X}_t) \nabla \bar{V}(\bar{X}_t)  + \nabla \cdot \mathcal{K}(\bar{X}_t)\right) \mathrm{d}t + \sqrt{2 \mathcal{K}(\bar{X}_t)}\, \mathrm{d}W_t,
\end{equation}
where $\bar{X} = (X^1,\ldots,X^N) \in \mathbb{R}^{Nd}$ comprises the collection of particles, $(W_t)_{t \ge 0}$ denotes an $Nd$-dimensional standard Brownian motion,
\begin{equation}
\nonumber
\bar{V}(x_1,\ldots,x_N) = \sum_{i=1}^N V(x_i)
\end{equation}
is the extended potential, 
 and the state-dependent \emph{mass matrix} $\mathcal{K}:\mathbb{R}^{Nd} \rightarrow \mathbb{R}^{Nd \times Nd}$ can be decomposed into $N^2$ blocks of size $d\times d$ as follows,
\begin{equation}
\nonumber
\mathcal{K}(\bar{x}) = 
\begin{pmatrix}
{K}_{11}(\bar{x}) & \dots & K_{1N}(\bar{x}) \\
\vdots & \ddots & \vdots \\
K_{N1}(\bar{x}) & \ldots & K_{NN}(\bar{x}) 
\end{pmatrix},
\end{equation}
where 
\begin{equation}
\nonumber
K_{ij} (\bar{x}) = \frac{1}{N} k(x_i,x_j) I_{d\times d}.
\end{equation}
Furthermore, $\sqrt{\mathcal{K}(\bar{x})}$ denotes a square root of the nonnegative matrix $\mathcal{K}(\bar{x})$.  By definition,
\begin{equation}
\nonumber
(\nabla \cdot \mathcal{K})_i = \sum_{j=1}^{Nd} \frac{\partial \mathcal{K}_{ij}}{\partial \bar{x}_j}, \qquad i = 1,\ldots,Nd,
\end{equation}
so we see that
the $i^{th}$ coordinate $X_t^i$ satisfies the SDE
\begin{equation}
\label{eq:noisy_stein_coordinate}
\begin{aligned}
\mathrm{d}X_t^i = \frac{1}{N}\sum_{j=1}^N \left[ - k(X_t^i, X_t^j)\nabla V(X_t^j) + \nabla_{X_t^j}k(X_t^i, X_t^j)\right] \mathrm{d}t + \sum_{j=1}^N \sqrt{2\mathcal{K}(\bar{X}_t)}_{ij}\, \mathrm{d}W_t^j, 
\end{aligned}
\end{equation}
coinciding with \eqref{eq:Stein ode} up to the noise term $\sqrt{2\mathcal{K}(\bar{X}_t)}\, \mathrm{d}W_t$. Indeed, this perturbation becomes vanishingly small in the limit as $N \rightarrow \infty$, and the mean-field limits of \eqref{eq:Stein ode} and \eqref{eq:noisy Stein} agree:\footnote{While a rigorous  convergence proof is beyond the scope of this work, we can formally identify the mean-field limit.}
\begin{proposition}[Formal identification of the mean-field limit]
	\label{prop:mean field limit}
	As $N\rightarrow \infty$, the empirical measure $\rho_t^N = \frac{1}{N}\sum_{i=1}^N \delta_{X_t^i}$ associated with \eqref{eq:noisy_stein_coordinate} converges to the solution $\rho_t$ of \eqref{eq:Stein pde}.
\end{proposition}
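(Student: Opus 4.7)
The plan is to test equation \eqref{eq:noisy_stein_coordinate} against a smooth compactly supported function $\phi \in C_c^\infty(\mathbb{R}^d)$ and derive the effective dynamics of $\langle \phi, \rho_t^N\rangle = \frac{1}{N}\sum_i \phi(X_t^i)$. First I would apply Itô's formula to $\phi(X_t^i)$, using the drift and diffusion from \eqref{eq:noisy_stein_coordinate}, and then average over $i$. This yields a decomposition
$$d\langle \phi, \rho_t^N\rangle = A_t^N\, dt + B_t^N\, dt + dM_t^N,$$
where $A_t^N$ collects the drift contribution, $B_t^N$ the Itô correction, and $M_t^N$ is a local martingale.

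Second, I would treat each of the three terms. The drift rewrites as
$$A_t^N = \frac{1}{N^2}\sum_{i,j=1}^N \nabla\phi(X_t^i) \cdot \bigl[-k(X_t^i,X_t^j) \nabla V(X_t^j) + \nabla_y k(X_t^i, X_t^j)\bigr],$$
which is exactly $\iint \nabla\phi(x) \cdot [-k(x,y)\nabla V(y) + \nabla_y k(x,y)]\,d\rho_t^N(x)\,d\rho_t^N(y)$. The Itô correction picks up only the diagonal blocks $\mathcal{K}_{ii} = \frac{1}{N}k(X_t^i,X_t^i)I_{d\times d}$, so $B_t^N = \frac{1}{N^2}\sum_i k(X_t^i,X_t^i)\Delta\phi(X_t^i) = O(1/N)$ uniformly in $t$ on compact intervals under mild moment bounds on $k$. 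The martingale $M_t^N$ has quadratic variation $d\langle M^N\rangle_t = \frac{2}{N^3}\sum_{i,j} k(X_t^i,X_t^j)\nabla\phi(X_t^i)\cdot\nabla\phi(X_t^j)\,dt = O(1/N)\,dt$, so by Doob's inequality $M_t^N\to 0$ in $L^2$.

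Third, assuming (formally) that $\rho_t^N$ converges weakly to some $\rho_t$ admitting a smooth density, I would pass to the limit in the integrated identity to obtain
$$\frac{d}{dt}\int \phi\, d\rho_t = \iint \nabla\phi(x) \cdot \bigl[-k(x,y)\nabla V(y) + \nabla_y k(x,y)\bigr] \rho_t(x)\rho_t(y)\, dx\, dy.$$
An integration by parts in $y$ converts $\nabla_y k(x,y)\,\rho_t(y)\,dy$ into $-k(x,y)\nabla_y\rho_t(y)\,dy$ (the boundary terms vanishing under the regularity and decay assumptions), and the result is precisely the weak formulation of \eqref{eq:Stein pde} tested against $\phi$.

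The main obstacle — and the reason the statement is only formal — is the passage to the limit itself. A rigorous argument requires tightness of the empirical-measure processes $\{\rho^N\}$ on path space, identification of any limit point as a solution of \eqref{eq:Stein pde}, and uniqueness for that PDE (available from \cite{lu2019scaling} for the noiseless system). Propagation of chaos is complicated here by the non-diagonal, state-dependent noise $\sqrt{2\mathcal{K}(\bar X)}\,dW$: one cannot decouple particles by standard coupling arguments, and one needs uniform estimates on $\mathcal{K}^{1/2}$ and its derivatives. Handling these points rigorously is outside the scope of the present paper.
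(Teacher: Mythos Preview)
Your proposal is correct and follows essentially the same approach as the paper: apply It\^o's formula to $\frac{1}{N}\sum_i \phi(X_t^i)$, identify the drift as a double integral against $\rho_t^N\otimes\rho_t^N$, and show that both the It\^o correction $\frac{1}{N^2}\sum_i k(X_t^i,X_t^i)\Delta\phi(X_t^i)$ and the martingale (with quadratic variation $\frac{2}{N^3}\sum_{i,j}k(X_t^i,X_t^j)\nabla\phi(X_t^i)\cdot\nabla\phi(X_t^j)$) vanish as $N\to\infty$. The paper uses a time-dependent test function and stops at the transport form $\partial_t\rho_t = -\nabla\cdot(b(\cdot,\rho_t)\rho_t)$ without your extra integration by parts in $y$, but these are cosmetic differences.
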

\begin{proof}
	See Appendix \ref{app:noisy Stein}.
\end{proof}
It is straightforward to check that 
\begin{equation}
\label{eq:marginals}
\bar{\pi}(x_1, \ldots, x_N) := \prod_{i=1}^N \pi(x_i) = \frac{1}{Z^N} \exp \left(- \sum_{i=1}^N V(x_i) \right)
\end{equation}
is an invariant probability density for \eqref{eq:noisy Stein}, with marginals\footnote{We use the notation  $\mathrm{d}x_1 \ldots \widehat{\mathrm{d}x_i} \ldots \mathrm{d}x_N$ to indicate that integration is meant to be performed over all variables except for $x_i$.}
\begin{equation}
\nonumber
\int_{\mathbb{R}^{(N-1)d}} \bar{\pi}(x_1,\ldots, x_N) \, \mathrm{d}x_1 \ldots \widehat{\mathrm{d}x_i} \ldots \mathrm{d}x_N = \pi(x_i). 
\end{equation}
Below, we will show that under mild conditions, the dynamics \eqref{eq:noisy Stein} is in fact ergodic with respect to $\bar{\pi}$, so that in particular
\begin{equation}
\label{eq:ergodicity}
\frac{1}{T} \int_0^T \frac{1}{N} \sum_{i=1}^N\phi(X_t^i) \, \mathrm{d}t \xrightarrow{T \rightarrow \infty } \int_{\mathbb{R}^{d}} \phi \, \mathrm{d}\pi, \quad \text{a.s.},
\end{equation} 
for any test function $\phi\in C_b(\mathbb{R}^{d})$.
Suitable discretisations of \eqref{eq:noisy Stein} therefore lead to MCMC-type algorithms on an extended state space in the framework of \citet{ma2015complete}, as already noticed in \citet{gallego2018stochastic}. See also \citet[Section 2.2]{DNP2017} and \citet{nusken2019constructing} for related discussions. 
\\\\
For our ergodicity result we need the following set of assumptions:
\begin{assumption}
	\label{ass:ergodicity}
	The following hold:
	\begin{enumerate}
		\item The SDE  \eqref{eq:noisy Stein} admits a global strong solution.
		\item
		\label{it:potential control}
		 We have $\mathbb{E}\int_0^t |\nabla V(X^i_s)|\,\mathrm{d}s < \infty$ for all $i = 1, \ldots, N$ and all $t > 0$.
		\item 
		\label{it:kernel assumption}
		The kernel $k$ is translation-invariant, i.e. 
		$$
		k(x,y) = h \left(x-y\right),\quad x, y \in \mathbb{R}^d,
		$$
		where $h \in C(\mathbb{R}^d) \cap C^1(\mathbb{R}^d\setminus \{0\})$ is Lipschitz continuous, and its gradient satisfies the one-sided Lipschitz condition
		\begin{equation}
		\label{eq:one-sided Lipschitz}
		\left(\nabla h(x) - \nabla h(y) \right) \cdot (x-y) \le C \vert x-y \vert^2,
		\end{equation}
		for some constant $C$ and all $x,y \neq 0$. 
	\end{enumerate}
\end{assumption}
\begin{proposition}[Ergodicity of stochastic SVGD]
\label{prop:ergodicity}
Let Assumption \ref{ass:ergodicity} be satisfied, $d \ge 2$, and assume that the initial condition for \eqref{eq:noisy_stein_coordinate} is distinct, i.e. $X_0^i \neq X_0^j$ for $i \neq j$. Then $X_t^i \neq X_t^j$ for $i \neq j$ for all $t > 0$, almost surely. Moreover, the process $(\bar{X}_t)_{t \ge 0}$ is ergodic with respect to the product measure \eqref{eq:marginals}.  
\end{proposition}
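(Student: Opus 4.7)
The plan is to split the proof into three stages: (i) establishing non-collision of distinct particles, (ii) verifying that $\bar\pi$ is invariant, and (iii) upgrading invariance to ergodicity via strong Feller and irreducibility on the non-collision set. Stage (i) is needed both to give meaning to the diffusion coefficient $\sqrt{2\mathcal{K}(\bar X_t)}$ for all $t>0$ (recall that $h$ is only $C^1$ away from the origin) and to guarantee that the subsequent ellipticity argument applies.

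For stage (i), I would work with the difference $Z_t^{ij} := X_t^i - X_t^j$ for fixed $i\neq j$ and apply Itô's formula to $-\log|Z_t^{ij}|$, in analogy with the classical Bessel-process calculation. Translation invariance gives the clean infinitesimal covariation matrix
\[
\tfrac{4}{N}\bigl(h(0) - h(Z_t^{ij})\bigr)\, I_{d\times d}\,\mathrm{d}t,
\]
and since $h(0)\ge h(z)$ for translation-invariant PSD kernels, the Itô correction reduces to $-\tfrac{2(d-2)}{N|Z_t^{ij}|^2}\bigl(h(0)-h(Z_t^{ij})\bigr)\,\mathrm{d}t$, which is non-positive once $d\ge 2$. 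The remaining finite-variation contribution $-Z_t^{ij}\cdot(b^i-b^j)/|Z_t^{ij}|^2$ is controlled by combining (a) Lipschitz continuity of $h$ together with the integrability condition $\mathbb{E}\int_0^t |\nabla V(X_s^\ell)|\,\mathrm{d}s<\infty$, and (b) the one-sided Lipschitz bound \eqref{eq:one-sided Lipschitz}, which yields $Z\cdot(\nabla h(X^i-X^\ell)-\nabla h(X^j-X^\ell)) \le C|Z|^2$ and hence a locally integrable drift for $-\log|Z_t^{ij}|$. A standard localization at the stopping times $\tau_\epsilon = \inf\{t : |Z_t^{ij}|\le \epsilon\}$, followed by Fatou as $\epsilon\downarrow 0$, then precludes collision almost surely.

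For stage (ii), I would write the generator as $\gen f = -\inner{\mathcal{K}\nabla\bar V}{\nabla f} + \inner{\nabla\cdot\mathcal{K}}{\nabla f} + \Tr(\mathcal{K}\,\Hess f)$ and verify directly, via integration by parts against $\bar\pi \propto e^{-\bar V}$, that $\int g\,\gen f\,\mathrm{d}\bar\pi = \int f\,\gen g\,\mathrm{d}\bar\pi$ for $f,g\in C_c^\infty$; the symmetry $\mathcal{K}^T = \mathcal{K}$ inherited from $k$ makes this a short calculation, so $\gen$ is reversible with respect to $\bar\pi$. For stage (iii), I would combine non-collision with the observation that on the open set $\Omega = \{\bar x : x_i \neq x_j \text{ for } i\neq j\}$, Assumption \ref{ass:ispd} applied to the signed measure $\sum_i c_i\delta_{x_i}$ forces $\mathcal{K}(\bar x)$ to be strictly positive definite. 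Thus \eqref{eq:noisy Stein} is a locally uniformly elliptic diffusion with smooth coefficients on $\Omega$ that, by stage (i), remains in $\Omega$ almost surely; classical parabolic regularity gives the strong Feller property, while the Stroock--Varadhan support theorem (or a direct control argument using the non-degeneracy of $\mathcal{K}$) gives irreducibility. Doob's theorem then yields uniqueness of the invariant measure and ergodicity, whence \eqref{eq:ergodicity}.

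The main obstacle is clearly stage (i): the diffusion coefficient of $Z^{ij}$ degenerates exactly at collision points and the drift may blow up there, so the Bessel-type comparison has to exploit simultaneously the translation invariance of $k$ (which gives the clean covariation formula), the hypothesis $d\ge 2$ (which signs the Itô correction), and the one-sided Lipschitz bound (which tames the otherwise singular drift). Once non-collision is in hand, stages (ii) and (iii) are essentially classical.
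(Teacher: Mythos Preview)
Your plan is correct and uses the same ingredients as the paper: a logarithmic barrier against collision, the generator correction signed by $d\ge 2$ through $h(0)\ge h(z)$, Lipschitz continuity of $h$ to control the $\nabla V$-part of the drift, the one-sided Lipschitz bound \eqref{eq:one-sided Lipschitz} to control the $\nabla h$-part, and ellipticity on the non-collision set for ergodicity. Stages (ii) and (iii) match the paper's argument essentially verbatim.

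The implementation of stage (i) differs in two bookkeeping respects. The paper works with a \emph{single} Lyapunov function on the full configuration space,
\[
F(\bar x)=-\tfrac12\sum_{l\neq m}\chi(|x_l-x_m|^2)\log|x_l-x_m|^2,
\]
with $\chi\in C_c^\infty$ a cutoff equal to $1$ near the origin, bounds $\bar L F\le C_1\sum_i|\nabla V(x_i)|+C_2$ globally on the non-collision set $D$, and then runs a Dynkin/level-set argument. Summing over \emph{all} pairs avoids a circularity latent in your pair-by-pair scheme: when you bound the drift of $-\log|Z^{ij}|$ via \eqref{eq:one-sided Lipschitz} you implicitly need $X^i\neq X^l$ and $X^j\neq X^l$ for every other index $l$, which is precisely what has not yet been established for those pairs (this is easily patched by localising at the first $\varepsilon$-approach of \emph{any} pair rather than of the single pair $(i,j)$). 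Second, the cutoff $\chi$ makes $F$ bounded below, so the Chebyshev-type step $\mathbb{E}[\tilde F(\bar X_{\tau_q\wedge t})]\ge q\,\mathbb{P}(\tau_q\le t)$ goes through directly; your bare $-\log|Z^{ij}|$ is unbounded below and the Fatou step as written would need a little more care. Neither point is a genuine gap---they are minor adjustments to an otherwise sound argument.
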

\begin{proof}
See Appendix \ref{app:noisy Stein}.
\end{proof}
\begin{remark}
Assumption \ref{ass:ergodicity}.\ref{it:potential control} holds under suitable (mild) conditions on the growth of $V$ at infinity.  
Any bounded translation-invariant kernel of regularity $C^2$ satisfies Assumption \ref{ass:ergodicity}, (\ref{it:kernel assumption}). Specifically, the kernels \eqref{eq:p kernel} satisfy Assumption \ref{ass:ergodicity}.\ref{it:kernel assumption} if $p \in [1,2]$. In the case when $p<1$ these kernels are not Lipschitz continuous. We leave an extension of Proposition \ref{prop:ergodicity} to this regime for future work. Note that the assumption of translation-invariance can easily be weakened, but we choose to impose it for ease of presentation.
\end{remark}


\section{SVGD as a gradient flow}
\label{sec:gradient flow}

In \citet{liu2017stein} it was observed that the evolution equation \eqref{eq:Stein pde} can be interpreted as gradient flow dynamics of the $\mathrm{KL}$-divergence on the space of probability measures equipped with a novel distance $d_k$ that depends on the chosen kernel. Formally, $d_k$ is furthermore the geodesic distance induced by a suitably chosen Riemannian metric. Here we review this perspective and identify the relevant tangent spaces, preparing the ground for our calculations in the later sections. Let us remark that in order to understand the results of the later sections Corollary \ref{cor:gradient flow} suffices; the remainder of this section may thus be skipped at first reading.

In what follows we set up a formal Riemannian calculus on $\mathcal{P}_k(\mathbb{R}^d)$, acting as though $\mathcal{P}_k(\mathbb{R}^d)$ was a smooth manifold. To reinforce this heuristic viewpoint, and for notational convenience, we will use the shorthand $M := \mathcal{P}_k(\mathbb{R}^d)$.   This perspective (nowadays known as \emph{Otto calculus}) has been put forward for the case of the quadratic Wasserstein distance in the seminal works \citet{jordan1998variational,otto1998dynamics,otto2001geometry,otto2000generalization,otto2005eulerian} and was further developed in \citet{ambrosio2008gradient,gigli2012second} and \citet{daneri2008eulerian}. For textbook accounts we refer to \citet[Chapter 8]{V2003}, \citet[Chapter 15]{V2009} and \citet[Chapter 3]{ambrosio2013user}.

To facilitate intuition, we begin with an informal discussion. Speaking in broad terms, many particle-based methods in general (see Section \ref{sec:previous}), and SVGD in particular, postulate dynamical schemes of the form
\begin{equation}
\label{eq:ODE}
\frac{\mathrm{d}X_t}{\mathrm{d}t} = v_t(X_t), \qquad X_0 \sim \rho_0.
\end{equation}
Those are based on a family of vector fields $v_t$, inducing a flow of probability measures $\rho_t = \Law X_t$. Under mild growth and regularity assumptions on $v_t$, the evolution of $\rho_t$ is governed by the \emph{continuity equation}
\begin{equation}
\label{eq:continuity}
\partial_t \rho_t + \nabla \cdot (\rho_t v_t)  = 0,    
\end{equation}
see, for instance, \citet[Section 4.1.2]{ambrosio2013user}. 
On the other hand, given a flow of probability measures $\rho_t$, we may reverse this logic and ask for a family of vector fields $v_t$ that reproduces  $\rho_t$, in the sense of 
\eqref{eq:continuity}, or, equivalently, \eqref{eq:ODE}. Notice that $v_t$ will not be unique, since for any sufficiently regular density $\rho_t$ there exist infinitely many vector fields $u_t$ that satisfy $\nabla \cdot (\rho_t u_t) = 0$; those $u_t$ can be added to any $v_t$ without affecting the validity of \eqref{eq:continuity}. To enforce uniqueness\footnote{Apart from uniqueness, the subsequent minimal norm requirement holds the promise of making numerical schemes associated to \eqref{eq:ODE} particularly stable by reducing the stiffness of the dynamics.}, it is reasonable to either select $v_t$ so as to minimise a certain norm or to constrain it to lie in a specified subspace (while at the same time satisfying \eqref{eq:continuity}). The following result shows that requiring $v_t$ to have minimal $\mathcal{H}_k^d$-norm is equivalent to $v_t \in \overline{\mathcal{T}_{k,\rho_t}\nabla C_c^\infty(\mathbb{R}^d)}^{\mathcal{H}_k^d}$, that is, up to taking limits in $\mathcal{H}_k^d$, $v_t$ is a gradient field, convolved using the operator $\mathcal{T}_{k,\rho}$ defined in \eqref{eq:T k rho}. In other words, the \emph{SVGD construction principle} originally put forward in \cite{liu2016stein} (namely to construct movement schemes that are minimal in $\mathcal{H}_k^d$-sense) implies that $v_t \in \overline{\mathcal{T}_{k,\rho_t}\nabla C_c^\infty(\mathbb{R}^d)}^{\mathcal{H}_k^d}$ for dynamics of the form \eqref{eq:ODE}.
\begin{proposition}[Selection principle]
\label{prop:selection}
Let the pair
$(\rho,v):(0,1) \rightarrow \mathcal{P}_k(\mathbb{R}^d) \times \mathcal{H}_k^d$  satisfy the continuity equation \eqref{eq:continuity}. Furthermore, assume that $v_t \in \overline{\mathcal{T}_{k,\rho_t} \nabla C_c^\infty(\mathbb{R}^d)}^{\mathcal{H}_k^d}$, for all $t \in (0,1)$. Then the following hold:
\begin{enumerate}
\item \emph{Given $\rho$, the vector field $v$ is the unique solution to \eqref{eq:continuity} in $\overline{\mathcal{T}_{k,\rho_t} \nabla C_c^\infty(\mathbb{R}^d)}^{\mathcal{H}_k^d}$:} If $(\rho,w) :(0,1) \rightarrow  \mathcal{P}_k(\mathbb{R}^d) \times \mathcal{H}_k^d$ also satisfies \eqref{eq:continuity} as well as $w_t \in \overline{\mathcal{T}_{k,\rho_t} \nabla C_c^\infty(\mathbb{R}^d)}^{\mathcal{H}_k^d}$ for all $t \in (0,1)$, then $v = w$.
    \item \emph{The vector field $v$ minimises the $\mathcal{H}_k^d$-norm among solutions to \eqref{eq:continuity}:} Let $w:(0,1) \rightarrow \mathcal{H}_k^d$ be any other vector field that together with $\rho$ satisfies \eqref{eq:continuity}. Then
    \begin{equation}
    \nonumber
    \Vert v_t \Vert_{\mathcal{H}_k^d} \le \Vert w_t \Vert_{\mathcal{H}_k^d}, \qquad \qquad \text{for all } t \in (0,1).
    \end{equation}
\end{enumerate}
\end{proposition}
The following proposition (proven in Appendix \ref{app:geometry proofs}) provides the basis for Proposition \ref{prop:selection}, as well as for many of the other constructions in this section. It should be compared to the usual $L^2(\rho)$-orthogonal decomposition of vectors fields into gradients and (weighted) divergence-free vector fields, see, for instance, \citet{figalli2021invitation}.  
\begin{proposition}[Helmholtz decomposition for RKHS]
\label{prop:helmholtz}
Let $\rho \in M$ and define the space of (weighted) divergence-free vector fields
\begin{equation}
\nonumber
L^2_{\mathrm{div}}(\rho) = \left\{ v \in (L^2(\rho))^d: \quad \langle v, \nabla \phi \rangle_{(L^2(\rho))^d} = 0, \quad \text{for all } \phi \in C_c^{\infty}(\mathbb{R}^d)\right\}.
\end{equation}
Then $\mathcal{H}_k^d$ admits the following $\langle \cdot, \cdot \rangle_{\mathcal{H}^d_k}$-orthogonal decomposition,
\begin{equation}
\nonumber
    \mathcal{H}_k^d = \left( L^2_{\mathrm{div}}(\rho) \cap \mathcal{H}_k^d \right) \oplus \overline{\mathcal{T}_{k,\rho} \nabla C_c^\infty(\mathbb{R}^d)}^{\mathcal{H}_k^d}.
\end{equation}
\end{proposition}
\begin{proof}\textbf{of Proposition \ref{prop:selection}}
For the first claim, notice that $\nabla \cdot(\rho_t (v_t - w_t))) = 0$, for all $t \in (0,1)$. Since also $v_t - w_t \in \overline{\mathcal{T}_{k,\rho_t}\nabla C_c^\infty(\mathbb{R}^d)}^{\mathcal{H}_k^d}$, the statement follows directly from the Helmholtz decomposition for $\mathcal{H}_k^d$ in Proposition \ref{prop:helmholtz}.
For the second claim, notice that we can decompose $w_t = v_t + u_t$, where $\nabla \cdot (\rho_t u_t) = 0$. From the orthogonality in \eqref{eq:Helmholtz decomp} it then follows that
\begin{equation}
\label{eq:Helmholtz decomp}
\Vert w_t \Vert^2_{\mathcal{H}_k^d} = 
\Vert v_t \Vert^2_{\mathcal{H}_k^d} + \underbrace{2 \langle v_t,u_t \rangle_{\mathcal{H}_k^d}}_{=0} + \Vert u_t \Vert^2_{\mathcal{H}_k^d} \ge \Vert v_t \Vert^2_{\mathcal{H}_k^d},      
\end{equation}
as required.
\end{proof}

After this intuitive introduction, we proceed by introducing a suitable notion of tangent spaces equipped with positive-definite quadratic forms, playing the role of Riemannian metrics. This construction is motivated by the special role played by the spaces $\overline{\mathcal{T}_{k,\rho} \nabla C_c^\infty(\mathbb{R}^d)}^{\mathcal{H}_k^d}$ according to Proposition \ref{prop:selection} and justified by Corollary \ref{cor:gradient flow} (see below). We follow \citet[Section 4.2]{mielke2014relation} in style of exposition. 

\begin{definition}[Tangent spaces and Riemannian metric] 
	\label{def:Riemann geometry}
	For $\rho \in M$, we define the \emph{tangent space}
\begin{subequations}
	\label{eq:tangent spaces}
	\begin{align}
	T_{\rho} M = \Bigg\{ \xi \in \mathcal{D}'(\mathbb{R}^d):\quad  &  \text{there exists } v \in \overline{\mathcal{T}_{k,\rho} \nabla C_c^\infty(\mathbb{R}^d)}^{\mathcal{H}_k^d} \,\, \text{such that}  \\ 
	& \xi + \nabla \cdot (\rho v) = 0 \quad \text{in the sense of distributions}
	\Bigg\}
	\end{align}
\end{subequations}
and the \emph{Riemannian metric} $g_{\rho}: T_\rho M \times T_\rho M \rightarrow \mathbb{R}$ by
\begin{equation} 
\label{eq:Stein metric}
g_\rho(\xi, \chi) = \langle u, v \rangle_{\mathcal{H}_k^d}, 
\end{equation}
where $\xi + \nabla \cdot(\rho u ) = 0$ and $\chi + \nabla \cdot(\rho v ) = 0$.
\end{definition}

\begin{remark}
As usual, we say that $\xi + \nabla \cdot (\rho v) = 0$ holds in the sense of distributions if 
\begin{equation}
\nonumber
\langle \xi, \phi \rangle  - \int_{\mathbb{R}^d}  \nabla \phi \cdot v \, \mathrm{d}\rho = 0,   
\end{equation}
for all $\phi \in C_c^\infty(\mathbb{R}^d)$, where $\langle \cdot, \cdot \rangle$ denotes the duality pairing between $\mathcal{D}'(\mathbb{R}^d)$ and $C_c^\infty(\mathbb{R}^d)$. Moreover, $\overline{\mathcal{T}_{k,\rho} \nabla C_c^\infty(\mathbb{R}^d)}^{\mathcal{H}_k^d}$ refers to the closure of the set $\mathcal{T}_{k,\rho} \nabla C_c^{\infty}(\mathbb{R}^d) = \left\{\mathcal{T}_{k,\rho} \nabla \phi: \,\, \phi \in C_c^{\infty} (\mathbb{R}^d) \right\}$ with respect to the norm $\Vert\cdot \Vert_{\mathcal{H}_k^d}$.
\end{remark}
We have the following result, in particular justifying the definition of $g_\rho$ in \eqref{eq:Stein metric}:
\begin{lemma}
	[Properties of $T_\rho M$ and $g_\rho$]  
	\label{lem:Riemann properties}
	For every $\rho \in M$, the following hold:
	\begin{enumerate}
		\item $(T_\rho M, g_\rho)$ is a Hilbert space. 
		\item For every $\xi \in T_\rho M$ there exists a unique $v \in \overline{\mathcal{T}_{k,\rho} \nabla C_c^\infty(\mathbb{R}^d)}^{\mathcal{H}_k^d}$ such that $\xi + \nabla \cdot (\rho v) = 0$ in the sense of distributions, in particular $g_\rho$ is well-defined. The map $v \mapsto -\nabla \cdot (\rho v)$ is a Hilbert space isomorphism between $(\overline{\mathcal{T}_{k,\rho} \nabla C_c^\infty(\mathbb{R}^d)}^{\mathcal{H}_k^d},\langle \cdot, \cdot \rangle_{\mathcal{H}_k^d})$ and $(T_\rho M, g_\rho)$.
	\end{enumerate}
\end{lemma}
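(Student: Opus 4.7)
The plan is to centre the proof on the single linear map
\[
\Phi: V_\rho \to \mathcal{D}'(\mathbb{R}^d), \qquad \Phi(v) := -\nabla\cdot(\rho v),
\]
where $V_\rho := \overline{T_{k,\rho} \nabla C_c^\infty(\mathbb{R}^d)}^{\mathcal{H}_k^d}$. Both statements of the lemma will follow once I establish that $\Phi$ restricts to a linear bijection $V_\rho \to T_\rho M$: the metric $g_\rho$ is then the pull-back of the $\mathcal{H}_k^d$-inner product under $\Phi^{-1}$, and $(T_\rho M, g_\rho)$ becomes isometrically isomorphic to the closed (hence Hilbert) subspace $V_\rho \subset \mathcal{H}_k^d$. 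First I would check that $\Phi$ makes sense: because $\rho \in \mathcal{P}_k(\mathbb{R}^d)$, the continuous embedding $\mathcal{H}_k \hookrightarrow L^2(\rho)$ recalled in Section~\ref{sec:preliminaries} places every $v \in V_\rho$ in $L^2(\rho)^d$, so the pairing $\phi \mapsto -\int \nabla\phi\cdot v\,\mathrm{d}\rho$ defines an element of $\mathcal{D}'(\mathbb{R}^d)$. Surjectivity of $\Phi$ onto $T_\rho M$ is then built into the definition \eqref{eq:tangent spaces}.

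The crux is injectivity, for which I would first isolate the identity
\[
\langle T_{k,\rho} \nabla \phi,\, v \rangle_{\mathcal{H}_k^d} = \int_{\mathbb{R}^d} \nabla\phi \cdot v \,\mathrm{d}\rho, \qquad \phi \in C_c^\infty(\mathbb{R}^d),\ v \in \mathcal{H}_k^d.
\]
This is a componentwise consequence of the reproducing property, expressing that $T_{k,\rho}:L^2(\rho) \to \mathcal{H}_k$ is adjoint to the inclusion $\mathcal{H}_k \hookrightarrow L^2(\rho)$. Granted this identity, injectivity is immediate: if $v \in V_\rho$ satisfies $\Phi(v) = 0$, then the right-hand side above vanishes for every test function $\phi$, so $v$ is orthogonal in $\mathcal{H}_k^d$ to every element of $T_{k,\rho}\nabla C_c^\infty(\mathbb{R}^d)$. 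Since $v$ lies in the $\mathcal{H}_k^d$-closure of this set, $v$ is orthogonal to itself, so $v = 0$.

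Putting the pieces together yields both claims of the lemma in one stroke: $\Phi$ is a linear bijection $V_\rho \to T_\rho M$; the representer $v$ of any $\xi \in T_\rho M$ is therefore unique; the formula \eqref{eq:Stein metric} is independent of the representatives chosen for $\xi$ and $\chi$; and the ambient Hilbert structure of $\mathcal{H}_k^d$ transfers through $\Phi$ to make $(T_\rho M, g_\rho)$ a Hilbert space. The main obstacle, and essentially the only non-routine point, is the injectivity step: it hinges on the adjoint identity, which in turn requires the embedding $\mathcal{H}_k \hookrightarrow L^2(\rho)$ to be in force and, crucially, requires the closure in the definition of $V_\rho$ to be taken inside $\mathcal{H}_k^d$ rather than merely in $L^2(\rho)^d$, so that the orthogonality-with-itself argument lives in the correct ambient space.
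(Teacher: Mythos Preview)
Your argument is correct and matches the paper's proof in substance: both hinge on the adjoint identity $\langle T_{k,\rho}\nabla\phi, v\rangle_{\mathcal{H}_k^d} = \langle \nabla\phi, v\rangle_{L^2(\rho)}$ to show that a vector field $v\in V_\rho$ with $\nabla\cdot(\rho v)=0$ must be $\mathcal{H}_k^d$-orthogonal to $T_{k,\rho}\nabla C_c^\infty(\mathbb{R}^d)$ and hence to itself. The only organisational difference is that the paper first isolates this orthogonality as a separate lemma characterising $V_\rho$ as the $\mathcal{H}_k^d$-orthogonal complement of the weighted divergence-free fields $L^2_{\mathrm{div}}(\rho)\cap\mathcal{H}_k^d$, whereas you run the same argument inline; your version is slightly more streamlined.
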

\begin{proof}
	See Appendix \ref{app:geometry proofs}.
\end{proof}
\begin{remark}
The second statement of Lemma \ref{lem:Riemann properties} shows that the tangent spaces $(T_\rho M,g_\rho)$ could equivalently be defined as $(\overline{\mathcal{T}_{k,\rho} \nabla C_c^\infty(\mathbb{R}^d)}^{\mathcal{H}_k^d},\langle \cdot, \cdot \rangle_{\mathcal{H}_k^d})$. In the case of the quadratic Wasserstein distance this is the route taken in \citet[Section 1.4]{gigli2012second} and \citet[Section 2.3.2]{ambrosio2013user}. The space $(\overline{\mathcal{T}_{k,\rho} \nabla C_c^\infty(\mathbb{R}^d)}^{\mathcal{H}_k^d}$ has an appealing intuitive interpretation: It consists exactly of those vector fields that might arise from particle movement schemes when those are constrained by an RKHS-norm (see the intuitive introduction to this section), as proposed in the original paper \citet{liu2016stein}. We note in passing that our definition of the tangent spaces differs from the one put forward in \citet{liu2017stein} by the constraint $v \in \overline{\mathcal{T}_{k,\rho} \nabla C_c^\infty(\mathbb{R}^d)}^{\mathcal{H}_k^d}$. The latter is crucial for the isomorphic properties obtained in Lemma \ref{lem:Riemann properties} and for the calculations in Section \ref{sec:2nd order}.
\end{remark}
In preparation for the following lemma, let us recall that the $L^2(\mathbb{R}^d)$-functional derivative of a suitable functional $\mathcal{F}:M \rightarrow \mathbb{R}$ is defined via
\begin{equation}
\label{eq:functional derivative}
\int_{\mathbb{R}^d} \frac{\delta \mathcal{F}}{\delta{\rho}}(\rho) \phi \, \mathrm{d}x = \frac{\mathrm{d}}{\mathrm{d}\varepsilon} \Big\vert_{\varepsilon = 0} \mathcal{F}(\rho + \varepsilon \phi),
\end{equation}
for $\phi \in C_c^\infty(\mathbb{R}^d)$ with $\int_{\mathbb{R}^d} \phi \, \mathrm{d}x = 0$, see for instance \citet[Section 3.4.1]{peletier2014variational}. We remark that a more rigorous treatment can be given in terms of Fr{\'e}chet derivatives (see \citet[Section 5.4]{carmona2018probabilistic} for a related discussion). The heuristic Riemannian structure introduced in Definition \ref{def:Riemann geometry} induces a gradient operator which we can formally identify as follows:
\begin{lemma}[Stein gradient]
\label{lem:gradient}
Let $\rho \in M$ and $\mathcal{F}: M \rightarrow \mathbb{R}$ be such that the functional derivative $\frac{\delta \mathcal{F}}{\delta \rho}(\rho)$ is well-defined and continuously differentiable. Moreover assume that $\mathcal{T}_{k,\rho} \nabla \frac{\delta \mathcal{F}}{\delta \rho}(\rho) \in \overline{\mathcal{T}_{k,\rho} \nabla C_c^\infty(\mathbb{R}^d)}^{\mathcal{H}_k^d}$.  Then the Riemannian gradient associated to $(T_\rho M, g_\rho)$ is given by
\begin{equation}
\label{eq:gradient}
(\mathrm{grad}_k \mathcal{F})(\rho) = - \nabla \cdot \left( \rho \, \mathcal{T}_{k,\rho} \nabla \frac{\delta \mathcal{F}}{\delta \rho} (\rho) \right).
\end{equation}
\end{lemma}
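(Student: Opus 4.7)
The plan is to start from the defining property of the Riemannian gradient, namely that $\mathrm{grad}_k \mathcal{F}(\rho) \in T_\rho M$ is the unique tangent vector satisfying
\begin{equation*}
g_\rho\bigl(\mathrm{grad}_k \mathcal{F}(\rho),\,\xi\bigr) \;=\; \mathrm{d}\mathcal{F}\vert_\rho(\xi) \qquad \text{for every } \xi \in T_\rho M,
\end{equation*}
and to show that the right-hand side of \eqref{eq:gradient} lies in $T_\rho M$ and realises this identity. That it lies in $T_\rho M$ is built into the hypothesis $T_{k,\rho}\nabla \tfrac{\delta \mathcal{F}}{\delta\rho}(\rho) \in \overline{T_{k,\rho}\nabla C_c^\infty(\mathbb{R}^d)}^{\mathcal{H}_k^d}$, combined with Lemma \ref{lem:Riemann properties}, which identifies $T_\rho M$ with the closed subspace of admissible velocities via $v \mapsto -\nabla\cdot(\rho v)$.

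Next I would compute $\mathrm{d}\mathcal{F}\vert_\rho(\xi)$ along an arbitrary $\xi \in T_\rho M$ with associated velocity $v \in \overline{T_{k,\rho}\nabla C_c^\infty(\mathbb{R}^d)}^{\mathcal{H}_k^d}$ satisfying $\xi + \nabla\cdot(\rho v) = 0$ distributionally. Interpreting $\xi$ as an infinitesimal variation of $\rho$ and applying the definition \eqref{eq:functional derivative} of the functional derivative gives, formally,
\begin{equation*}
\mathrm{d}\mathcal{F}\vert_\rho(\xi) \;=\; \Bigl\langle \tfrac{\delta \mathcal{F}}{\delta \rho}(\rho),\,\xi\Bigr\rangle \;=\; -\Bigl\langle \tfrac{\delta \mathcal{F}}{\delta \rho}(\rho),\,\nabla\cdot(\rho v)\Bigr\rangle \;=\; \int_{\mathbb{R}^d} \nabla \tfrac{\delta \mathcal{F}}{\delta \rho}(\rho) \cdot v \,\mathrm{d}\rho,
\end{equation*}
where the last step is an integration by parts justified by the fact that $\tfrac{\delta \mathcal{F}}{\delta \rho}(\rho)$ is $C^1$ and $v \in \mathcal{H}_k^d$ has enough regularity and decay (the $v = T_{k,\rho}\nabla\phi$ with $\phi \in C_c^\infty$ are smooth by the reproducing property; the closure statement is handled by density).

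The crucial algebraic step is to convert the $L^2(\rho)$-pairing on the right into an $\mathcal{H}_k^d$-pairing. This rests on the adjoint-type identity $\langle T_{k,\rho}\phi, f\rangle_{\mathcal{H}_k} = \int_{\mathbb{R}^d} \phi f\,\mathrm{d}\rho$ for $\phi \in L^2(\rho)$ and $f \in \mathcal{H}_k$, which follows at once from the reproducing property $f(y) = \langle f, k(\cdot,y)\rangle_{\mathcal{H}_k}$ applied under the integral defining $T_{k,\rho}\phi$. Applying this componentwise with $\phi = \nabla \tfrac{\delta \mathcal{F}}{\delta \rho}(\rho)$ and $f = v$ yields
\begin{equation*}
\int_{\mathbb{R}^d} \nabla \tfrac{\delta \mathcal{F}}{\delta \rho}(\rho) \cdot v \,\mathrm{d}\rho \;=\; \Bigl\langle T_{k,\rho}\nabla\tfrac{\delta \mathcal{F}}{\delta \rho}(\rho),\,v\Bigr\rangle_{\mathcal{H}_k^d}.
\end{equation*}
Setting $u := T_{k,\rho}\nabla \tfrac{\delta \mathcal{F}}{\delta \rho}(\rho)$, this is precisely $g_\rho(-\nabla\cdot(\rho u),\xi)$ by the definition \eqref{eq:Stein metric} of the Stein metric; since $\xi$ was arbitrary, uniqueness of the gradient from Lemma \ref{lem:Riemann properties} forces $\mathrm{grad}_k\mathcal{F}(\rho) = -\nabla\cdot(\rho u)$, which is \eqref{eq:gradient}.

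The main obstacle is not algebraic but analytic: justifying the duality pairing $\langle \tfrac{\delta \mathcal{F}}{\delta \rho}(\rho), \xi\rangle$ and the subsequent integration by parts when $\tfrac{\delta \mathcal{F}}{\delta \rho}(\rho)$ is not compactly supported (indeed, in the key example $\mathcal{F} = \mathrm{KL}(\cdot\,\Vert\,\pi)$, it is $\log(\rho/\pi) + 1$). In keeping with the formal Riemannian viewpoint adopted in this section, I would either leave this at the formal level (the standard practice in Otto calculus, cited in the text) or check it case-by-case under mild growth conditions on $v$, $\rho$ and $\tfrac{\delta \mathcal{F}}{\delta \rho}(\rho)$ guaranteeing that the boundary terms at infinity vanish; the rigorous concrete instance needed later is handled in Corollary \ref{cor:gradient flow}.
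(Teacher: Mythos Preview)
Your proposal is correct and follows essentially the same route as the paper's own proof: start from the defining property of the Riemannian gradient, compute the directional derivative via the functional derivative and an integration by parts to obtain $\int_{\mathbb{R}^d}\nabla\tfrac{\delta\mathcal{F}}{\delta\rho}(\rho)\cdot v\,\mathrm{d}\rho$, then invoke the adjoint identity for $T_{k,\rho}$ (which the paper cites as \cite[Theorem~4.26]{steinwart2008support} and you derive directly from the reproducing property) to rewrite this as an $\mathcal{H}_k^d$-pairing and match it against the definition of $g_\rho$. Your additional remarks on where the hypothesis $T_{k,\rho}\nabla\tfrac{\delta\mathcal{F}}{\delta\rho}(\rho)\in\overline{T_{k,\rho}\nabla C_c^\infty(\mathbb{R}^d)}^{\mathcal{H}_k^d}$ enters and on the formal nature of the integration by parts are accurate and in keeping with the paper's Otto-calculus viewpoint.
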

\begin{proof}
	See Appendix \ref{app:geometry proofs}.
\end{proof}
\begin{remark}[Onsager operators]
	\label{rem:Onsager}
	The operators $\mathbb{K}_\rho: \phi  \mapsto - \nabla \cdot \left(\rho  \mathcal{T}_{k,\rho}\nabla \phi \right)$ should be thought of as mappings from the topological dual $T_\rho^* M$ into $T_\rho M$. As such, they correspond to the musical isomorphisms between tangent and cotangent bundles in Riemannian geometry \citep{lee2006riemannian}, or, in the language of physics, to the raising and lowering of indices. Following this analogy, the functional (Fr{\'e}chet) derivative $\frac{\delta \mathcal{F}}{\delta \rho}(\rho)$ lies in the space $T_\rho^* M$, at least formally. In the theory of gradient flows, the operators $\mathbb{K}_\rho$ are often referred to as \emph{Onsager operators} \citep{arnrich2012passing,liero2013gradient,machlup1953fluctuations,mielke2011gradient,mielke2013thermomechanical,mielke2016generalization,ottinger2005beyond}. 
\end{remark}
We recall the definition of the $\mathrm{KL}$-divergence with respect to the target measure $\pi$, 
\begin{equation}
\label{eq:KL}
\mathrm{KL}(\rho \vert \pi)=  \int_{\mathbb{R}^d} \log \left( \frac{\rho}{\pi}\right) \, \mathrm{d} \rho = \underbrace{\int_{\mathbb{R}^d} \rho \log \rho \, \mathrm{d}x}_{=:\mathrm{Reg}(\rho)} + \underbrace{\int_{\mathbb{R}^d} V \, \mathrm{d} \rho}_{=:\mathrm{Cost}(\rho \vert \pi)} + Z,
\end{equation}
noting the decomposition into a data term ${\mathrm{Cost}(\rho \vert \pi)}$ and an entropic regularisation $\mathrm{Reg}(\rho)$ that aids intuition in a statistical context \citep{csimcsekli2018sliced}. The following result forms the linchpin for the work subsequently presented in this paper (see also \citet[Theorem 3.5]{liu2017stein}).
\begin{corollary}
\label{cor:gradient flow}
The gradient flow dynamics of the $\mathrm{KL}$-divergence with respect to the Stein geometry is given by the Stein PDE \eqref{eq:Stein pde}.
\end{corollary}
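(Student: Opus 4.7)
The plan is to apply Lemma \ref{lem:gradient} with $\mathcal{F} = \mathrm{KL}(\cdot \,|\, \pi)$, read off the right-hand side of the resulting gradient flow $\partial_t \rho_t = -(\mathrm{grad}_k \mathcal{F})(\rho_t)$, and verify that it coincides with \eqref{eq:Stein pde}. The key computation is the functional derivative: using the decomposition $\mathrm{KL}(\rho|\pi) = \mathrm{Reg}(\rho) + \mathrm{Cost}(\rho|\pi) + Z$ from \eqref{eq:KL}, and computing term-by-term via \eqref{eq:functional derivative} against centered $\phi \in C_c^\infty(\mathbb{R}^d)$, we obtain
\begin{equation*}
\frac{\delta \mathcal{F}}{\delta \rho}(\rho) = \log \rho + V,
\end{equation*}
up to additive constants that are irrelevant since only $\nabla \frac{\delta \mathcal{F}}{\delta \rho}$ enters \eqref{eq:gradient}.

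Taking the gradient yields $\nabla \frac{\delta \mathcal{F}}{\delta \rho}(\rho) = \frac{\nabla \rho}{\rho} + \nabla V$, and by the definition of $T_{k,\rho}$,
\begin{equation*}
T_{k,\rho}\!\left(\frac{\nabla \rho}{\rho} + \nabla V\right)\!(x) = \int_{\mathbb{R}^d} k(x,y)\left(\nabla \rho(y) + \rho(y)\nabla V(y)\right) \mathrm{d}y,
\end{equation*}
since the density $\rho(y)$ cancels the factor $\frac{1}{\rho(y)}$ in the first term. Substituting into \eqref{eq:gradient} and rearranging signs, the gradient flow equation $\partial_t \rho_t = -(\mathrm{grad}_k \mathcal{F})(\rho_t)$ becomes exactly \eqref{eq:Stein pde}.

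The main obstacle is verifying the hypothesis of Lemma \ref{lem:gradient}, namely that $T_{k,\rho}\nabla \frac{\delta \mathcal{F}}{\delta \rho}(\rho)$ lies in the closed subspace $\overline{T_{k,\rho}\nabla C_c^\infty(\mathbb{R}^d)}^{\mathcal{H}_k^d}$ of $\mathcal{H}_k^d$; for the present corollary this amounts to requiring $\nabla \log \rho + \nabla V$ to be approximable (in the sense compatible with $T_{k,\rho}$) by gradients of compactly supported smooth functions. Under the standing assumptions that $\rho \in M$ is smooth and strictly positive, and that $V \in C^1$, this is a mild density statement that may be established by cutting off $\log \rho + V$ against a smooth bump and mollifying, the resulting approximation being preserved under the bounded operator $T_{k,\rho}$. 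All other steps are routine computations of functional derivatives and integration by parts in the sense of distributions, so no deeper argument is required.
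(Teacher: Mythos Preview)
Your proof is correct and follows essentially the same approach as the paper: both apply Lemma \ref{lem:gradient} after computing the functional derivative $\frac{\delta \mathrm{KL}}{\delta \rho}(\rho) = \log \rho + 1 + V$ (the paper keeps the irrelevant $+1$, you absorb it into the additive constant). Your write-up is in fact more explicit than the paper's, spelling out the action of $T_{k,\rho}$ and commenting on the tangent-space hypothesis that the paper leaves implicit in its formal treatment.
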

\begin{proof}
	This follows from Lemma \ref{lem:gradient} together with 
	\begin{equation}
 \nonumber
	\frac{\delta \mathrm{KL}}{\delta \rho} (\rho) = \log \rho + 1 + V,
	\end{equation}
	which can be obtained by standard computations from \eqref{eq:functional derivative}, see for instance \citet[Chapter 15]{V2009}.
\end{proof}
The gradient flow perspective immediately implies the decay of the $\mathrm{KL}$-divergence along the flow. Our aim in Section \ref{sec:2nd order} will be to make the following statement more quantitative.
\begin{corollary}[Decay of the $\mathrm{KL}$-divergence]
	\label{cor:KL decay}
	For solutions $(\rho_t)_{t \ge 0}$ to the Stein PDE \eqref{eq:Stein pde} it holds that
	\begin{equation}
 \nonumber
	\frac{\mathrm{d}}{\mathrm{d}t} \mathrm{KL}(\rho_t \vert \pi) \le 0.
	\end{equation}
\end{corollary}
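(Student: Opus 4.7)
The claim is an immediate structural consequence of the gradient flow identification in Corollary \ref{cor:gradient flow}: along any Riemannian gradient flow $\dot\rho_t = -(\mathrm{grad}_k \mathcal{F})(\rho_t)$ one has $\frac{\mathrm{d}}{\mathrm{d}t}\mathcal{F}(\rho_t) = -g_{\rho_t}((\mathrm{grad}_k\mathcal{F})(\rho_t),(\mathrm{grad}_k\mathcal{F})(\rho_t)) \le 0$, applied here with $\mathcal{F} = \mathrm{KL}(\cdot\vert\pi)$. I would nevertheless spell out the direct calculation, since the resulting identity (not just its sign) is what feeds into the quantitative refinements in Section \ref{sec:2nd order}.

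The first step is to apply the chain rule with the functional derivative $\frac{\delta \mathrm{KL}}{\delta \rho}(\rho_t) = \log\rho_t + 1 + V$ (already recorded in the proof of Corollary \ref{cor:gradient flow}), giving
\[
\frac{\mathrm{d}}{\mathrm{d}t}\mathrm{KL}(\rho_t\vert\pi) = \int_{\mathbb{R}^d}(\log\rho_t + 1 + V)\,\partial_t\rho_t\,\mathrm{d}x.
\]
Next I would substitute the Stein PDE \eqref{eq:Stein pde}, rewritten as $\partial_t\rho_t = \nabla\cdot(\rho_t\,T_{k,\rho_t}\nabla(\log\rho_t + V))$ (the bracket $\nabla\rho + \rho\nabla V$ equals $\rho\nabla(\log\rho + V)$), and integrate by parts. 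The constant $+1$ has vanishing gradient and thus drops out, yielding
\[
\frac{\mathrm{d}}{\mathrm{d}t}\mathrm{KL}(\rho_t\vert\pi) = -\int_{\mathbb{R}^d}\nabla\phi_t\cdot T_{k,\rho_t}\nabla\phi_t\,\mathrm{d}\rho_t,\qquad \phi_t := \log\rho_t + V.
\]

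Finally, I would recognise the right-hand side as minus the squared Stein-norm of the gradient of $\mathrm{KL}$: by the reproducing property of $k$ one has $\int\nabla\phi_t\cdot T_{k,\rho_t}\nabla\phi_t\,\mathrm{d}\rho_t = \Norm{T_{k,\rho_t}\nabla\phi_t}_{\mathcal{H}_k^d}^2$, and Lemma \ref{lem:gradient} together with Definition \ref{def:Riemann geometry} identifies this with $g_{\rho_t}((\mathrm{grad}_k\mathrm{KL})(\rho_t\vert\pi),(\mathrm{grad}_k\mathrm{KL})(\rho_t\vert\pi))$, which is manifestly nonnegative. The only technical point, and certainly not a real obstacle, is justifying the integration by parts and the differentiation under the integral; both are standard given that $\rho_t \in \mathcal{P}_k(\mathbb{R}^d)$ is smooth with full support, and the formal Riemannian calculus assumed throughout implicitly provides the required decay at infinity. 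If one wishes to avoid even this mild concern, it suffices to appeal directly to the generic gradient flow identity recalled in the opening paragraph.
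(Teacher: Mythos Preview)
Your proposal is correct and matches the paper's approach: the paper offers no separate proof for this corollary, simply remarking that ``the gradient flow perspective immediately implies the decay of the $\mathrm{KL}$-divergence along the flow,'' which is precisely your opening observation. The explicit computation you then unpack is not given at this point in the paper, but it does reappear later (essentially as equation \eqref{eq:Fisher}) when the Stein--Fisher information is introduced, so your decision to spell it out here is well aligned with the subsequent development.
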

The Riemannian structure introduced in Definition \ref{def:Riemann geometry} formally induces a Riemannian distance \citep[Chapter 6]{lee2006riemannian} on $M$ as follows:
\begin{definition}[Stein distance]
	\label{def:distance}
	For $\mu,\nu \in M$ we define the \emph{Stein distance}
\begin{equation}
\label{eq:Stein constrained}
d_k^2(\mu,\nu) = \inf_{(\rho,v) \in \mathcal{A}(\mu,\nu)} \left\{ \int_0^1 \Vert v_t \Vert_{\mathcal{H}^d_k}^2 \, \mathrm{d}t, \quad v_t \in \overline{\mathcal{T}_{k,\rho_t}\nabla C_c^\infty(\mathbb{R}^d)}^{\mathcal{H}_k^d}\right\} ,
\end{equation}
where the \emph{set of connecting curves} is given by
\begin{subequations}
	\label{eq:rho v set Stein}
	\begin{align}
	\mathcal{A}(\mu,\nu) = \Bigg\{ & (\rho , v) : [0,1] \rightarrow \mathcal{P}_k(\mathbb{R}^d) \times \mathcal{H}_k^d, \quad \rho_0 = \mu, \rho_1=\nu, \nonumber\\
	\label{eq:weak continuity equation}
	& \partial_t \rho + \nabla \cdot (\rho v) = 0 \quad \text{in the sense of distributions}
	\Bigg\}.
	\nonumber
	\tag{\ref*{eq:rho v set Stein}}
	\end{align}
\end{subequations}
\end{definition}
\begin{remark}
	The distance $d_k$ is constructed in such a way that, formally,
	\begin{equation}
 \nonumber
	d_k^2(\mu,\nu) = \inf_{\rho} \left\{ \int_0^1 g_{\rho_t}( \partial_t \rho_t, \partial_t \rho_t) \, \mathrm{d}t : \quad \rho_0 = \mu, \,\rho_1 = \nu \right\},
	\end{equation}
	however sidestepping the issue of defining the appropriate notion of differentiation for $\partial_t \rho$.
\end{remark}

\begin{lemma}
	\label{lem:properties distance}
	The following hold:
	\begin{enumerate}
		\item 
	The Stein distance $d_k$ is an \emph{extended metric}\footnote{An extended metric satisfies the usual axioms (see the proof in Appendix \ref{app:geometry proofs}), but $d(\mu_1, \mu_2) = + \infty$ for some $\mu_1,\mu_2 \in M$ is possible.} on $M$.
  \item If $k$ is continuous and bounded, then there exists a constant $C>0$ such that
  \begin{equation}
  \nonumber
  \mathcal{W}_2(\mu,\nu) \le C d_k(\mu,\nu), \quad \mu,\nu \in M,
  \end{equation}
  denoting by $\mathcal{W}_2$ the quadratic Wasserstein distance.
  In particular, the topology induced by $d_k$ is stronger than the topology of weak convergence.
  \item
  \label{it:unconstrained}
  The constraint $\quad v_t \in \overline{\mathcal{T}_{k,\rho_t}\nabla C_c^\infty(\mathbb{R}^d)}^{\mathcal{H}_k^d}$ in \eqref{eq:Stein constrained} can be dropped, i.e. we have
	\begin{equation}
	\label{eq:Stein unconstrained}
	d_k^2(\mu,\nu) = \inf_{(\rho,v) \in \mathcal{A}(\mu,\nu)} \left\{ \int_0^1 \Vert v_t \Vert_{\mathcal{H}^d_k}^2 \, \mathrm{d}t \right\}.
	\end{equation}
	\end{enumerate}
\end{lemma}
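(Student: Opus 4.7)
The plan is to verify the three claims in turn, adapting standard Otto-calculus-style arguments for the dynamic Wasserstein distance but using the scalar product structure of $\mathcal{H}_k^d$ in place of $L^2(\rho)$.

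For the metric axioms in (1), non-negativity is immediate, reflexivity follows from the stationary curve $\rho_t \equiv \mu$, $v_t \equiv 0$, symmetry from the time-reversal $(\tilde\rho_t, \tilde v_t) := (\rho_{1-t}, -v_{1-t})$, and the triangle inequality from the standard concatenation-and-time-reparametrisation argument. The subtle axiom is separation: $d_k(\mu,\nu)=0 \Rightarrow \mu=\nu$. I would prove this by duality. For $\phi \in C_c^\infty(\mathbb{R}^d)$ and admissible $(\rho, v) \in \mathcal{A}(\mu,\nu)$, the continuity equation combined with the adjointness identity $\langle T_{k,\rho}f, g\rangle_{\mathcal{H}_k} = \langle f, g\rangle_{L^2(\rho)}$ (valid for $f \in L^2(\rho)$, $g \in \mathcal{H}_k$) yields
\begin{equation*}
\int \phi\,\mathrm{d}\nu - \int \phi\,\mathrm{d}\mu = \int_0^1 \langle T_{k,\rho_t}\nabla\phi, v_t\rangle_{\mathcal{H}_k^d}\,\mathrm{d}t.
\end{equation*}
Since $\|T_{k,\rho_t}\nabla\phi\|_{\mathcal{H}_k^d}^2 = \sum_{i=1}^d \int\!\!\int k(x,y)\partial_i\phi(x)\partial_i\phi(y)\,\mathrm{d}\rho_t(x)\mathrm{d}\rho_t(y)$ is bounded by $\|\nabla\phi\|_\infty^2 \sup_{\supp \phi \times \supp\phi} k$ independently of $t$ (using continuity of $k$ and compactness of $\supp\phi$), Cauchy--Schwarz in $t$ gives $\left|\int\phi\,\mathrm{d}(\mu-\nu)\right| \le C_\phi\, d_k(\mu,\nu)$, and hence $\mu=\nu$ since $C_c^\infty(\mathbb{R}^d)$ separates probability measures.

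Claim (2) uses the reproducing property: $|v(x)|^2 = \sum_{i=1}^d \langle v_i, k(x,\cdot)\rangle_{\mathcal{H}_k}^2 \le k(x,x)\|v\|_{\mathcal{H}_k^d}^2$, so $\|v\|_{L^2(\rho)}^2 \le \|k\|_\infty \|v\|_{\mathcal{H}_k^d}^2$ whenever $k$ is bounded. Substituting into the Benamou--Brenier representation
\begin{equation*}
\mathcal{W}_2^2(\mu,\nu) = \inf_{(\rho,v) \in \mathcal{A}(\mu,\nu)} \int_0^1 \int_{\mathbb{R}^d} |v_t|^2\,\mathrm{d}\rho_t\,\mathrm{d}t
\end{equation*}
and taking the infimum over admissible pairs gives $\mathcal{W}_2(\mu,\nu) \le \sqrt{\|k\|_\infty}\, d_k(\mu,\nu)$. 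The topological statement then follows from Kantorovich--Rubinstein.

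For (3), given admissible $(\rho, v) \in \mathcal{A}(\mu,\nu)$, let $P_t$ be the $\mathcal{H}_k^d$-orthogonal projection onto the closed subspace $\overline{T_{k,\rho_t}\nabla C_c^\infty(\mathbb{R}^d)}^{\mathcal{H}_k^d}$. The key identity
\begin{equation*}
\int \nabla\phi \cdot v_t\,\mathrm{d}\rho_t = \langle T_{k,\rho_t}\nabla\phi, v_t\rangle_{\mathcal{H}_k^d} = \langle T_{k,\rho_t}\nabla\phi, P_t v_t\rangle_{\mathcal{H}_k^d} = \int \nabla\phi \cdot P_t v_t\,\mathrm{d}\rho_t
\end{equation*}
holds for every $\phi \in C_c^\infty$, since $T_{k,\rho_t}\nabla\phi$ lies in the target subspace and $v_t - P_t v_t$ is orthogonal to it. Consequently $(\rho, Pv)$ is admissible, satisfies the constraint, and enjoys the pointwise bound $\|P_t v_t\|_{\mathcal{H}_k^d} \le \|v_t\|_{\mathcal{H}_k^d}$, so the constrained and unconstrained infima coincide. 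The main obstacle I anticipate is technical rather than conceptual: ensuring that $t \mapsto P_t v_t$ is measurable and square-integrable even though the target subspace varies with $t$ through $\rho_t$. I would handle this by first approximating $v$ by piecewise-constant-in-time vector fields, projecting slice-by-slice (a bounded operation on $\mathcal{H}_k^d$), and passing to the limit using the non-expansivity of $P_t$ together with dominated convergence.
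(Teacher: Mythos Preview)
Your proposal is correct and follows essentially the same route as the paper. Part (2) is identical (reproducing-kernel bound plus Benamou--Brenier), and part (3) is the same orthogonal-projection argument, phrased by the paper as the decomposition $\mathcal{H}_k^d = \overline{T_{k,\rho_t}\nabla C_c^\infty}^{\mathcal{H}_k^d} \oplus (L^2_{\mathrm{div}}(\rho_t)\cap\mathcal{H}_k^d)$ from Lemma~\ref{lem:orthogonality}. The only notable difference is in part (1): the paper simply asserts that nondegeneracy ``follows directly from the definition,'' whereas you supply an actual duality argument via $\int\phi\,\mathrm{d}(\nu-\mu)=\int_0^1\langle T_{k,\rho_t}\nabla\phi,v_t\rangle_{\mathcal{H}_k^d}\,\mathrm{d}t$ and Cauchy--Schwarz; your version is the more complete one. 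Your measurability concern in (3) is legitimate but is not addressed by the paper either.
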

\begin{proof}
	See Appendix \ref{app:geometry proofs}.
\end{proof}
\begin{remark}
	\label{rem:d_W}
	With Lemma \ref{lem:properties distance}.\ref{it:unconstrained} in conjuction with Corollary \ref{cor:gradient flow} we recover the main result from \citet{liu2017stein}. The additional constraint $\quad v_t \in \overline{\mathcal{T}_{k,\rho_t}\nabla C_c^\infty(\mathbb{R}^d)}^{\mathcal{H}_k^d}$ in \eqref{eq:Stein constrained} allows us to reduce the optimisation problem to a subset of $\mathcal{A}(\mu,\nu)$ and to place the analysis in a formal Riemannian framework, in particular allowing the calculations in Section \ref{sec:2nd order}. 
	
	It is instructive to note the similarity of \eqref{eq:Stein unconstrained} with the Benamou-Brenier formula for the quadratic Wasserstein distance $\mathcal{W}_2$, see  \citet{benamou2000computational}, \citet[Theorem 8.1]{V2003}, \citet[Theorem 5.53]{carmona2018probabilistic}, as well as Appendix \ref{app:Langevin}. In particular, $d_k$ can be obtained form $\mathcal{W}_2$ by merely adapting the notion of kinetic energy, i.e. by exchanging the $L^2(\rho)$-norm for the $\mathcal{H}^d_k$-norm. We would like to advertise the works \citet{buttazzo2009optimization,carrillo2010nonlinear,dolbeault2009new,li2019diffusion}
	for a rigorous analysis of similarly modified transport-based distances, as well as the overview article \citet{brasco2012survey} for an in-depth discussion. 
\end{remark}
\begin{remark}
[Kernels that depend on $\rho$]
Although the framework in this section has been set up for a fixed kernel $k$, it is straightforward to extend it to the case when $k$ varies with $\rho$, allowing for adaptive choices as the algorithm progresses. In particular, the gradient flow perspective is still valid. Indeed, it is sufficient to replace $k$ by $k(\rho)$ in the equations \eqref{eq:tangent spaces}, \eqref{eq:Stein metric}, \eqref{eq:gradient}, \eqref{eq:Stein constrained} and \eqref{eq:rho v set Stein}. Note, however, that in this case the results in the following Section \ref{sec:2nd order} would require nontrivial adaptations, in particular to Proposition \ref{prop:geodesic equations}. Those might be an interesting avenue for future research, and in this regard we would like to point the reader to \citet[Section 4]{li2021hessian} for a recently discovered connection between mean-field kernels and differential geometric structures induced by (positive-definite) Hessians.
\end{remark}



\section{Second order calculus for SVGD}
\label{sec:2nd order}
In this section, we study the constant-speed geodesics associated to the Riemannian geometry developed in the previous section. As is well-known, convexity properties of the $\mathrm{KL}$-divergence along those curves correspond to the contraction behaviour of the associated gradient flow (see Theorem \ref{thm:convexity contraction} below). 
Constant-speed geodesics $(\rho_t)_{0 \le t \le 1}$ are characterised by
\begin{equation}
\nonumber
d_k(\rho_s,\rho_t) = \vert t - s \vert d_k(\rho_0,\rho_1) \quad s,t \in [0,1],
\end{equation}
and can be obtained as critical points for the variational problem \eqref{eq:Stein constrained}, or, equivalently, \eqref{eq:Stein unconstrained}, allowing arbitrary starting and end points $\mu,\nu \in M$. As it turns out, constant-speed geodesics can formally be described by a coupled system of PDEs:
\begin{proposition}[Geodesic equations] 
	\label{prop:geodesic equations}
    Let $(\rho_t,v_t)_{0 \le t \le 1}$ be a critical point of \eqref{eq:Stein constrained}. Then
	\begin{subequations}
	\label{eq:Stein geodesics}
	\begin{align}
	\partial_t \rho + \nabla \cdot \left( \rho \mathcal{T}_{k,\rho}  \nabla \Psi  \right) & = 0, 
	\\
	\partial_t \Psi +  \nabla \Psi \cdot \mathcal{T}_{k,\rho} \nabla \Psi & = 0,
	\end{align}
\end{subequations}
for some function $\Psi: [0,1] \times \mathbb{R}^d \rightarrow \mathbb{R}$, and $v_t = \mathcal{T}_{k,\rho_t} \nabla \Psi_t$. 
\end{proposition}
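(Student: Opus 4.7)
The plan is to derive the coupled system \eqref{eq:Stein geodesics} by a Lagrange multiplier calculation, in the spirit of the Benamou--Brenier derivation of constant-speed Wasserstein geodesics (cf. \cite[Chapter 8]{V2003}). By Lemma \ref{lem:properties distance}.\ref{it:unconstrained} the constraint $v_t \in \overline{T_{k,\rho_t}\nabla C_c^\infty(\mathbb{R}^d)}^{\mathcal{H}_k^d}$ does not affect the infimum in \eqref{eq:Stein constrained}, and I would therefore work with the unconstrained formulation \eqref{eq:Stein unconstrained}, verifying a posteriori that the optimal velocity field automatically lies in the required subspace. The starting point is the action
\begin{equation*}
S[\rho, v, \Psi] = \int_0^1 \Vert v_t \Vert_{\mathcal{H}_k^d}^2 \, \mathrm{d}t + 2 \int_0^1 \int_{\mathbb{R}^d} \Psi_t \bigl( \partial_t \rho_t + \nabla \cdot (\rho_t v_t) \bigr) \, \mathrm{d}x \, \mathrm{d}t,
\end{equation*}
where $\Psi$ is a Lagrange multiplier enforcing the continuity equation, and critical points are sought with $\rho_0 = \mu$, $\rho_1 = \nu$ fixed.

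First I would vary with respect to $v \in \mathcal{H}_k^d$. The key identity is the adjoint relation
\begin{equation*}
\langle T_{k,\rho} \phi, w \rangle_{\mathcal{H}_k^d} = \int_{\mathbb{R}^d} \phi \cdot w \, \mathrm{d}\rho, \qquad \phi \in L^2(\rho;\mathbb{R}^d), \quad w \in \mathcal{H}_k^d,
\end{equation*}
which follows componentwise from the reproducing property $w_i(x) = \langle w_i, k(x,\cdot)\rangle_{\mathcal{H}_k}$ together with Fubini. Combined with spatial integration by parts this gives, for any perturbation $\tilde v \in \mathcal{H}_k^d$,
\begin{equation*}
\int_{\mathbb{R}^d} \Psi_t \, \nabla \cdot (\rho_t \tilde v_t) \, \mathrm{d}x = - \int_{\mathbb{R}^d} \nabla \Psi_t \cdot \tilde v_t \, \mathrm{d}\rho_t = -\langle T_{k,\rho_t} \nabla \Psi_t, \tilde v_t \rangle_{\mathcal{H}_k^d},
\end{equation*}
so the stationarity condition $\delta_v S = 0$ reads $\langle v_t - T_{k,\rho_t}\nabla \Psi_t, \tilde v_t\rangle_{\mathcal{H}_k^d} = 0$ for all $\tilde v_t$ and forces $v_t = T_{k,\rho_t} \nabla \Psi_t$. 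In particular $v_t$ then lies in the range of $T_{k,\rho_t}\nabla$, so the constraint of \eqref{eq:Stein constrained} is satisfied for free.

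Next I would vary with respect to $\rho$, using perturbations $\tilde \rho$ with $\tilde \rho_0 = \tilde \rho_1 = 0$ and integrating by parts in both space and time to obtain
\begin{equation*}
\delta_\rho S = -2 \int_0^1 \int_{\mathbb{R}^d} \bigl( \partial_t \Psi_t + \nabla \Psi_t \cdot v_t \bigr) \tilde \rho_t \, \mathrm{d}x \, \mathrm{d}t.
\end{equation*}
Vanishing of this variation for all admissible $\tilde \rho$ yields $\partial_t \Psi + \nabla \Psi \cdot v = 0$, and inserting $v = T_{k,\rho}\nabla \Psi$ gives the second equation of \eqref{eq:Stein geodesics}. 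Varying $\Psi$ simply returns the continuity constraint, which becomes the first equation of \eqref{eq:Stein geodesics} after the same substitution.

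The main obstacle is the functional-analytic justification of the variation in $v$: one must ensure $T_{k,\rho}\nabla \Psi \in \mathcal{H}_k^d$ (a mild smoothness/integrability hypothesis on $\Psi_t$) and that the dual pairing against $\tilde v \in \mathcal{H}_k^d$ is really realised by the $\mathcal{H}_k^d$ inner product. In keeping with the formal Otto-calculus style adopted throughout Section \ref{sec:gradient flow}, these technicalities can be set aside and the statement read as one about smooth critical points; a fully rigorous treatment would presumably proceed in the metric-geometric framework of \cite{ambrosio2008gradient}.
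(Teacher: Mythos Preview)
Your argument is correct and, at the formal level adopted in the paper, complete. The paper's own proof follows the same Lagrange-multiplier philosophy but organises the calculation differently: rather than treating $(\rho,v,\Psi)$ as three independent variables in a single Lagrangian, it first parametrises $v_t = T_{k,\rho_t}\nabla\Phi_t$ using the tangent-space structure, rewrites the constrained problem as an inf--sup by encoding the continuity equation through a supremum over test functions $\Psi$, invokes convex--concave duality to swap the order of optimisation, and then minimises explicitly in $\Phi$ to find $\Phi = \Psi$; only after this reduction does it take functional derivatives in $\rho$ and $\Psi$. Your direct three-variation approach is slightly more elementary and closer in spirit to the standard Benamou--Brenier computation, while the paper's saddle-point formulation makes the convex structure of the problem explicit and avoids having to argue separately that the optimal $v$ lands in $\overline{T_{k,\rho}\nabla C_c^\infty}^{\mathcal{H}_k^d}$. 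Both routes are formal and arrive at the same system.
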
 
\begin{proof}\textbf{(Informal)} The proof (to be found in Appendix \ref{app:proofs geodesics}) relies on formal computations, inspired by the heuristics in \citet[Section 3]{otto2000generalization}. It proceeds by identifying \eqref{eq:Stein geodesics} as the formal optimality conditions for \eqref{eq:Stein constrained}; in particular, $\Psi$ acts as a Lagrange multiplier enforcing the constraints. A rigorous formulation (involving well-posedness of \eqref{eq:Stein geodesics}) is the subject of ongoing work. In the Wasserstein case, rigorous formulation of the associated geodesic equations have been given imposing additional regularity assumption, see \citet[Proposition 4]{lott2008some} or using the machinery of geodesic length spaces \citep[Proposition 3.10 and Remark 3.11]{gigli2012second}.
\end{proof}
In the sequel, we will refer to smooth solutions $(\rho_t, \Psi_t)_{0 \le t \le 1}$ of the system \eqref{eq:Stein geodesics} as \emph{Stein geodesics}.
\begin{remark}
	\label{rem:geodesics}
	It is interesting to compare \eqref{eq:Stein geodesics} to the geodesic equations for the quadratic Wasserstein distance $\mathcal{W}_2$, 
	\begin{subequations}
		\label{eq:Wasserstein geodesics}
		\begin{align}
		\label{eq:continuity geodesic}
		\partial_t \rho + \nabla \cdot (\rho \nabla \Psi) & = 0, \\
		\label{eq:HJB}
		\partial_t \Psi + \frac{1}{2} \vert \nabla \Psi \vert^2 & = 0,
		\end{align}
	\end{subequations}
see \citet{lott2008some}, \citet[Chapter 5]{V2003} and \citet{otto2000generalization}. In contrast to \eqref{eq:Stein geodesics}, the second equation \eqref{eq:HJB} decouples from the first one, \eqref{eq:continuity geodesic}. The fact that the distance $d_k$ induces a system of coupled equations for its geodesics can naturally be linked to the interpretation of \eqref{eq:Stein pde} as the mean-field limit of an interacting particle system. See also Appendix \ref{app:Langevin}.
\end{remark}

In what follows, our objective is to take some steps towards a more quantitative understanding of the $\mathrm{KL}$-decay in Corollary  \ref{cor:KL decay}. As is well-known, decay estimates can be obtained from convexity properties along geodesics. We refer to
\citet[Section 9.1]{villani2003optimal}, in particular to Formal Corollary 9.3, restated here as follows: 
\begin{theorem}[Informal]
	\label{thm:convexity contraction}
	Assume that there exists $\lambda > 0$ such that
	\begin{equation}
	\label{eq:KL 2der}
	\frac{\mathrm{d}^2}{\mathrm{d}t^2} {\rm KL} (\rho_t \vert \pi) \big\vert_{t = 0}  > \lambda,
	\end{equation}
	for all \emph{unit-speed geodesics} $(\rho_t)_{t \in (-\varepsilon, \varepsilon)}$. Then
	\begin{equation}
	\label{eq:KL exp decay}
	{\rm KL}(\rho_t \vert \pi) \le e^{-2\lambda t} {\rm KL}(\rho_0 \vert \pi).
	\end{equation}
	along solutions $(\rho_t)_{t \ge 0}$ of \eqref{eq:Stein pde}.
\end{theorem}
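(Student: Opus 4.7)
The plan is to follow the standard strategy for deriving exponential contraction from geodesic convexity, adapted from the Wasserstein setting to the Stein geometry: namely, to upgrade the Hessian-type bound \eqref{eq:KL 2der} to a Polyak--{\L}ojasiewicz type functional inequality that connects the value of the $\mathrm{KL}$-divergence to the squared norm of its Stein gradient, and then to combine this with the dissipation identity provided by Corollary \ref{cor:gradient flow} and conclude by Gr\"onwall's inequality. The intermediate inequality to establish is
\[
\mathrm{KL}(\rho|\pi) \leq \frac{1}{2\lambda}\, g_\rho\!\left(\mathrm{grad}_k \mathrm{KL}(\rho),\, \mathrm{grad}_k \mathrm{KL}(\rho) \right), \qquad \rho \in M,
\]
which is the Stein counterpart of the classical logarithmic Sobolev inequality.

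To derive this inequality, fix $\rho \in M$ and let $(\rho_s)_{0 \leq s \leq L}$ denote a unit-speed Stein geodesic with $\rho_0 = \rho$ and $\rho_L = \pi$, where $L = d_k(\rho,\pi)$. Integrating the lower bound \eqref{eq:KL 2der} twice in $s$ (the usual Taylor-with-remainder reformulation of $\lambda$-geodesic convexity) and exploiting $\mathrm{KL}(\pi|\pi) = 0$ yields
\[
0 \geq \mathrm{KL}(\rho|\pi) + L\, \tfrac{d}{ds}\mathrm{KL}(\rho_s|\pi)\big|_{s=0} + \tfrac{\lambda}{2}L^2.
\]
By the very definition of the Riemannian gradient, the first derivative above equals $g_\rho(\mathrm{grad}_k \mathrm{KL}(\rho), \dot\rho_0)$; since $(\rho_s)$ has unit speed, Cauchy--Schwarz bounds this quantity below by $-\sqrt{g_\rho(\mathrm{grad}_k \mathrm{KL}(\rho), \mathrm{grad}_k \mathrm{KL}(\rho))}$. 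Substituting this in and bounding the resulting quadratic in $L$ by its global maximum over $L \in \mathbb{R}$ produces the desired functional inequality.

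For the final step, by Corollary \ref{cor:gradient flow} together with the Riemannian chain rule, solutions of \eqref{eq:Stein pde} obey the dissipation identity
\[
\tfrac{d}{dt} \mathrm{KL}(\rho_t|\pi) = -g_{\rho_t}\!\left(\mathrm{grad}_k \mathrm{KL}(\rho_t),\, \mathrm{grad}_k \mathrm{KL}(\rho_t)\right),
\]
so substituting the functional inequality gives $\tfrac{d}{dt}\mathrm{KL}(\rho_t|\pi) \leq -2\lambda\, \mathrm{KL}(\rho_t|\pi)$, and Gr\"onwall's lemma delivers \eqref{eq:KL exp decay}. The hard part, and the reason the statement is flagged as \emph{informal}, lies in securing the existence of a minimising unit-speed Stein geodesic connecting an arbitrary $\rho \in M$ to $\pi$: since $d_k$ is merely an extended metric by Lemma \ref{lem:properties distance} and well-posedness for the geodesic system \eqref{eq:Stein geodesics} has not been established, this step is nontrivial. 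A rigorous version would likely either restrict to a neighbourhood of equilibrium where such geodesics exist, or proceed via an Evolution Variational Inequality in the style of \cite{ambrosio2008gradient} adapted to the Stein geometry.
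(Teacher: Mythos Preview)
Your argument is correct and is precisely the standard route from geodesic $\lambda$-convexity to exponential contraction: derive a gradient-domination (``Stein log-Sobolev'') inequality by Taylor-expanding $\mathrm{KL}$ along a minimising geodesic to $\pi$, then combine with the dissipation identity for the gradient flow and close via Gr\"onwall. You have also correctly identified the genuine obstruction to rigour, namely the existence of a minimising unit-speed Stein geodesic from an arbitrary $\rho$ to $\pi$, which is open because well-posedness of the geodesic system \eqref{eq:Stein geodesics} is not established.

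It is worth noting that the paper does not supply its own proof of this statement: it is explicitly flagged as informal and attributed to \cite[Section 9.1, Formal Corollary 9.3]{villani2003optimal}, where exactly the argument you have written (in the Wasserstein setting) appears. So your proposal is not an alternative to the paper's proof but rather a faithful unpacking of the cited reference, transported to the Stein geometry. One minor point: when you ``bound the resulting quadratic in $L$ by its global maximum over $L \in \mathbb{R}$'', you are implicitly discarding the information that $L = d_k(\rho,\pi)$ is fixed; this is harmless here because the maximiser $L^* = \lambda^{-1}\sqrt{g_\rho(\mathrm{grad}_k\mathrm{KL},\mathrm{grad}_k\mathrm{KL})}$ is nonnegative and the bound only needs to hold for some $L \ge 0$, but it is worth being explicit that this step gives away nothing essential.
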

\begin{remark}[Beyond the $\mathrm{KL}$-divergence] Using Lemma \ref{lem:gradient}, it is possible to derive alternative dynamical schemes that seek to minimise arbitrary functionals of sufficient regularity. In Theorem \ref{thm:convexity contraction}, it would then be sufficient to replace the $\mathrm{KL}$-divergence by the functional of interest, and the calculations that follow in this section (in particular those leading to Lemma \ref{lem:Hessian}) could be carried out in a similar fashion. We would like to point the reader towards \citet{arbel2019maximum}, where the gradient flow of the maximum mean discrepancy in the Wasserstein geometry has been investigated using similar ideas.
\end{remark}
\begin{remark}
\label{rem:unit speed}
Unit-speed geodesics are solutions $(\rho_t,\Psi_t)_{t \in (-\varepsilon, \varepsilon)}$  to \eqref{eq:Stein geodesics}	satisfying $g_{\rho_t}(\partial_t \rho,\partial_t \rho) = 1$ for $t \in (-\varepsilon,\varepsilon)$. By the definition of $g_\rho$ (see \eqref{eq:Stein metric}) the latter statement is equivalent to
\begin{equation}
\nonumber
\left \langle  \mathcal{T}_{k,\rho_t} \nabla \Psi_t,\mathcal{T}_{k,\rho_t} \nabla \Psi_t  \right \rangle_{\mathcal{H}_k^d} = 1,
\end{equation}
and, by using \citet[Theorem 4.26]{steinwart2008support}, to
\begin{equation}
\label{eq:unit tangent}
\int_{\mathbb{R}^d} \int_{\mathbb{R}^d} \nabla \Psi_t(y) k(y,z) \nabla \Psi_t(z) \mathrm{d}\rho_t(y) \mathrm{d}\rho_t(z) = 1.
\end{equation}
\end{remark}
Motivated by Theorem \ref{thm:convexity contraction} we compute the left-hand side of \eqref{eq:KL 2der}:
\begin{lemma}[Computing the Hessian]
	\label{lem:Hessian}
	Let $(\rho_t,\Psi_t)_{t \in (-\varepsilon,\varepsilon)}$ be a Stein geodesic, i.e. a smooth solution to \eqref{eq:Stein geodesics}, and $\rho_0 \equiv \rho$, $\Psi_0 \equiv \Psi$.  Then
	\begin{equation}
	\frac{\mathrm{d}^2}{\mathrm{d}t^2} \mathrm{KL}(\rho_t \vert \pi) \big\vert_{t = 0}  = \Hess_\rho(\Psi, \Psi),
	\end{equation}
	where
	\begin{equation}
	\label{eq:Hess op}
	\Hess_\rho(\Phi,\Psi) = \sum_{i,j = 1}^d \int_{\mathbb{R}^d} \int_{\mathbb{R}^d} \partial_i \Phi(y) q_{ij} [\rho] (y,z) \partial_j \Psi(z) \, \mathrm{d}\rho(y) \mathrm{d}\rho(z),
	\end{equation}
	and
	\begin{subequations}
		\label{eq:Hessian}
		\begin{align}
		\label{eq:nonequilibrium1}
		q_{ij}[\rho](y,z)  = & \delta_{ij} \sum_{l=1}^d \int_{\mathbb{R}^d} \partial_{x_l} \left( e^{-V(x)} k(x,y) \right) e^{V(x)} \, \mathrm{d}\rho(x) \, \partial_{y_l} k(y,z)
		\\
		\label{eq:nonequilibrium2}
		& -  \int_{\mathbb{R}^d} \partial_{y_j} \partial_{x_i} \left(e^{-V(x)} k(x,y) \right) e^{V(x)} \, \mathrm{d}\rho(x) k(y,z)
		\\
		\label{eq:equilibrium}
		& - \int_{\mathbb{R}^d} \partial_{x_j} \left(e^{V(x)} \partial_{x_i} \left( e^{-V(x)} k(x,y) \right) \right) k(x,z) \, \mathrm{d}\rho(x).
		\end{align}
	\end{subequations}
\end{lemma}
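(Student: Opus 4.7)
The plan is a direct two-step differentiation of $\mathrm{KL}(\rho_t\vert\pi)$ along the geodesic $(\rho_t,\Psi_t)$, using the two equations in \eqref{eq:Stein geodesics} in turn. First, combining Corollary \ref{cor:gradient flow} with the first equation of \eqref{eq:Stein geodesics} and integrating by parts yields
\begin{equation*}
\frac{\mathrm d}{\mathrm dt}\mathrm{KL}(\rho_t\vert\pi)=\int_{\mathbb R^d}\nabla\log(\rho_t/\pi)\cdot T_{k,\rho_t}\nabla\Psi_t\,\mathrm d\rho_t.
\end{equation*}
Using the identity $\rho\,\nabla\log(\rho/\pi)=e^{-V}\nabla(\rho e^V)$ together with the integral representation $(T_{k,\rho}\nabla\Psi)(y)=\int k(y,x)\nabla\Psi(x)\,\mathrm d\rho(x)$, an additional integration by parts in the $x$-variable rewrites the first derivative in the clean form
\begin{equation*}
\frac{\mathrm d}{\mathrm dt}\mathrm{KL}(\rho_t\vert\pi)=-\sum_{i}\iint A_i(x,y)\,\partial_{y_i}\Psi_t(y)\,\mathrm d\rho_t(x)\,\mathrm d\rho_t(y),\qquad A_i(x,y):=e^{V(x)}\partial_{x_i}\bigl(e^{-V(x)}k(x,y)\bigr),
\end{equation*}
whose crucial feature is that $A_i$ does not depend on $t$.

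Differentiating once more and applying the product rule produces three groups of terms, according to whether the $t$-derivative falls on $\partial_{y_i}\Psi_t(y)$, on $\mathrm d\rho_t(x)$, or on $\mathrm d\rho_t(y)$. The first group is handled by substituting $\partial_t\Psi=-\nabla\Psi\cdot v$ with $v=T_{k,\rho}\nabla\Psi$, yielding two contributions: a piece $T_{1a}$ containing $A_i\,\partial_{y_i}\partial_{y_j}\Psi\cdot v_j$ and a piece $T_{1b}$ containing $A_i\,\partial_{y_j}\Psi\cdot\partial_{y_i}v_j$. For the second and third groups I substitute $\partial_t\rho=-\nabla\cdot(\rho v)$ and integrate by parts in $x$ or in $y$ respectively; the $y$-case splits further into $T_{3a}$, involving $\partial_{y_l}A_i\cdot\partial_{y_i}\Psi\cdot v_l$, and $T_{3b}$, involving $A_i\,\partial_{y_l}\partial_{y_i}\Psi\cdot v_l$. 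Expanding every $v=T_{k,\rho}\nabla\Psi$ via its integral representation then turns each term into a triple integral over $\rho^{\otimes 3}$ that is quadratic in $\nabla\Psi$.

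The central observation is that $T_{1a}+T_{3b}=0$ by the symmetry of mixed partials $\partial_{y_i}\partial_{y_j}\Psi=\partial_{y_j}\partial_{y_i}\Psi$ together with a relabelling of dummy indices; this eliminates every second-derivative term in $\Psi$, as required by the statement. The three remaining pieces then match \eqref{eq:Hessian} directly: $T_{1b}$ reproduces \eqref{eq:nonequilibrium1} because the factor $\int A_l(x,y)\,\mathrm d\rho(x)$ contracts with $\partial_{y_l}k(y,z)$ while the two indices on $\nabla\Psi$ coincide, producing the $\delta_{ij}$; $T_{3a}$ reproduces \eqref{eq:nonequilibrium2} through the identity $\partial_{y_l}A_i=e^{V(x)}\partial_{y_l}\partial_{x_i}(e^{-V(x)}k)$; and $T_2$ reproduces \eqref{eq:equilibrium} through $\partial_{x_j}A_i=\partial_{x_j}\bigl(e^{V}\partial_{x_i}(e^{-V}k)\bigr)$.

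The main obstacle is the careful bookkeeping of indices through the multiple integrations by parts. Without the cancellation $T_{1a}+T_{3b}=0$ the Hessian would acquire spurious $\nabla^2\Psi$ contributions, so this identity is essential rather than incidental. A secondary subtlety is that the first-derivative expression must be brought to the form involving $A_i$ by integrating by parts in the \emph{inner} variable $x$ (not the outer variable $y$): this manifests the operator $e^{V}\partial_{x_i}(e^{-V}\cdot)$, which is the natural adjoint of the gradient in $L^2(\pi)$ and whose appearance is exactly what allows the answer to be written in terms of $q_{ij}[\rho]$.
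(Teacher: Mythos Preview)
Your proposal is correct. The approach differs from the paper's mainly in organisation: the paper splits $\mathrm{KL}=\mathrm{Reg}+\mathrm{Cost}$, proves a separate lemma for $\partial_t w_i$ (with $w=T_{k,\rho}\nabla\Psi$), and computes the Hessians of the two pieces independently (Lemmas~\ref{lem:Hess Reg} and~\ref{lem:Hess cost}) before recombining them into~\eqref{eq:Hessian}. You instead keep $\mathrm{KL}$ intact and package the $V$-dependence into the single kernel $A_i(x,y)=e^{V(x)}\partial_{x_i}(e^{-V(x)}k(x,y))$, which makes the first derivative time-independent except through $\Psi_t$ and $\rho_t$ and leads directly to the three pieces of $q_{ij}[\rho]$. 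Your cancellation $T_{1a}+T_{3b}=0$ is the same phenomenon as the line ``the term involving $\partial_i\partial_j\Psi$ cancels'' in the paper's computation of $\partial_t w_i$. The paper's decomposition has the side benefit of yielding separate expressions for $\mathrm{Hess}^{\mathrm{Reg}}_\rho$ and $\mathrm{Hess}^{\mathrm{Cost}}_\rho$, which it uses elsewhere (Lemma~\ref{lem:hessian negative}); your route is more economical if one only wants the full Hessian and makes the role of the $L^2(\pi)$-adjoint structure $e^V\partial_i(e^{-V}\cdot)$ transparent from the outset.
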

\begin{proof}
	See Appendix \ref{app:proof Hessian}.
\end{proof}
\begin{remark}
	For notational convenience, our definition of $\mathrm{Hess}_\rho$ slightly differs from the definition of Hessian operators commonly encountered in the literature on Wasserstein gradient flows (see for instance \citet[Section 3.1]{otto2005eulerian}). 
\end{remark}
\begin{remark}
	\label{rem:rough k}
Although \eqref{eq:Hessian} is written in a form requiring suitable differentiability properties of $k$, we would like to emphasise that an examination of the proof shows that the result also holds for kernels that are merely continuous (provided that $\rho$ and $\Psi$ are smooth enough), either by interpreting \eqref{eq:Hessian} in a distributional way, or by performing integration by parts in   \eqref{eq:Hess op}.
\end{remark}
Combining Theorem \ref{thm:convexity contraction} with \eqref{eq:unit tangent} we obtain the following informal lemma, relating a functional inequality to exponential decay of the $\mathrm{KL}$-divergence: 
\begin{lemma}[Informal]
	\label{lem:functional inequality}
	Assume that there exists $\lambda > 0$ such that
	\begin{equation}
	\label{eq:hessian metric bound}
	\mathrm{Hess}_\rho(\Psi,\Psi) \ge \lambda \int_{\mathbb{R}^d} \int_{\mathbb{R}^d} \nabla \Psi(y) \cdot k(y,z) \nabla \Psi(z) \,\mathrm{d}\rho(y) \mathrm{d}\rho(z)
	\end{equation}
	for all $\rho \in M$ and $\Psi$ such that the right-hand side of \eqref{eq:hessian metric bound} is well-defined. Then 
	the exponential decay estimate \eqref{eq:KL exp decay} holds. 
\end{lemma}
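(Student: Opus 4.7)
The proof will be a direct synthesis of the three earlier results: Theorem \ref{thm:convexity contraction} (convexity along geodesics implies exponential contraction), Lemma \ref{lem:Hessian} (explicit formula for the second derivative of $\mathrm{KL}$ along Stein geodesics), and the characterisation of unit-speed geodesics in Remark \ref{rem:unit speed}.

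The plan is as follows. First, I would pick an arbitrary unit-speed Stein geodesic $(\rho_t,\Psi_t)_{t\in(-\varepsilon,\varepsilon)}$, that is, a smooth solution of \eqref{eq:Stein geodesics} satisfying the normalisation
\begin{equation}
\int_{\mathbb{R}^d}\int_{\mathbb{R}^d} \nabla\Psi_0(y)\cdot k(y,z)\nabla\Psi_0(z)\,\mathrm{d}\rho_0(y)\,\mathrm{d}\rho_0(z)=1,
\end{equation}
as derived in \eqref{eq:unit tangent}. By Lemma \ref{lem:Hessian}, the second derivative of the $\mathrm{KL}$-divergence along this curve at $t=0$ equals $\mathrm{Hess}_{\rho_0}(\Psi_0,\Psi_0)$. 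Applying the hypothesised functional inequality \eqref{eq:hessian metric bound} with $\rho=\rho_0$ and $\Psi=\Psi_0$, and using the unit-speed normalisation above, I get
\begin{equation}
\frac{\mathrm{d}^2}{\mathrm{d}t^2}\mathrm{KL}(\rho_t\vert\pi)\Big|_{t=0} \ge \lambda.
\end{equation}

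Since the base point $\rho_0$ and tangent direction $\Psi_0$ were arbitrary, this lower bound holds uniformly along all unit-speed Stein geodesics. This places us precisely in the setting of Theorem \ref{thm:convexity contraction} (with its formal reading as in \cite[Formal Corollary 9.3]{villani2003optimal}), from which the exponential decay \eqref{eq:KL exp decay} along the gradient flow \eqref{eq:Stein pde} follows immediately.

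The only genuine obstacle here is conceptual rather than computational: Theorem \ref{thm:convexity contraction} is itself stated informally, so the argument inherits the same level of rigour. A fully rigorous version would require a proper theory of geodesic convexity adapted to the Stein geometry (analogous to the work of Ambrosio--Gigli--Savar\'e \cite{ambrosio2008gradient} for the Wasserstein case), in particular establishing that every pair of sufficiently close measures in $M$ can indeed be joined by a Stein geodesic along which the $\mathrm{KL}$-divergence is twice differentiable, so that the displacement convexity estimate can be integrated to yield the exponential contraction. Beyond flagging this point, no further work is needed: the lemma is, by design, a transparent bookkeeping step that packages Lemma \ref{lem:Hessian} and Theorem \ref{thm:convexity contraction} into the form of a functional inequality on $(\rho,\Psi)$, which is what will be exploited in the next section to obtain concrete convergence criteria.
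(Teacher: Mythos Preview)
Your proposal is correct and matches the paper's own treatment: the paper does not give a separate proof but simply states that the lemma follows by combining Theorem~\ref{thm:convexity contraction} with the unit-speed characterisation \eqref{eq:unit tangent} (implicitly via Lemma~\ref{lem:Hessian}). Your write-up makes this synthesis explicit and correctly flags that the informality is inherited from Theorem~\ref{thm:convexity contraction}.
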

\begin{remark}
	In more geometrical terms, \eqref{eq:hessian metric bound} can be written as
	\begin{equation}
 \nonumber
	\mathrm{Hess}_\rho(\Psi,\Psi) \ge \lambda g_\rho(v,v),
	\end{equation}
	with $v = \mathcal{T}_{k,\rho}\nabla \Psi$. 
\end{remark}

The Hessian can be split according to the decomposition of the $\mathrm{KL}$-divergence in \eqref{eq:KL},
\begin{equation}
\nonumber
\mathrm{Hess}_\rho(\Phi,\Psi) = \mathrm{Hess}^\mathrm{Reg}_\rho(\Phi,\Psi) + \mathrm{Hess}^\mathrm{Cost}_\rho(\Phi,\Psi),
\end{equation}  
for explicit expressions see Lemmas \ref{lem:Hess Reg} and \ref{lem:Hess cost} in Appendix \ref{app:proof Hessian}.
 Since the work of McCann \citep{mccann1997convexity}, it is well-known that $\mathrm{Reg}(\rho)$ is displacement-convex in the sense of Theorem \ref{thm:convexity contraction} along unit-speed Wasserstein geodesics. The analogous statement is false for the Stein geodesics considered in this paper:
\begin{lemma}
	\label{lem:hessian negative}
	Let $\Psi: \mathbb{R}^d \rightarrow \mathbb{R}$ be a linear function, i.e. $\Psi(x)= a \cdot x$ for some $a \in \mathbb{R}^d$, $a \neq 0$. Then $\mathrm{Hess}^{\mathrm{Reg}}_{\rho}( \Psi, \Psi) < 0$ for all $\rho \in M$ and all translation-invariant kernels $k$.
\end{lemma}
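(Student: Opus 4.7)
The plan is to specialize the formula of Lemma \ref{lem:Hessian} to $V \equiv 0$ (thereby isolating $\mathrm{Hess}^{\mathrm{Reg}}$) and to a translation-invariant symmetric kernel $k(x,y) = h(x-y)$, with $\nabla \Psi \equiv a$. The central auxiliary object is the convolution
\[
\phi(y) := (h \ast \rho)(y) = \int_{\mathbb{R}^d} h(x-y)\, \mathrm{d}\rho(x),
\]
for which the evenness of $h$ (equivalent to symmetry of $k$) yields the identities
\[
\int \nabla h(x-y)\, \mathrm{d}\rho(x) = -\nabla \phi(y), \qquad \int \nabla h(y-z)\, \mathrm{d}\rho(z) = \nabla \phi(y),
\]
together with $\int \nabla^2 h(x-y)\, \mathrm{d}\rho(x) = \nabla^2 \phi(y)$ and (after a rename of variables) $\int \nabla^2 h(x-y)\, \mathrm{d}\rho(y) = \nabla^2 \phi(x)$.

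Next I would contract $q_{ij}[\rho](y,z)$ with $a_i a_j$ and integrate against $\mathrm{d}\rho(y)\mathrm{d}\rho(z)$, treating the three summands \eqref{eq:nonequilibrium1}, \eqref{eq:nonequilibrium2}, \eqref{eq:equilibrium} separately. The Kronecker-$\delta$ piece \eqref{eq:nonequilibrium1} reduces directly to $|a|^2 \int \bigl(-\nabla\phi(y)\bigr) \cdot \nabla\phi(y)\, \mathrm{d}\rho(y) = -|a|^2 \int |\nabla \phi|^2\, \mathrm{d}\rho$. For \eqref{eq:nonequilibrium2}, the $x$-integration of $\partial_{y_j}\partial_{x_i} h(x-y)$ produces $(\nabla^2 \phi(y))_{ij}$ (absorbing a sign from $\partial_{y_j}$), and the $z$-integration of $h(y-z)$ produces $\phi(y)$; the net contribution is $+\int (a^\top \nabla^2 \phi\, a)\, \phi\, \mathrm{d}\rho$. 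For \eqref{eq:equilibrium}, integrating first in $y$ converts $\partial_{x_i}\partial_{x_j} h(x-y)$ into $(\nabla^2 \phi(x))_{ij}$, while $z$-integration of $h(x-z)$ gives $\phi(x)$; this yields $-\int (a^\top \nabla^2 \phi\, a)\, \phi\, \mathrm{d}\rho$. These two ``Hessian-of-$h$'' contributions cancel exactly, leaving
\[
\mathrm{Hess}^{\mathrm{Reg}}_\rho(\Psi, \Psi) = -|a|^2 \int_{\mathbb{R}^d} |\nabla \phi|^2\, \mathrm{d}\rho \le 0.
\]

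To upgrade to strict inequality, I would argue that $\phi = h \ast \rho$ cannot be constant for any non-trivial translation-invariant positive-definite $h$: by Bochner's theorem $h$ is the Fourier transform of a positive spectral measure $\mu$, so $\phi \equiv \mathrm{const}$ would force $\mu \cdot \hat\rho$ to be concentrated at the origin, which, combined with the continuity of $\hat\rho$ and $\hat\rho(0) = 1$, forces $\mu$ itself to be a point mass at $0$, i.e., $h$ constant. Since $\rho$ is smooth with $\mathrm{supp}\,\rho = \mathbb{R}^d$, the non-constancy of $\phi$ yields $\int |\nabla \phi|^2\, \mathrm{d}\rho > 0$. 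The main obstacle is the careful sign bookkeeping required for the cancellation between \eqref{eq:nonequilibrium2} and \eqref{eq:equilibrium}: one has to track the flip $\partial_{y_j} h(x-y) = -(\partial_j h)(x-y)$ and exploit the evenness of $h$ (and the symmetry of the double integration against $\rho$) to identify both contributions as $\pm\int (a^\top \nabla^2\phi\, a)\phi\, \mathrm{d}\rho$ after exchanging the dummy variables $x$ and $z$. Once this is recognized, the claim follows essentially by inspection.
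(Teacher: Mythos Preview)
Your computational core is essentially the paper's proof repackaged through the convolution $\phi = h\ast\rho$: the paper combines the two second-derivative pieces into a single integrand $a_i a_j\,\partial_{x_i}\partial_{x_j}k(x,y)\,(k(y,z)-k(x,z))$ and kills it by the swap $x\leftrightarrow y$ (using that $(\partial_i\partial_j h)(x-y)$ is even), which is exactly your observation that the two contributions $\pm\int (a^\top\nabla^2\phi\,a)\,\phi\,\mathrm{d}\rho$ cancel after renaming variables. The surviving term $-|a|^2\int|\nabla\phi|^2\,\mathrm{d}\rho$ is the paper's $-|a|^2\sum_l\int\bigl(\int k(x,y)\partial_{x_l}\rho(x)\,\mathrm{d}x\bigr)^2\,\mathrm{d}\rho(y)$ in different notation.

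There is, however, a small gap in your strictness argument. From $\phi\equiv\mathrm{const}$ you get $\hat\rho\cdot\mu$ supported at the origin, but continuity of $\hat\rho$ with $\hat\rho(0)=1$ only gives $\hat\rho\neq 0$ in a \emph{neighbourhood} of $0$; nothing in $\rho\in M$ prevents $\hat\rho$ from vanishing elsewhere, so $\mu$ could carry mass on $\{\hat\rho=0\}$ without contradiction. The paper closes this by invoking the standing ISPD assumption directly: if $\int|\nabla\phi|^2\,\mathrm{d}\rho=0$ then (full support of $\rho$) $\int k(\cdot,x)\,\partial_l\rho(x)\,\mathrm{d}x\equiv 0$ for each $l$, and ISPD forces $\partial_l\rho\equiv 0$, so $\rho$ would be constant --- impossible for a probability density on $\mathbb{R}^d$. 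Equivalently, for translation-invariant kernels ISPD is precisely $\supp\mu=\mathbb{R}^d$, which is the missing ingredient in your Bochner route.
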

\begin{proof}
	See Appendix \ref{app:proofs geodesics}.
\end{proof}

Lemma \ref{lem:hessian negative} shows that the entropic term $\mathrm{Reg}(\rho)$ by itself is not sufficient to explain contraction properties of the Stein PDE \eqref{eq:Stein pde}, contrary to the case of the Fokker-Planck equation associated to overdamped Langevin dynamics (see also Appendix \ref{app:Langevin}). As a consequence, we have not been able to obtain bounds for the Stein-Hessian operator within this framework, which would have allowed us to obtain the analogue of a logarithmic Sobolev inequality.  More specifically, we expect that more stringent assumptions (in comparison to standard settings in the theory of the Fokker-Planck equation) would have to be imposed on $V$ in order to obtain functional inequalities of the form \eqref{eq:hessian metric bound}. A possible route towards Stein logarithmic Sobolev inequalities (under such more stringent assumptions) might be via `systematic integration by parts', developed in \citet[Chapter 3]{jungel2016entropy}. 

\begin{remark}[Different scalings for SVGD and overdamped Langevin] 
	\label{rem:scalings}
	It is important to note that comparing the convergence properties for the Stein PDE \eqref{eq:Stein pde} and the Fokker-Planck equation does not straightforwardly lead to any conclusions about the associated algorithms, as the Fokker-Planck equation arises from a different scaling. Indeed, consider $N$ independent particles moving according to
	\begin{equation}
	\label{eq:overdamped system}
	\mathrm{d}X^i_t = -\nabla V(X_t^i) \, \mathrm{d}t + \sqrt{2} \, \mathrm{d}W_t^i, \quad i = 1,\ldots,N,
	\end{equation}  
	where $(W_t^i)_{t \ge 0}$ denote independent standard Brownian motions.
	By arguments similar to those used in the proof of Proposition \ref{prop:mean field limit} it is possible to show that the associated empirical measure $\rho_t^N = \frac{1}{N} \sum_{i=1}^N \delta_{X_t^i}$ converges towards the solution of the Fokker-Planck equation
	\begin{equation}
	\label{eq:FKP}
	\partial_t \rho = \nabla \cdot( \rho \nabla V + \nabla \rho).
	\end{equation}
	Notice that the interacting system \eqref{eq:noisy_stein_coordinate} contains an additional factor of $\frac{1}{N}$ in comparison with \eqref{eq:overdamped system}. Since this corresponds to a time rescaling of the form $t \mapsto t/N$, the Stein mean-field limit describes the evolution on a fast time scale, in comparison with \eqref{eq:FKP}. Direct comparisons between Langevin sampling (based on the formulation in \eqref{eq:overdamped system}) and SVGD (either using \eqref{eq:Stein ode} or its stochastic variant \eqref{eq:noisy_stein_coordinate}) are hence very challenging, both from a practical and a theoretical point of view.
    Intuitively, it seems reasonable to expect that the interacting nature of SVGD type schemes (in particular, the repulsion term including $\nabla k$) might be advantageous when $V$ is multimodal and non-interacting Langevin samplers struggle to explore the whole state space.
	We leave an in-depth study of this important problem for future work.
\end{remark}

\section{Curvature at equilibrium}
\label{sec:curvature}

\label{sec:equilibrium}
In this section we study the properties of the bilinear form $\Hess_\pi$, i.e. the curvature at equilibrium. By a continuity argument and according to Section \ref{sec:2nd order} (see in particular Theorem \ref{thm:convexity contraction} and Lemma \ref{lem:functional inequality}), we expect rapid convergence of solutions started close to equilibrium if and only if $\Hess_\pi$ is bounded from below in the following sense\footnote{A similar reasoning has been employed in \citep{otto1998dynamics} in the context of pattern formation in magnetic fluids.}:
\begin{definition}[Exponential decay near equilibrium]
	We say that \emph{exponential decay near equilbrium} holds if there exists $\lambda > 0$ such that  
	\begin{equation}
	\label{eq:local BE}
	\Hess_{\pi}(\Psi,\Psi) \ge \lambda \int_{\mathbb{R}^d} \int_{\mathbb{R}^d} \nabla \Psi(y) \cdot k(y,z) \nabla \Psi(z) \mathrm{d}\pi(y) \mathrm{d}\pi(z) 
	\end{equation}
	holds for all $\Psi \in C_c^\infty(\mathbb{R}^d)$. In this case we call the largest possible choice of $\lambda$ the \emph{local convergence rate}.   
\end{definition}
For algorithmic performance, it is clearly desirable that exponential decay near equilibrium holds and that $\lambda$ can be chosen as large as possible. The following notion will turn out to be useful for a finer comparison between different kernels:
\begin{definition}[Rayleigh coefficients]
	\label{def:Rayleigh}
	For $\Psi \in C_c^{\infty}(\mathbb{R}^d) \setminus \{0\}$, the associated \emph{Rayleigh coefficient} is defined by
	\begin{equation}
 \nonumber
	\lambda_{\Psi}^k := \frac{\mathrm{Hess}_{\pi}(\Psi,\Psi)}{\int_{\mathbb{R}^d} \int_{\mathbb{R}^d} \nabla \Psi(y) \cdot k(y,z) \nabla \Psi(z)\, \mathrm{d}\pi(y) \mathrm{d}\pi(z)}.
	\end{equation}
If $k_1$ and $k_2$ are positive definite kernels, we say that \emph{$k_1$ locally dominates $k_2$} if
\begin{equation}
\nonumber
\lambda^{k_1}_{\Psi} \ge \lambda^{k_2}_{\Psi}
\end{equation}
for all $\Psi \in C_c^{\infty}(\mathbb{R}^d) \setminus \{0\}$.
\end{definition}
\begin{remark}
	From Remark \ref{rem:unit speed} we have 
	\begin{equation}
 \nonumber
	\lambda_\Psi^k = \frac{\mathrm{d}^2}{\mathrm{d}t^2} \mathrm{KL}(\rho_t \vert \pi) \big\vert_{t = 0},
	\end{equation}
	where $(\rho_t)_{t \in (-\varepsilon, \varepsilon)} \subset M$ is a curve with $\rho_0 = \pi$ and $\partial_t \rho + \nabla \cdot (\rho \mathcal{T}_{k,\rho} \nabla \Psi) = 0$. Intuitively, $k_1$ locally dominates $k_2$ precisely when, in the geometry associated to $k_1$, the $\mathrm{KL}$-divergence `appears to be more curved at $\pi$' than in the geometry associated with $k_2$, `in all directions'.
\end{remark}
In what follows, we will start with the analysis of the functional inequality \eqref{eq:local BE}, in particular identifying guidelines for a judicious choice of $k$. 
\\\\
Integration by parts in $x$ shows that the expressions \eqref{eq:nonequilibrium1} and \eqref{eq:nonequilibrium2} vanish for $\rho = \frac{1}{Z} \exp(-V)$, so that   
\begin{subequations}
\label{eq:q equilibrium}
\begin{align}
q_{ij}[\pi](y,z)  & = - \int_{\mathbb{R}^d} \partial_{x_j} \left(e^{V(x)} \partial_{x_i} \left( e^{-V(x)} k(x,y) \right) \right) k(x,z) \, \mathrm{d}\pi(x)
\\
\label{eq:BEpi}
& = \int_{\mathbb{R}^d} \partial_i \partial_j V(x) k(x,y) k(x,z) \, \mathrm{d}\pi (x)
+ \int_{\mathbb{R}^d} \partial_{x_i} k(x,z) \partial_{x_j} k(x,y) \, \mathrm{d} \pi(x).
\end{align}
\end{subequations}
It is thus appropriate to associate the contributions \eqref{eq:nonequilibrium1} and \eqref{eq:nonequilibrium2} to the behaviour of SVGD for distributions far from equilibrium. 
The expression \eqref{eq:BEpi} relates the curvature properties of the $\mathrm{KL}$-divergence at $\pi$ to those of $V$ through its Hessian. Instructively, the same is true for the Wasserstein-Hessian, leading to the celebrated Bakry-{\'E}mery criterion (see Appendix \ref{app:Langevin}).
We will see that the functional inequality \eqref{eq:local BE} can be conveniently expressed in terms of the linear operator
\begin{equation}
\label{eq:local generator}
\mathcal{L} \phi = - \sum_{i=1}^d e^V \partial_i \left( e^{-V} \mathcal{T}_{k,\pi} \partial_i \phi \right), \quad \phi \in C_c^\infty(\mathbb{R}^d).
\end{equation}
Integration by parts shows that $\mathcal{L}$ is symmetric and positive semi-definite on $L^2(\pi)$. By slight abuse of notation, we will denote its self-adjoint (Friedrichs-)extension by the same symbol, and its domain of definition by $\mathcal{D}(\mathcal{L})$. We would like to stress that the expression \eqref{eq:local generator} is well-defined even though the kernel $k$ might not be differentiable. Indeed, $T_{k,\pi}\partial_i \phi$ is smooth without regularity assumptions on $k$, provided that $\pi$ and $\phi$ are regular enough. Note also that under Assumption \ref{ass:ispd} on the kernel $k$, the null space of $\mathcal{L}$ coincides with the constant functions (for a proof we refer to the proof of Lemma \ref{lem:funct inequalities} in Appendix \ref{app:equilibrium}).

The role of $\mathcal{L}$ becomes clear from the following lemma. Recall the definition of $L_0^2(\pi)$ from \eqref{eq:L20}.

\begin{lemma}
	\label{lem:funct inequalities}
	Let $k$ satisfy Assumption \ref{ass:ispd}. For $\lambda \ge  0$, the following are equivalent:
	\begin{enumerate}
		\item 
		\label{it:abstract BE}
		The inequality \eqref{eq:local BE} holds for all $\Psi \in C_c^\infty(\mathbb{R}^d)$.
		\item 
		\label{it:Poincare}
		The \emph{`Stein-Poincar{\'e} inequality'} 
		\begin{equation}
		\label{eq:Poincare}
		\langle \phi, \mathcal{L} \phi \rangle_{L^2(\pi)} \ge  \lambda  \langle \phi, \phi \rangle_{L^2(\pi)} 
		\end{equation}
		holds for all $\phi \in L_0^2(\pi) \cap C_c^\infty(\mathbb{R}^d)$. 
	\end{enumerate}
\end{lemma}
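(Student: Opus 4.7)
The plan is to express both sides of \eqref{eq:local BE} as quadratic forms in the operator $\mathcal{L}$, so that the two assertions reduce to a single spectral condition on $\mathcal{L}$. Specifically, I aim to establish, for every $\phi \in C_c^\infty(\mathbb{R}^d)$, the two identities
\begin{equation*}
\langle \phi, \mathcal{L}\phi\rangle_{L^2(\pi)} = \int_{\mathbb{R}^d}\!\!\int_{\mathbb{R}^d} \nabla\phi(y)\cdot k(y,z)\nabla\phi(z)\, \mathrm{d}\pi(y)\mathrm{d}\pi(z), \qquad \Hess_\pi(\phi,\phi) = \|\mathcal{L}\phi\|^2_{L^2(\pi)}.
\end{equation*}
Granting these, statement (\ref{it:abstract BE}) becomes $\|\mathcal{L}\Psi\|^2_{L^2(\pi)}\ge\lambda\langle\Psi,\mathcal{L}\Psi\rangle_{L^2(\pi)}$ on $C_c^\infty(\mathbb{R}^d)$, whereas statement (\ref{it:Poincare}) is $\langle\phi,\mathcal{L}\phi\rangle_{L^2(\pi)}\ge\lambda\|\phi\|^2_{L^2(\pi)}$ on $L_0^2(\pi)\cap C_c^\infty(\mathbb{R}^d)$; both will turn out to be equivalent to the spectral-gap condition $\sigma(\mathcal{L})\setminus\{0\}\subset[\lambda,\infty)$.

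The first identity follows immediately from \eqref{eq:local generator}: the factor $e^V$ cancels against $\pi \propto e^{-V}$ and one integration by parts (whose boundary terms vanish thanks to the compact support of $\phi$) yields $\langle\phi,\mathcal{L}\phi\rangle_{L^2(\pi)} = \sum_i\int(\partial_i\phi)(T_{k,\pi}\partial_i\phi)\,\mathrm{d}\pi$, which is the claimed double integral after inserting the definition of $T_{k,\pi}$. For the Hessian identity I start from the equilibrium expression \eqref{eq:q equilibrium} and integrate by parts once more in $x_j$ against the Gibbs weight $e^{-V(x)}\mathrm{d}x$, using $\partial_{x_j}\pi = -(\partial_jV)\pi$ to factor the kernel as
\begin{equation*}
q_{ij}[\pi](y,z) = \int_{\mathbb{R}^d} g_i(x,y)\,g_j(x,z)\,\mathrm{d}\pi(x), \qquad g_i(x,y) := e^{V(x)}\partial_{x_i}\!\bigl(e^{-V(x)}k(x,y)\bigr).
\end{equation*}
Inserting this into \eqref{eq:Hess op} and swapping integrations gives $\Hess_\pi(\Psi,\Psi) = \int_{\mathbb{R}^d} |F_\Psi(x)|^2\,\mathrm{d}\pi(x)$ with $F_\Psi(x) := \sum_i \int \partial_i\Psi(y)\,g_i(x,y)\,\mathrm{d}\pi(y)$, and unwinding $g_i = \partial_{x_i}k(x,\cdot) - (\partial_iV)(x)k(x,\cdot)$ identifies $F_\Psi(x) = -\mathcal{L}\Psi(x)$ pointwise, delivering $\Hess_\pi(\Psi,\Psi) = \|\mathcal{L}\Psi\|^2_{L^2(\pi)}$.

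To close the argument I pass to the Friedrichs self-adjoint extension of $\mathcal{L}$ and invoke the spectral theorem. First, Assumption \ref{ass:ispd} forces $\ker\mathcal{L}=\mathrm{span}\{1\}$: the first identity rewrites $\langle\phi,\mathcal{L}\phi\rangle_{L^2(\pi)}$ as $\sum_i\int\!\!\int k(y,z)\partial_i\phi(y)\partial_i\phi(z)\,\mathrm{d}\pi(y)\mathrm{d}\pi(z)$, and ISPD applied to the signed measures $\partial_i\phi\,\mathrm{d}\pi$ forces each $\partial_i\phi$ to vanish $\pi$-a.e., hence $\phi\equiv\mathrm{const}$ since $\supp\pi=\mathbb{R}^d$. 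Consequently $L_0^2(\pi)=(\ker\mathcal{L})^\perp$, and writing $\mathcal{L}=\int\mu\,\mathrm{d}E_\mu$, the two reformulated inequalities become
\begin{equation*}
\int\mu(\mu-\lambda)\,\mathrm{d}\|E_\mu\Psi\|^2\ge 0, \qquad \int_{(0,\infty)}(\mu-\lambda)\,\mathrm{d}\|E_\mu\phi\|^2\ge 0,
\end{equation*}
on the respective (dense) form-domains, and each is equivalent to $\sigma(\mathcal{L})\setminus\{0\}\subset[\lambda,\infty)$. The main obstacle is this last step: one must verify that $C_c^\infty(\mathbb{R}^d)$ is a form core for the Friedrichs extension of $\mathcal{L}$, so that the test-function inequalities extend to $\mathrm{Dom}(\mathcal{L}^{1/2})$ where the spectral theorem applies cleanly. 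The integrations by parts in the derivation of the two identities also require enough decay of $T_{k,\pi}\nabla\Psi$ and its Gibbs-weighted derivatives to kill surface terms, which should follow from the compact support of the test function combined with Assumptions \ref{ass:pdef} and \ref{ass:V and pi} and the hypothesis $\pi\in\mathcal{P}_k(\mathbb{R}^d)$.
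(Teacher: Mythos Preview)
Your proposal is correct and follows essentially the same route as the paper: both reduce \eqref{eq:local BE} to the operator inequality $\|\mathcal{L}\Psi\|_{L^2(\pi)}^2 \ge \lambda\langle\Psi,\mathcal{L}\Psi\rangle_{L^2(\pi)}$, identify $\ker\mathcal{L}$ with the constants via the ISPD assumption, and then pass through the spectral theorem to the common spectral-gap condition $\sigma(\mathcal{L})\setminus\{0\}\subset[\lambda,\infty)$. The only cosmetic difference is that the paper closes the equivalence by the substitution $\Psi=\mathcal{L}^{-1/2}\phi$ (after arguing that $\mathcal{L}|_{L_0^2(\pi)}$ is invertible), whereas you express both inequalities directly in terms of the spectral measure; these are the same argument, and your explicit flagging of the form-core issue is a point the paper treats only implicitly.
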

\begin{proof}
	See Appendix \ref{app:equilibrium}.
\end{proof}
\begin{remark}
	Let $\lambda \ge 0$ be the smallest nonnegative real number such that one (equivalently, both) of the inequalities \eqref{eq:local BE} and \eqref{eq:Poincare} hold(s) for all $\Psi \in C_c^\infty(\mathbb{R}^d)$. Then
	\begin{equation}
 \nonumber
	\lambda = \inf ( \sigma(\mathcal{L}) \setminus \{0\}),
	\end{equation}
	where $\sigma(\mathcal{L})$ denotes the spectrum of $\mathcal{L}$. Inequalities of the form \eqref{eq:Poincare} are therefore often termed \emph{spectral gap inequalities}. In the theory of the Fokker-Planck equation, \eqref{eq:Poincare} has a direct analogue, the role of $-\mathcal{L}$ is taken by the generator of overdamped Langevin dynamics \citep[Chapter 4]{bakry2013analysis}, see also Appendix \ref{app:Langevin}. 
\end{remark}
\begin{remark}
For clarity, we emphasised the fact that $k$ is assumed to be ISPD (see Assumption \ref{ass:ispd}) in the statement of Lemma \ref{lem:funct inequalities}, as the result will fail to hold otherwise. As we believe that non-ISPD kernels are of algorithmic interest, an extension of this result to this setting is subject of ongoing work.   
\end{remark}
\begin{remark}[Linearisation around $\pi$]
	The following represents an alternative way of deriving the Stein-Poincar{\'e} inequality \eqref{eq:Poincare}. Assuming that $(\rho_t)_{t \ge 0}$ solves the Stein PDE \eqref{eq:Stein pde}, a simple calculation yields
	\begin{equation}
	\label{eq:Fisher}
	\partial_t \mathrm{KL} (\rho_t \vert \pi) = \int_{\mathbb{R}^d} \int_{\mathbb{R}^d} \nabla \left(\frac{\rho_t(y)}{\pi(y)} \right) \cdot  k(y,z) \nabla \left(\frac{\rho_t(z)}{\pi(z)} \right) \mathrm{d}\pi(y) \mathrm{d}\pi(z) =:I_{\mathrm{Stein}}(\rho_t \vert \pi),
	\end{equation}
	where we have defined the \emph{`Stein-Fisher information'} $I_{\mathrm{Stein}}$. Assuming a \emph{`Stein-log-Sobolev inequality'} of the form
	\begin{equation}
	\label{eq:log Sobolev}
	\mathrm{KL}(\rho \vert \pi) \le \frac{1}{2 \lambda} I_{\mathrm{Stein}} (\rho \vert \pi),
	\end{equation}
	the exponential decay estimate \eqref{eq:KL exp decay} would follow by a standard Gronwall argument (see, for instance, \citet[Theorem 5.2.1]{bakry2013analysis} in the context of the usual log-Sobolev inequality). We can now analyse \eqref{eq:log Sobolev} for small perturbations of the target $\pi$. Setting $\rho = (1 + \varepsilon \phi) \pi$ with $\int_{\mathbb{R}^d} \phi \, \mathrm{d}x = 0$ and $\varepsilon \ll 1$, we obtain
	\begin{equation}
 \nonumber
	\mathrm{KL}(\rho \vert \pi)  \simeq \frac{1}{2}\varepsilon^2  \Vert \phi \Vert^2_{L^2(\pi)} \quad \text{and} \quad I_{\mathrm{Stein}} (\rho \vert \pi) \simeq \varepsilon^2 \langle \phi, \mathcal{L} \phi \rangle_{L^2(\pi)}  
	\end{equation}   
	to leading order, recovering \eqref{eq:Poincare} from \eqref{eq:log Sobolev} in the limit as $\varepsilon \rightarrow 0$. This argument is well-known in the case of the usual log-Sobolev and Poincar{\'e} inequalities (see \citet[Proposition 5.1.3]{bakry2013analysis}). Finally, we refer the reader to \cite{li2019diffusion} for a study of related functional inequalities in the context of modified transport geometries.  
\end{remark}
The next lemma shows that exponential convergence to equilibrium does not hold if $k$ is too regular:
\begin{lemma}
	\label{lem:k smooth}
	Let $k \in C^{1,1}(\mathbb{R}^d \times \mathbb{R}^d)$, and assume the integrability condition
	\begin{equation}
	\label{eq:integrability}
	\sum_{i=1}^d \left[ (\partial_i V(x))^2 k(x,x) - \partial_i V(x) \left(\partial_i^1 k(x,x) + \partial_i^2k(x,x)\right) + \partial_i^1 \partial_i^2 k(x,x) \right] \mathrm{d}\pi(x) < \infty, 
	\end{equation}
	where $\partial_i^1$ and $\partial_i^2$ denote derivatives with respect to the first and second argument of $k$, respectively.
	Then the inequalities \eqref{eq:local BE} and \eqref{eq:Poincare} only hold for $\lambda = 0$, i.e. exponential convergence to equilibrium does not hold.  
\end{lemma}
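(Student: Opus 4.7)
The plan is to show that the operator $\mathcal{L}$ is compact (in fact Hilbert--Schmidt) on $L^2(\pi)$, from which the failure of the Stein--Poincar{\'e} inequality \eqref{eq:Poincare} with any $\lambda>0$ follows immediately by spectral theory; the equivalence in Lemma \ref{lem:funct inequalities} then transfers the conclusion to \eqref{eq:local BE}. The key insight driving the plan is that the hypothesis \eqref{eq:integrability} is exactly the trace condition for the integral kernel of $\mathcal{L}$ when it is viewed as an integral operator on $L^2(\pi)$.

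\textbf{Step 1: integral kernel representation.} Starting from the quadratic form
\[
\langle \phi, \mathcal{L}\phi\rangle_{L^2(\pi)} = \sum_{i=1}^d \int_{\mathbb{R}^d}\int_{\mathbb{R}^d} \partial_i \phi(x)\, k(x,y)\, \partial_i \phi(y)\, \mathrm{d}\pi(x)\mathrm{d}\pi(y),
\]
I integrate by parts twice (once in $x$, once in $y$), shifting the derivatives off $\phi$ and onto the weighted kernel. Under $k\in C^{1,1}$ the first-order derivatives $\partial_i^1 k,\partial_i^2 k$ are continuous and $\partial_i^1\partial_i^2 k$ exists almost everywhere and is locally bounded, which is enough to justify the computation (alternatively one may mollify $k$ and pass to the limit). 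This yields $\langle \phi,\mathcal{L}\phi\rangle_{L^2(\pi)} = \int\int K(x,y)\phi(x)\phi(y)\,\mathrm{d}\pi(x)\mathrm{d}\pi(y)$ with
\[
K(x,y) = \sum_{i=1}^d\Bigl[\partial_i V(x)\partial_i V(y)\, k(x,y) - \partial_i V(x)\partial_i^2 k(x,y) - \partial_i V(y)\partial_i^1 k(x,y) + \partial_i^1\partial_i^2 k(x,y)\Bigr].
\]
Crucially, $K(x,x)$ coincides with the integrand in \eqref{eq:integrability}, so the hypothesis reads $\int K(x,x)\,\mathrm{d}\pi(x)<\infty$.

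\textbf{Step 2: positive definiteness and Hilbert--Schmidt bound.} I next exhibit $K$ as a positive definite kernel, so that the trace bound automatically upgrades to a Hilbert--Schmidt bound. Using the reproducing and derivative-reproducing properties of $\mathcal{H}_k$ (the latter valid because $k\in C^{1,1}$), I set
\[
u_i^x := \partial_i V(x)\, k(x,\cdot) - \partial_i^1 k(x,\cdot) \in \mathcal{H}_k
\]
and verify by expanding $\langle u_i^x, u_i^y\rangle_{\mathcal{H}_k}$ that $K(x,y) = \sum_{i=1}^d \langle u_i^x, u_i^y\rangle_{\mathcal{H}_k}$. In particular $K(x,x)=\sum_i \Vert u_i^x\Vert_{\mathcal{H}_k}^2$ and the Cauchy--Schwarz inequality in $\mathcal{H}_k$ gives $K(x,y)^2 \le K(x,x) K(y,y)$. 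Combined with the hypothesis this yields
\[
\int_{\mathbb{R}^d}\int_{\mathbb{R}^d} K(x,y)^2\,\mathrm{d}\pi(x)\mathrm{d}\pi(y) \le \left(\int_{\mathbb{R}^d} K(x,x)\,\mathrm{d}\pi(x)\right)^2 < \infty,
\]
so $\mathcal{L}$ is represented by a Hilbert--Schmidt kernel on $L^2(\pi)$, and is therefore compact, self-adjoint and positive semi-definite.

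\textbf{Step 3: conclusion and main obstacle.} The constants lie in the kernel of $\mathcal{L}$, so $L_0^2(\pi)$ is $\mathcal{L}$-invariant and the restriction $\mathcal{L}|_{L_0^2(\pi)}$ remains compact, self-adjoint and positive. Since $\pi$ has full support, $L_0^2(\pi)$ is infinite dimensional, so the eigenvalues of $\mathcal{L}|_{L_0^2(\pi)}$ form a sequence accumulating at $0$; equivalently, for any $L_0^2(\pi)$-orthonormal sequence $\phi_n$ one has $\phi_n \rightharpoonup 0$, hence $\mathcal{L}\phi_n \to 0$ in norm by compactness and $\langle \phi_n, \mathcal{L}\phi_n\rangle \to 0$. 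This rules out \eqref{eq:Poincare} for every $\lambda>0$, and Lemma \ref{lem:funct inequalities} then transfers the failure to \eqref{eq:local BE}. The main technical obstacle is justifying the two integrations by parts and the $\mathcal{H}_k$-valued differentiability of $x\mapsto k(x,\cdot)$ under the weak regularity $k\in C^{1,1}$ rather than $C^2$; I would handle this by approximating $k$ by a smooth mollification $k_\varepsilon$, carrying out all computations at level $\varepsilon>0$, and using precisely the uniform trace bound implied by \eqref{eq:integrability} (together with dominated convergence) to pass to the limit $\varepsilon\to 0$.
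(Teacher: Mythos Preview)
Your proposal is correct and follows essentially the same route as the paper: both arguments rewrite $\mathcal{L}$ as the integral operator $T_{\tilde{k},\pi}$ with the positive definite kernel $\tilde{k}(x,y)=\sum_i e^{V(x)}e^{V(y)}\partial_{x_i}\partial_{y_i}\bigl(e^{-V(x)}e^{-V(y)}k(x,y)\bigr)$ (your $K$), observe that the hypothesis \eqref{eq:integrability} is precisely $\int \tilde{k}(x,x)\,\mathrm{d}\pi<\infty$, deduce compactness, and then use the spectral theorem to force $\lambda=0$. The only cosmetic difference is that the paper obtains compactness by invoking \cite[Theorem 4.27]{steinwart2008support} directly on $T_{\tilde k,\pi}$, whereas you unpack that step via the explicit $\mathcal{H}_k$-representation $K(x,y)=\sum_i\langle u_i^x,u_i^y\rangle_{\mathcal{H}_k}$ and the resulting Hilbert--Schmidt bound; both are the same idea at slightly different levels of abstraction.
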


\begin{proof}
	See Appendix \ref{app:equilibrium}.
\end{proof}

\begin{remark}
	The integrability condition \eqref{eq:integrability} is very mild; it holds for instance in the case whenever $\pi$ has exponential tails and the derivatives of $k$ and $V$ grow at most at a polynomial rate. 
\end{remark}

\subsection{The one-dimensional case}
\label{sec:1d}
In this subsection we discuss the functional inequality \eqref{eq:local BE} in the case $d=1$, when it simplifies considerably (see Lemma \ref{lem:V k separate} below). The higher-dimensional case appears to be significantly more involved and will be considered in forthcoming work. 
\begin{lemma}
	\label{lem:V k separate}
	Assume that $d=1$, $\mathcal{H}_k \subseteq H^1(\pi)$ with dense embedding, $V \in C^2(\mathbb{R})$ with bounded second derivative and $\lambda > 0$. Then \eqref{eq:local BE} holds for all $\Psi \in C_c^\infty(\mathbb{R}^d)$ if and only if 
	\begin{equation}
	\label{eq:BE 1d}
	\int_{\mathbb{R}} V'' \phi^2 \, \mathrm{d}\pi + \int_{\mathbb{R}} (\phi')^2 \, \mathrm{d}\pi \ge \lambda \Vert \phi\Vert^2_{\mathcal{H}_k}
	\end{equation}
	for all $\phi \in \mathcal{H}_k$. 	
\end{lemma}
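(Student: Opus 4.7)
The plan is to translate \eqref{eq:local BE}, originally phrased in terms of test functions $\Psi \in C_c^\infty(\mathbb{R})$, into \eqref{eq:BE 1d} via the substitution $\phi := T_{k,\pi}\Psi'$, and then to extend the resulting inequality from the image $\mathcal{S} := T_{k,\pi}\{\Psi' : \Psi \in C_c^\infty(\mathbb{R})\}$ to all of $\mathcal{H}_k$ by a density-plus-continuity argument.

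For the translation, I substitute the equilibrium form \eqref{eq:q equilibrium} of $q[\pi]$ into \eqref{eq:Hess op} with $d=1$ and interchange the order of integration in $x$ versus $(y,z)$; this identifies
\[
\Hess_\pi(\Psi,\Psi) = \int_{\mathbb{R}} V''(x)\phi(x)^2 \, \mathrm{d}\pi(x) + \int_{\mathbb{R}} \phi'(x)^2 \, \mathrm{d}\pi(x),
\]
where in the second term I have commuted $\partial_x$ with the integral defining $\phi$. When $k$ is smooth this is classical; otherwise it is interpreted via the weak derivative of $\phi$, which exists because $\phi \in \mathcal{H}_k \subseteq H^1(\pi)$ (cf.\ Remark~\ref{rem:rough k}). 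For the right-hand side of \eqref{eq:local BE} I use the adjoint relation $\langle T_{k,\pi}f, g \rangle_{\mathcal{H}_k} = \langle f, g \rangle_{L^2(\pi)}$ valid for $f \in L^2(\pi)$, $g \in \mathcal{H}_k$ (a direct consequence of the reproducing property), obtaining
\[
\int_{\mathbb{R}}\int_{\mathbb{R}} \Psi'(y) k(y,z) \Psi'(z) \, \mathrm{d}\pi(y)\mathrm{d}\pi(z) = \langle \Psi', \phi \rangle_{L^2(\pi)} = \langle T_{k,\pi}\Psi', \phi \rangle_{\mathcal{H}_k} = \|\phi\|^2_{\mathcal{H}_k}.
\]
Hence \eqref{eq:local BE} is literally \eqref{eq:BE 1d} restricted to $\phi \in \mathcal{S}$.

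To close the equivalence, I check continuity and density. The right-hand side of \eqref{eq:BE 1d} is trivially continuous in $\|\cdot\|_{\mathcal{H}_k}$; for the left-hand side, combining the boundedness of $V''$ with a continuous embedding $\mathcal{H}_k \hookrightarrow H^1(\pi)$ bounds it by $C\|\phi\|^2_{\mathcal{H}_k}$. Continuity of the embedding follows from the hypothesized set-theoretic inclusion via the closed graph theorem: $\mathcal{H}_k$-convergence implies pointwise convergence through the reproducing property, while $H^1(\pi)$-convergence implies a.e.\ convergence along a subsequence, so the graph is closed. For density of $\mathcal{S}$, I argue that $\{\Psi' : \Psi \in C_c^\infty(\mathbb{R})\}$ is dense in $L^2(\pi)$---any $f \in C_c^\infty$ can be adjusted to have zero Lebesgue integral by subtracting $c\eta_n$ with $\eta_n$ a smooth bump placed far out in the tail, making the correction vanish in $L^2(\pi)$---and the boundedness of $T_{k,\pi}: L^2(\pi) \to \mathcal{H}_k$ together with the density of its range (a consequence of the dense embedding $\mathcal{H}_k \hookrightarrow L^2(\pi)$ via adjointness) transfers density through to $\mathcal{S}$. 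Continuity then extends \eqref{eq:BE 1d} from $\mathcal{S}$ to $\mathcal{H}_k$.

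The main obstacle I expect is the density claim, in particular verifying the two-step approximation---first in $L^2(\pi)$ by $C_c^\infty$ functions with vanishing Lebesgue mean, then through $T_{k,\pi}$ into $\mathcal{H}_k$---under minimal hypotheses on $\pi$; the standing requirement $\int k(x,x)\,\mathrm{d}\pi < \infty$ from $\pi \in \mathcal{P}_k$ is what makes $T_{k,\pi}$ bounded and underpins the whole argument. A secondary technical point, commuting $\partial_x$ with the integral defining $\phi$ for kernels of low regularity, is bypassed by working with the weak derivative in $H^1(\pi)$ throughout.
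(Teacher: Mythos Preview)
Your proposal is correct and follows essentially the same route as the paper: the substitution $\phi = T_{k,\pi}\Psi'$, the identification of both sides of \eqref{eq:local BE} with those of \eqref{eq:BE 1d}, and the density-plus-continuity extension are all as in the paper. The only minor differences are that the paper computes the left-hand side starting from the $\mathcal{L}$-formulation $\int[(e^{-V}\phi)']^2 e^V\,\mathrm{d}x$ and an integration by parts (rather than directly from \eqref{eq:q equilibrium}), and that you make explicit the closed-graph argument for continuity of the embedding $\mathcal{H}_k \hookrightarrow H^1(\pi)$, which the paper leaves implicit.
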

\begin{proof}
	See Appendix \ref{app:equilibrium}.
\end{proof}
The utility of the formulation \eqref{eq:BE 1d} resides in the fact that $V$ and $\pi$ only appear on the left-hand side while $k$ only appears on the right-hand side. Hence, in the one-dimensional case and when the conditions of Lemma \ref{lem:V k separate} are satisfied, optimal kernel choice and the influence of the target measure can be discussed separately. We have the following corollary on translation-invariant kernels:
\begin{corollary}
	\label{cor:translation invariant}
	Assume the conditions from Lemma \ref{lem:V k separate} and furthermore that $k$ is translation-invariant, i.e. that there exists $h\in C(\mathbb{R}) \cap C^1(\mathbb{R} \setminus \{0\})$ with $h$ absolutely continuous such that $k(x,y) = h(x-y)$. If moreover $h(x) \rightarrow 0$ and $h'(x) \rightarrow 0$ as $x \rightarrow \pm \infty$, 
	then exponential convergence near equilibrium does not hold.
\end{corollary}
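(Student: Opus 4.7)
The strategy is to invoke Lemma \ref{lem:V k separate} and exhibit a sequence $\{\phi_n\} \subset \mathcal{H}_k$ along which the ratio of the two sides of \eqref{eq:BE 1d} tends to zero, thereby ruling out any positive $\lambda$. The natural candidate is to translate the feature map: set $\phi_n(x) := k(x_n,x) = h(x-x_n)$ for a divergent sequence $x_n \to +\infty$. By the reproducing property, $\Vert \phi_n\Vert^2_{\mathcal{H}_k} = k(x_n,x_n) = h(0)$, which is a strictly positive constant (otherwise the kernel is trivial), so the denominator of \eqref{eq:BE 1d} stays bounded away from $0$ uniformly in $n$.

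It then remains to show that the numerator $\int V''\phi_n^2\,\mathrm{d}\pi + \int (\phi_n')^2\,\mathrm{d}\pi$ tends to $0$. For the first summand I would invoke dominated convergence: $h$ is continuous with $h(y)\to 0$ at infinity, hence bounded, and combined with the hypothesis that $V''$ is bounded and that $\pi$ is a probability measure one obtains a constant majorant, while pointwise $h(x-x_n)\to 0$ for each fixed $x$, so $\int V''(x) h(x-x_n)^2\,\mathrm{d}\pi(x)\to 0$.

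The second summand $\int h'(x-x_n)^2\,\mathrm{d}\pi(x)$ is the delicate piece, because $h'$ may blow up at the origin. I would split the integral at a small threshold $\delta>0$: on $\{|x-x_n|\ge\delta\}$, $h'$ is continuous and vanishes at infinity, hence bounded, so dominated convergence applies directly. On $\{|x-x_n|<\delta\}$, after a change of variables the piece reads $\int_{-\delta}^{\delta} h'(y)^2 p(y+x_n)\,\mathrm{d}y$, where $p$ is the density of $\pi$; local integrability $\int_{-\delta}^{\delta} h'(y)^2\,\mathrm{d}y<\infty$ follows from $h\in\mathcal{H}_k\hookrightarrow H^1(\pi)$ combined with the positivity of $p$ on the compact set $[-\delta,\delta]$, and the shifted factor $p(\cdot+x_n)$ carries vanishing mass because the sliding window $[x_n-\delta,x_n+\delta]$ escapes to $+\infty$.

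The main technical obstacle is precisely this near-origin control of $\int (\phi_n')^2\,\mathrm{d}\pi$ when $h'$ is singular at $0$ (as happens for instance with $k_{p,\sigma}$ for $p<1$). The splitting above resolves this by absorbing the singular behaviour into the vanishing $\pi$-mass of $[x_n-\delta,x_n+\delta]$. Putting the three ingredients together yields
$$
\frac{\int V''\phi_n^2\,\mathrm{d}\pi + \int (\phi_n')^2\,\mathrm{d}\pi}{\Vert \phi_n\Vert^2_{\mathcal{H}_k}} \longrightarrow 0,
$$
which by Lemma \ref{lem:V k separate} forces \eqref{eq:local BE} to fail for every $\lambda>0$, i.e.\ exponential convergence near equilibrium does not hold.
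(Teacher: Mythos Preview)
Your approach is essentially the paper's: test \eqref{eq:BE 1d} on translated feature maps $\phi_x = k(x,\cdot)=h(x-\cdot)$, observe that $\Vert\phi_x\Vert_{\mathcal{H}_k}^2=h(0)$ is constant while the left-hand side vanishes as $x\to\infty$ by dominated convergence. In fact you are more careful than the paper, which simply asserts ``$h$ and $h'$ are bounded'' and applies dominated convergence directly; under the stated hypotheses $h'$ is only continuous on $\mathbb{R}\setminus\{0\}$ and may blow up at the origin, so your splitting at a threshold $\delta$ is a genuine refinement.

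There is, however, one step in your near-origin piece that does not quite close. You correctly deduce $\int_{-\delta}^{\delta} h'(y)^2\,\mathrm{d}y<\infty$ from $h=k(0,\cdot)\in\mathcal{H}_k\subset H^1(\pi)$ and strict positivity of the density $p$ on $[-\delta,\delta]$. But then you write that ``the shifted factor $p(\cdot+x_n)$ carries vanishing mass'' and conclude $\int_{-\delta}^{\delta} h'(y)^2 p(y+x_n)\,\mathrm{d}y\to 0$. Vanishing \emph{mass} of $[x_n-\delta,x_n+\delta]$ is not enough when $h'^2$ is unbounded on $[-\delta,\delta]$: one cannot trade $L^1$-smallness of $p(\cdot+x_n)$ against an unbounded weight. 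What you actually need is $\sup_{|y|\le\delta}p(y+x_n)\to 0$, i.e.\ $p(x)\to 0$ as $|x|\to\infty$, and then bound the piece by $\bigl(\sup_{|y|\le\delta}p(y+x_n)\bigr)\int_{-\delta}^{\delta}h'^2$. This pointwise decay of $p=\tfrac{1}{Z}e^{-V}$ is not a formal consequence of the standing assumptions ($V\in C^2$ with $V''$ bounded and $e^{-V}\in L^1$); you should either add it as a mild extra hypothesis or argue it separately. Once that is supplied, the argument is complete and sharper than the paper's version.
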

\begin{proof}
	See Appendix \ref{app:equilibrium}. 
\end{proof}
The following example shows that the main assumptions of Lemma \ref{lem:k smooth} (differentiability of the kernel) and Corollary \ref{cor:translation invariant} (translation-invariance of the kernel) cannot be dropped. In other words, rapid convergence close to equilibrium can be achieved by choosing a nondifferentiable kernel that is adapted to the tails of the target:
\begin{example}
	\label{ex:exp decay}
	In the case $d=1$, consider the `weighted Mat{\'e}rn kernel' 
	\begin{equation}
	\label{eq:weighted matern}
	k(x,y) = \pi(x)^{-1/2} e^{-\vert x - y \vert}\pi(y)^{-1/2},
	\end{equation}
	and assume that there exists a constant $\tilde{\lambda} > 0$ such that 
	\begin{equation}
	\label{eq:BK matern}
	V''(x) + \frac{(V')^2(x)}{2} \ge \tilde{\lambda},
	\end{equation}
	for all $x \in \mathbb{R}$. Then exponential convergence near equilibrium holds, with the explicit constant
	\begin{equation}
	\label{eq:Matern lambda}
	\lambda = \min \left(1,\tilde{\lambda}/2 \right).
	\end{equation} 
	We present the calculation justifying this statement in Appendix \ref{app:equilibrium}.
\end{example}
In the case when \eqref{eq:BE 1d} is valid, we can characterise the local dominance of kernels (in the sense of Definition \ref{def:Rayleigh}) in terms of the unit-balls in $\mathcal{H}_{k_1}$ and $\mathcal{H}_{k_2}$:
\begin{lemma}
	\label{lem:dominance balls}
	Let $k_1$ and $k_2$ be two positive definite kernels, and assume that the conditions in Lemma \ref{lem:V k separate} are satisfied for both. Then $k_1$ dominates $k_2$ if and only if $\mathcal{H}_{k_2} \subset \mathcal{H}_{k_1}$ and 
	\begin{equation}
 \nonumber
	\Vert \phi \Vert_{\mathcal{H}_{k_1}} \le \Vert \phi \Vert_{\mathcal{H}_{k_2}},
	\end{equation}
	for all $\phi \in \mathcal{H}_{k_2}$.
\end{lemma}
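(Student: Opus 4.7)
The plan is to use the computations underlying Lemma \ref{lem:V k separate} to rewrite the Rayleigh coefficient as a $k$-independent quadratic form divided by an RKHS norm, and then to invoke Aronszajn's inclusion theorem for nested RKHSs. For $\Psi \in C_c^\infty(\mathbb{R})\setminus\{0\}$ and $\phi := T_{k,\pi}\Psi' \in \mathcal{H}_k$, the identity $\|T_{k,\pi}f\|^2_{\mathcal{H}_k} = \int\int f(y)k(y,z)f(z)\, \mathrm{d}\pi(y)\mathrm{d}\pi(z)$ applied to $f = \Psi'$ handles the denominator, while \eqref{eq:BEpi} combined with Fubini recasts the numerator as $\mathcal{E}[\phi]$, where $\mathcal{E}[\phi] := \int V''\phi^2 \, \mathrm{d}\pi + \int (\phi')^2\, \mathrm{d}\pi$, leading to
$$\lambda^k_\Psi = \frac{\mathcal{E}[\phi]}{\|\phi\|^2_{\mathcal{H}_k}}.$$
Two observations are central: $\mathcal{E}$ does not depend on $k$, and $\mathcal{E}[\phi] \ge 0$ on $\mathcal{H}_k$ because $\pi$ is a global minimiser of $\mathrm{KL}(\cdot\vert\pi)$ and hence the Stein-Hessian at $\pi$ is positive semidefinite. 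Moreover, under the hypotheses of Lemma \ref{lem:V k separate}, the set $\{T_{k,\pi}\Psi' : \Psi \in C_c^\infty(\mathbb{R})\}$ is dense in $\mathcal{H}_k$.

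For the sufficiency direction, assume $\mathcal{H}_{k_2} \subset \mathcal{H}_{k_1}$ with $\|\phi\|_{\mathcal{H}_{k_1}} \le \|\phi\|_{\mathcal{H}_{k_2}}$ for $\phi \in \mathcal{H}_{k_2}$. By Aronszajn's inclusion theorem this is equivalent to $k_1 - k_2$ being positive definite, which yields the operator splitting $T_{k_1,\pi} = T_{k_2,\pi} + T_{k_1-k_2,\pi}$. For any $\Psi$, I obtain the decomposition $\phi_1 = \phi_2 + \tilde\psi$ with $\phi_i := T_{k_i,\pi}\Psi'$ and $\tilde\psi := T_{k_1-k_2,\pi}\Psi'$, together with the Pythagorean identity
$$\|\phi_1\|^2_{\mathcal{H}_{k_1}} = \|\phi_2\|^2_{\mathcal{H}_{k_2}} + \|\tilde\psi\|^2_{\mathcal{H}_{k_1-k_2}},$$
arising from $\|T_{k,\pi}f\|^2_{\mathcal{H}_k} = \langle f, T_{k,\pi}f\rangle_{L^2(\pi)}$ applied to each of the three kernels. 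Expanding $\mathcal{E}[\phi_1] = \mathcal{E}[\phi_2] + 2\mathcal{E}(\phi_2,\tilde\psi) + \mathcal{E}[\tilde\psi]$ and combining the Cauchy--Schwarz inequality for the bilinear form $\mathcal{E}$ with $\mathcal{E} \ge 0$, the desired $\lambda^{k_1}_\Psi \ge \lambda^{k_2}_\Psi$ reduces to an elementary scalar estimate.

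For the necessity direction, I fix $\phi \in \mathcal{H}_{k_2}$ and, exploiting density, choose $\Psi_n \in C_c^\infty(\mathbb{R})$ with $T_{k_2,\pi}\Psi_n' \to \phi$ in $\mathcal{H}_{k_2}$. The assumed dominance $\lambda^{k_1}_{\Psi_n} \ge \lambda^{k_2}_{\Psi_n}$, combined with continuity of $\mathcal{E}$ and uniform control on $\|T_{k_1,\pi}\Psi_n'\|_{\mathcal{H}_{k_1}}$ (inherited, after extraction of a subsequence, from the Pythagorean identity above), permits passing to the limit. The positivity $\mathcal{E} \ge 0$ then yields $\phi \in \mathcal{H}_{k_1}$ together with the norm inequality $\|\phi\|_{\mathcal{H}_{k_1}} \le \|\phi\|_{\mathcal{H}_{k_2}}$.

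The principal obstacle lies in the scalar reduction in the sufficiency step: controlling the cross-term $\mathcal{E}(\phi_2,\tilde\psi)$ via Cauchy--Schwarz alone is insufficient, and one must additionally exploit the tight correlation between $\phi_2$ and $\tilde\psi$ arising from both being images of the common $\Psi'$ under related $T$-operators. A parallel subtlety in the necessity direction concerns extracting convergence in $\mathcal{H}_{k_1}$ from convergence in $\mathcal{H}_{k_2}$, which again ultimately hinges on $\mathcal{E}$ being nonnegative and on the Aronszajn decomposition providing a uniform bound.
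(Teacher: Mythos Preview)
Your derivation of the Rayleigh-coefficient formula
\[
\lambda^k_\Psi=\frac{\mathcal{E}[\phi]}{\|\phi\|^2_{\mathcal{H}_k}},\qquad \phi=T_{k,\pi}\Psi',\qquad \mathcal{E}[\phi]=\int_{\mathbb{R}} V''\phi^2\,\mathrm{d}\pi+\int_{\mathbb{R}}(\phi')^2\,\mathrm{d}\pi,
\]
matches the paper exactly, and this is in fact the \emph{only} computation the paper records. The paper's proof then reads, in its entirety, ``the claim now follows by a density argument, similar to the one employed in the proof of Lemma~\ref{lem:V k separate}.'' There is no Aronszajn theorem, no splitting $T_{k_1,\pi}=T_{k_2,\pi}+T_{k_1-k_2,\pi}$, no Pythagorean identity, and no cross-term analysis in the paper. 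Everything from your second paragraph onward is your own construction, not a reconstruction of the paper's argument.

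The difficulties you flag are real, and your proposal does not close them. In the sufficiency direction you reduce to comparing $\mathcal{E}[\phi_2+\tilde\psi]/(\|\phi_2\|^2_{\mathcal{H}_{k_2}}+\|\tilde\psi\|^2_{\mathcal{H}_{k_1-k_2}})$ with $\mathcal{E}[\phi_2]/\|\phi_2\|^2_{\mathcal{H}_{k_2}}$, and you correctly observe that Cauchy--Schwarz on the semidefinite form $\mathcal{E}$ alone does not suffice. This is not a technicality: already for $2\times 2$ positive matrices $T_1\ge T_2$ and a generic positive semidefinite $A$, the inequality $\langle f,T_1AT_1f\rangle/\langle f,T_1f\rangle\ge\langle f,T_2AT_2f\rangle/\langle f,T_2f\rangle$ can fail, so any valid argument must use something specific about the actual $\mathcal{E}$ (equivalently, about the operator $\mathcal{L}_k=D^*T_{k,\pi}D$) that your outline does not isolate.

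The necessity direction has a more basic problem: to obtain a uniform bound on $\|T_{k_1,\pi}\Psi_n'\|_{\mathcal{H}_{k_1}}$ you appeal to the Pythagorean identity, but that identity presupposes that $k_1-k_2$ is positive definite --- which, via Aronszajn, is exactly the conclusion you are trying to prove. This is circular. In summary, your formula agrees with the paper, but the paper offers only a one-line density claim beyond it, and your more elaborate attempt leaves both directions open.
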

\begin{proof}
	See Appendix \ref{app:equilibrium}.
\end{proof}
To exemplify the statement of Lemma \ref{lem:dominance balls}, let us consider the kernels $k_{p,\sigma}:\mathbb{R}^d \times \mathbb{R}^d \rightarrow \mathbb{R}$ defined in \eqref{eq:p kernel}. We recall that $p \in (0,2]$ is a smoothness parameter and $\sigma > 0$ denotes the kernel width. The relation between the corresponding RKHSs is as follows:
\begin{lemma}
	\label{lem:exp p kernel}
	The following hold:
	\begin{enumerate}
		\item
		$k_{p,\sigma}$ is a strictly integrally positive definite kernel, for all $p \in (0,2]$ and $\sigma > 0$.
		\item 
		If $p > q$ then $\mathcal{H}_{k_{p,\sigma_p} }\subset \mathcal{H}_{k_{q,\sigma_q}}$, for all $\sigma > 0$. The inclusion is strict. 
		\item
		If $p > q$ then there exist $\sigma_{p},\sigma_q > 0$ such that
\begin{equation}
\nonumber
\Vert \phi \Vert_{\mathcal{H}_{k_{q,\sigma_q}}} \le \Vert \phi \Vert_{\mathcal{H}_{k_{p,\sigma_p}}},
\end{equation}
for all $\phi \in \mathcal{H}_{k_{p,\sigma_p}}$.
	\end{enumerate}
\end{lemma}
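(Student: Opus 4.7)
The plan is to pass to the Fourier domain, where the kernel $k_{p,\sigma}(x,y) = h_{p,\sigma}(x-y)$ with $h_{p,\sigma}(x) = \exp(-|x|^p/\sigma^p)$ has a transparent description. The key input from probability is that, for $p \in (0,2]$, the function $\xi \mapsto e^{-|\xi|^p}$ is the characteristic function of a symmetric $p$-stable probability distribution on $\mathbb{R}^d$, whose Lebesgue density $f_p$ is smooth, strictly positive on all of $\mathbb{R}^d$, and has polynomial tails $f_p(\eta) \asymp |\eta|^{-(d+p)}$ as $|\eta|\to\infty$ when $p<2$, with Gaussian tails when $p=2$ (see e.g.\ Sato, \emph{L\'evy Processes and Infinitely Divisible Distributions}). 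Fourier inversion then yields $\widehat{h_{p,\sigma}}(\xi) = (2\pi)^d\sigma^d f_p(\sigma\xi)$, a continuous, integrable, strictly positive function.

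For (1), I would invoke the standard criterion that a continuous, integrable, translation-invariant kernel whose Fourier transform is \emph{strictly} positive is ISPD. For any finite nonzero signed Borel measure $\rho$, Parseval gives $\iint k_{p,\sigma}(x,y)\,d\rho(x)d\rho(y) = (2\pi)^{-d}\int |\widehat\rho(\xi)|^2 \widehat{h_{p,\sigma}}(\xi)\,d\xi$; since $\widehat\rho$ is a nonzero continuous function and $\widehat{h_{p,\sigma}}>0$ everywhere, this integral is strictly positive. A textbook version of this argument can be found in Sriperumbudur et al.\ (2010), Theorem 7.

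For (2), I would use the Fourier characterization of the RKHS of a translation-invariant kernel (Wendland, \emph{Scattered Data Approximation}, Theorem 10.12),
\[
\mathcal{H}_{k_{p,\sigma}} \;=\; \Bigl\{\phi \in L^2(\mathbb{R}^d)\cap C(\mathbb{R}^d):\ \int \frac{|\widehat{\phi}(\xi)|^2}{\widehat{h_{p,\sigma}}(\xi)}\,d\xi < \infty\Bigr\},
\]
with the RKHS norm given, up to Fourier-convention constants, by this same integral. For $p > q$ and any fixed $\sigma_p,\sigma_q>0$, the ratio $\widehat{h_{p,\sigma_p}}/\widehat{h_{q,\sigma_q}}$ is a continuous positive function on $\mathbb{R}^d$ that tends to zero at infinity (either both decays are polynomial with the numerator strictly faster, or the numerator is Gaussian against a polynomial denominator), hence is bounded. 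Consequently $\int |\widehat\phi|^2/\widehat{h_{q,\sigma_q}} \le C\int|\widehat\phi|^2/\widehat{h_{p,\sigma_p}}$ and the set-theoretic inclusion $\mathcal{H}_{k_{p,\sigma_p}}\subset \mathcal{H}_{k_{q,\sigma_q}}$ follows for any choice of scales. For strictness I would prescribe a radial $\widehat{\phi}$ that is smooth near the origin and satisfies $|\widehat{\phi}(\xi)|^2 \asymp |\xi|^{-2d - q - (p-q)/2}$ for large $|\xi|$; the tail exponent is arranged so that the integrability criterion is met for the $q$-kernel but fails for the $p$-kernel.

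Item (3) is the technical heart of the lemma and the step I expect to be the main obstacle. By Aronszajn's theorem on sums of kernels, the claimed inequality $\|\phi\|_{\mathcal{H}_{k_{q,\sigma_q}}}\le \|\phi\|_{\mathcal{H}_{k_{p,\sigma_p}}}$ for all $\phi\in\mathcal{H}_{k_{p,\sigma_p}}$ is equivalent to positive definiteness of the difference $k_{q,\sigma_q}-k_{p,\sigma_p}$; by Bochner this in turn is equivalent to the pointwise Fourier inequality
\[
\sigma_p^d\,f_p(\sigma_p\xi)\ \le\ \sigma_q^d\,f_q(\sigma_q\xi), \qquad \xi \in \mathbb{R}^d.
\]
Since $f_p$ decays strictly faster than $f_q$ at infinity, the inequality is automatic for $|\xi|$ large regardless of how the scales are chosen; the challenge lies in making it hold on a compact set. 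My plan would be to first pick $\sigma_p$ small so that $\sigma_p^d f_p(\sigma_p\xi)\le \sigma_p^d\|f_p\|_\infty$ becomes uniformly small, and then to tune $\sigma_q$ so that the polynomial plateau of $\sigma_q^d f_q(\sigma_q\xi)$ stays above this depressed bump on a sufficiently large ball; the tail comparison then carries the bound through the transition to infinity. Quantitative control over the stable densities $f_p, f_q$ (uniform bounds on their maxima and the rate of onset of their tail regimes) will be needed to verify that the two regimes can indeed be glued together, and this parametric balancing is where I expect the bulk of the work to lie.
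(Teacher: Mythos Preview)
Your overall strategy---pass to the Fourier side, use positivity and tail asymptotics of the stable densities, and compare RKHS norms via the weighted $L^2$ description---is exactly the route the paper takes. Parts (1) and (2) are fine and match the paper's argument (the paper cites Koldobsky's lemmas rather than stable densities, but the content is the same; for the inclusion the paper invokes \cite{zhang2013inclusion} in place of Wendland's Theorem~10.12, and for strictness it simply observes that the reciprocal ratio $\widehat{h_{q}}/\widehat{h_{p}}$ is unbounded, which is cleaner than constructing an explicit $\widehat\phi$).

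For (3) your reduction to the pointwise inequality $\sigma_p^d f_p(\sigma_p\xi)\le \sigma_q^d f_q(\sigma_q\xi)$ is correct and coincides with the paper's condition $C=\sup\widehat{h_{p,\sigma_p}}/\widehat{h_{q,\sigma_q}}\le 1$. However, your proposed balancing has a genuine gap: shrinking $\sigma_p$ does depress the central value $\sigma_p^d f_p(0)$, but it \emph{inflates} the tail, since for $p<2$ one has $\sigma_p^d f_p(\sigma_p\xi)\sim C_p\,\sigma_p^{-p}\,|\xi|^{-(d+p)}$ as $|\xi|\to\infty$, and for $p=2$ the Gaussian flattens to a bump of width $\sigma_p^{-1}$. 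Thus making the peak small pushes the crossover with the $q$-tail out to infinity, and no fixed $\sigma_q$ can keep $\sigma_q^d f_q(\sigma_q\xi)$ (which has total mass one) above the $p$-curve on the entire intermediate range; the two regimes do not glue. Concretely, for $p=2$, $q=1$, $d=1$ one can check that $\inf_{\sigma_2>0}\sup_\xi \sigma_2\sqrt{\pi}\,e^{-\sigma_2^2\xi^2/4}(1+\xi^2)/2=\sqrt{2\pi/e}>1$, so no scaling achieves the pointwise bound. The paper's own treatment of this step is a one-line scaling assertion and is no more detailed than yours; you have correctly located the crux, but the specific two-scale plan you outline does not close.
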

\begin{proof}
See Appendix \ref{app:equilibrium}.
\end{proof}
The preceding result in conjunction with Lemma \ref{lem:dominance balls} suggests that choosing a smaller value of $p \in (0,2]$ and adjusting $\sigma$ accordingly when simulating SVGD dynamics with a kernel of the form \eqref{eq:p kernel} might lead to improved algorithmic performance. Note, however, that there is a computational cost associated to kernels with small $p$, as the equations \eqref{eq:Stein ode} become stiff. 
In Section \ref{sec:numerics} we investigate these issues in numerical experiments. 

\section{Outlook: polynomial kernels}
\label{sec:polynomial}

In \citet{liu2018stein} the authors suggest using polynomial kernels of the form $k(x,y) = x \cdot y + 1$ when the target measure is approximately Gaussian. Here we would like to point out that the formulas obtained in Lemma \ref{lem:Hessian} are convenient for the analysis of this case since all the derivatives can be computed explicitly and have simple forms. An in-depth analysis of the implications for the use of polynomial kernels would be beyond the scope of this work, but we present the following result:
\begin{lemma}
	\label{lem:xy kernel}
	Let $d = 1$, $V(x) = \frac{\alpha}{2}x^2$, $\alpha > 0$ and $k(x,y) =xy$. Then 
	\begin{equation}
	\label{eq:polynomial curvature}
	q[\rho](y,z) = 2 \alpha k(y,z) \int_{\mathbb{R}}  x^2 \, \mathrm{d}\rho(x),
 	\end{equation}
	and hence \eqref{eq:hessian metric bound} holds with
	\begin{equation}
	\label{eq:lambda polynomial}
	\lambda = 2 \alpha \int_{\mathbb{R}} x^2 \, \mathrm{d}\rho(x).
	\end{equation}
\end{lemma}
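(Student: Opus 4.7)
The plan is to compute $q[\rho](y,z)$ by direct application of Lemma \ref{lem:Hessian}, specialised to $d=1$, $V(x)=\tfrac{\alpha}{2}x^2$ and $k(x,y)=xy$. Under these choices, each of the three contributions \eqref{eq:nonequilibrium1}--\eqref{eq:equilibrium} reduces to an elementary moment of $\rho$. The central building block is
\begin{equation*}
\partial_{x}\bigl(e^{-V(x)} k(x,y)\bigr) e^{V(x)} = \partial_x\bigl(e^{-\alpha x^2/2}xy\bigr)\, e^{\alpha x^2/2} = y(1 - \alpha x^2),
\end{equation*}
obtained by one application of the product rule; the further derivatives needed in \eqref{eq:nonequilibrium2} and \eqref{eq:equilibrium} follow by a single additional differentiation.

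Writing $m_\rho := \int_{\mathbb{R}} x^2\,d\rho(x)$, I would first evaluate \eqref{eq:nonequilibrium1}: in one dimension the outer sum collapses and the contribution becomes $\bigl(\int_{\mathbb{R}} y(1-\alpha x^2)\,d\rho(x)\bigr)\cdot \partial_y k(y,z) = yz(1-\alpha m_\rho)$. Differentiating the building block once more in $y$ gives $1 - \alpha x^2$, so \eqref{eq:nonequilibrium2} contributes $-yz(1 - \alpha m_\rho)$, and the two cancel exactly. For \eqref{eq:equilibrium}, starting from $e^{V(x)}\partial_x(e^{-V(x)}k(x,y)) = y(1-\alpha x^2)$ and differentiating once more in $x$ yields $-2\alpha xy$; multiplying by $k(x,z)=xz$, integrating against $\rho$, and absorbing the outer minus sign produces $2\alpha yz\, m_\rho$. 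Summing the three contributions proves \eqref{eq:polynomial curvature}.

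The second claim is then immediate. Substituting \eqref{eq:polynomial curvature} into the definition \eqref{eq:Hess op} of $\Hess_\rho$ and pulling the $\rho$-dependent but $(y,z)$-independent factor out of the double integral gives
\begin{equation*}
\Hess_\rho(\Psi,\Psi) = 2\alpha m_\rho \int_{\mathbb{R}}\!\int_{\mathbb{R}} \Psi'(y)\,k(y,z)\,\Psi'(z)\,d\rho(y)\,d\rho(z),
\end{equation*}
so \eqref{eq:hessian metric bound} holds with equality for $\lambda = 2\alpha m_\rho$, which is \eqref{eq:lambda polynomial}.

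I do not anticipate any genuine obstacle beyond careful sign and chain-rule bookkeeping. The structurally interesting point is the exact cancellation between the two ``nonequilibrium'' contributions \eqref{eq:nonequilibrium1} and \eqref{eq:nonequilibrium2}: for this kernel-potential pair the curvature depends on $\rho$ only through its second moment, so \eqref{eq:hessian metric bound} in fact becomes an identity rather than a one-sided bound. This is consistent with the heuristic that polynomial kernels are particularly well adapted to approximately Gaussian targets.
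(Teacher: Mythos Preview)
Your proof is correct and follows exactly the approach indicated in the paper, which simply states that \eqref{eq:polynomial curvature} follows by straightforward calculation from \eqref{eq:Hessian}. You have carefully supplied the details of that calculation, including the exact cancellation between the contributions \eqref{eq:nonequilibrium1} and \eqref{eq:nonequilibrium2}, and your derivation of \eqref{eq:lambda polynomial} from \eqref{eq:polynomial curvature} via \eqref{eq:Hess op} is likewise correct.
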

\begin{proof}
The identity \eqref{eq:polynomial curvature} follows by straightforward calculation from \eqref{eq:Hessian}. 
\end{proof}

Lemma \ref{lem:xy kernel} is an encouraging result since $\lambda > 0$ whenever $\rho \neq \delta_0$. Furthermore, the rate of contraction is naturally linked to the second moment of the measure $\rho$. A more detailed study of polynomial kernels in the multidimensional setting and for non-Gaussian targets is the subject of ongoing work. 

\begin{remark}
	Since polynomial kernels are not ISPD (and hence violate Assumption \ref{ass:ispd}), convergence to the target $\pi$ is not guaranteed. However, we note that $\widetilde{k} = k + \varepsilon k_0$ is ISPD whenever $k_0$ is (and where $\varepsilon > 0$, $k$ being \emph{any} kernel). Polynomial kernels are thus admissible in our framework (and Lemma \ref{lem:xy kernel} might be indicative) when used in conjuntion with a small perturbation, for instance by a kernel of the form \eqref{eq:p kernel}.  
\end{remark}

\section{Numerical Experiments}
\label{sec:numerics}
In this section, using numerical experiments, we demonstrate that some of the results (see in particular Example \ref{ex:exp decay} as well as the discussion following Lemmas \ref{lem:dominance balls} and \ref{lem:exp p kernel}) arising from the mean-field analysis of Section \ref{sec:curvature} carry through to the associated finite-particle model.  In particular, we demonstrate that the smoothness of the kernel plays a significant role on the performance of the SVGD dynamics as a sampling algorithm.  We  study two simple Gaussian mixture model tests.  In the first example we consider the one dimensional target distribution $\pi = \frac{1}{4}\mathcal{N}(2, 1)+\frac{1}{4}\mathcal{N}(-2, 1)+\frac{1}{4}\mathcal{N}(6, 1)+\frac{1}{4}\mathcal{N}(-6, 1)$ on $\mathbb{R}$. The standard SGVD dynamics \eqref{eq:Stein ode} are simulated for $N=500$ particles up to time $T = 5000$. The resulting ODE was integrated using an implicit variable order BDF scheme \citep{byrne1975polyalgorithm}, for which we keep track of the number of gradient evaluations throughout the entire simulation. We investigate kernels of the form  \eqref{eq:p kernel} for different values of $p \leq 2$.  We first consider such kernels with fixed $p$ taking values $2, 1, \frac{1}{2},\ldots$. The behaviour of the scheme is strongly dependent on the choice of the bandwidth $\sigma$.  Following \citet{liu2016stein} and all subsequent works we choose $\sigma$ according to the median heuristic.   In Figure \ref{fig:one_dim_histogram} the histograms of the empirical distributions is plotted at the final time.  We observe a significant improvement in accuracy between $p=1$ and $p=2$, with the $500$ particles packing far more efficiently as $p$ is decreased from $2$.  However, moving beyond $p=1$ the approximation starts to suffer close to the tails of the distribution, suggesting that more particles would be needed as $p$ is taken to $0$.  The temporal behaviour is shown in Figure \ref{fig:one_dim_evolution} which plots the Wasserstein-1 distance between the target density and the empirical SVGD distribution over time. The Wasserstein distance was computed using the Python Optimal Transport Library \citep{flamary2017pot} based on an exact sample of size $10^7$.  For $p=1$ we observe that the finite-particle bias in the stationary distribution is far lower.  However, decreasing $p$ further down to $\frac{1}{2}$ we do not see this improvement being sustained.  In the right-hand side figure, the Wasserstein error is plotted as a function of the number of gradient evaluations, which characterises computational cost.  We observe that, after an initial transient period, the simulation for $p=1$ is far more accurate per unit cost, whilst maintaining this accuracy becomes more expensive as $p$ decreases. The latter is in line with the fact that the derivatives of the kernels \eqref{eq:p kernel} become unbounded for $p \in (0,1)$, and so the system \eqref{eq:Stein ode} becomes numerically significantly stiffer in that regime.   Simulating SVGD for $p$ smaller than $\frac{1}{2}$ the accuracy degrades very strongly. These observations suggest that a kernel with a time-evolving value of $p$ might achieve the `best of both worlds'.  To this end we consider a form of \emph{annealing} where we take $\log p(t) = (1-t/T)\log p_0 + t/T \log p_1$, for $t \in [0,T]$ and where $T=1000$ is the final simulation time.  We choose $p_0 = 2$ and $p_1 = \frac{1}{2}$.  The convergence results for the annealing strategy are shown in  \ref{fig:one_dim_histogram} and \ref{fig:one_dim_evolution}.  We observe that the annealed version attains the lowest Wasserstein error overall, substantially lower than the fixed $p$-kernels at time $t=500$.  However, this advantage quickly diminishes as $T$ increases to $1000$, suggesting that this is likely a finite-particle effect.   We observe similar behaviour when plotting the Wasserstein error against the number of gradient evaluations.  While it is clear that there is potential for performance increases through annealing, it is evident that this is very sensitive to the particular annealing `schedule', and we leave a detailed study for further work.
\\\\
As a second example, a two-dimensional Gaussian mixture model is considered defined by $\pi = \frac{1}{6}\sum_{i=1}^6 \mathcal{N}(\mu_i, \Sigma_i)$, where $\mu_1 = (-5, -1)^\top$, $\mu_2 = (-5, 1)^\top$, $\mu_3 = (5, -1)^\top$, $\mu_4 = (5, 1)^\top$, $\mu_5 = (0, 1)^\top$, $\mu_6 = (0, -1)^\top$ and 
$$
\Sigma_1 = \Sigma_2 = \Sigma_3 = \Sigma_4 = \frac{1}{5}I_{2\times 2}, \mbox{ and } \Sigma_5 = \Sigma_6 = \left(\begin{matrix} 10 & 0 \\ 0 & \frac{1}{2}\end{matrix}\right),
$$
see Figure \ref{fig:two_dim_truth}.  Standard SVGD dynamics are simulated with 500 particles up to time $T = 5000$ using a kernel of the form \eqref{eq:p kernel} with $p=2, 1, 0.5, 0.1$, etc.  We see from Figure \ref{fig:two_dim_evolution} that the lowest error (in terms of Wasserstein-1 distance) is attained when $p=0.5$, after which the performance degrades very rapidly.   From the right-hand side plot, we also observe that $p=0.5$ provides the lowest error per unit computational cost, after an interim transient period.
\\\\
Both the above examples suggest that $p$ needs to be tuned to the target distribution, and that it could be updated adaptively.  We leave investigations of such adaptation strategies for future work. Finally, we remark that Corollary \ref{cor:translation invariant} suggests using non-translation-invariant kernels with adapted tails as in Example \ref{ex:exp decay}. In our numerical studies we find, however, that doing so incurs an additional computational cost that often outweighs the favourable properties of the associated mean-field dynamics. Still, developing SVGD schemes relying on kernels with appropriately adapted tails might be an interesting direction for further research.

\begin{figure}[t!]
\begin{center}
\includegraphics[width=0.24\textwidth]{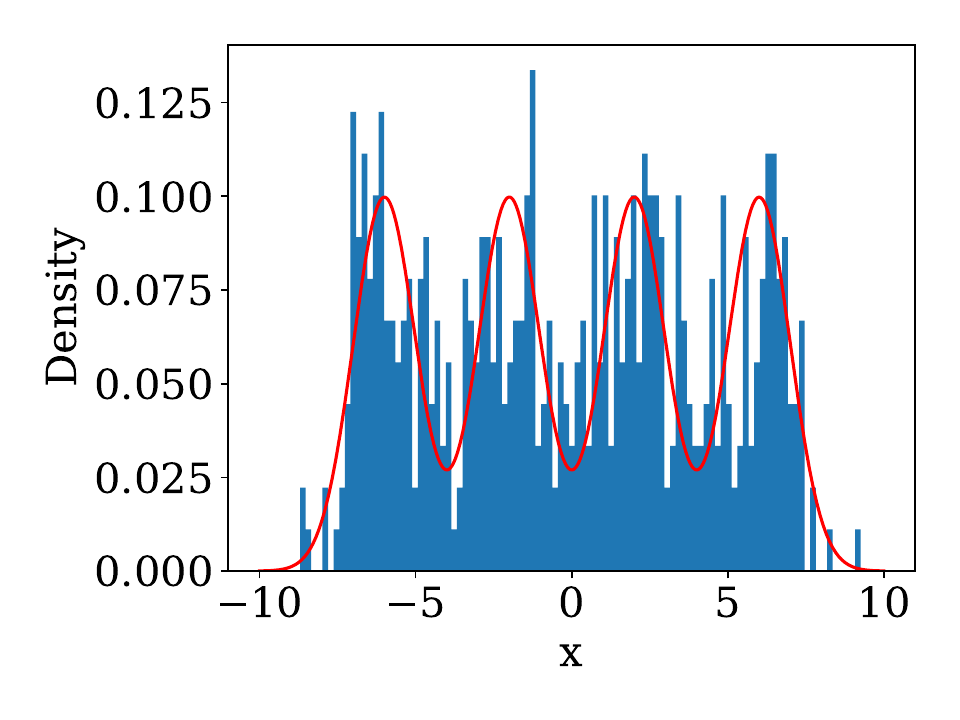}
\includegraphics[width=0.24\textwidth]{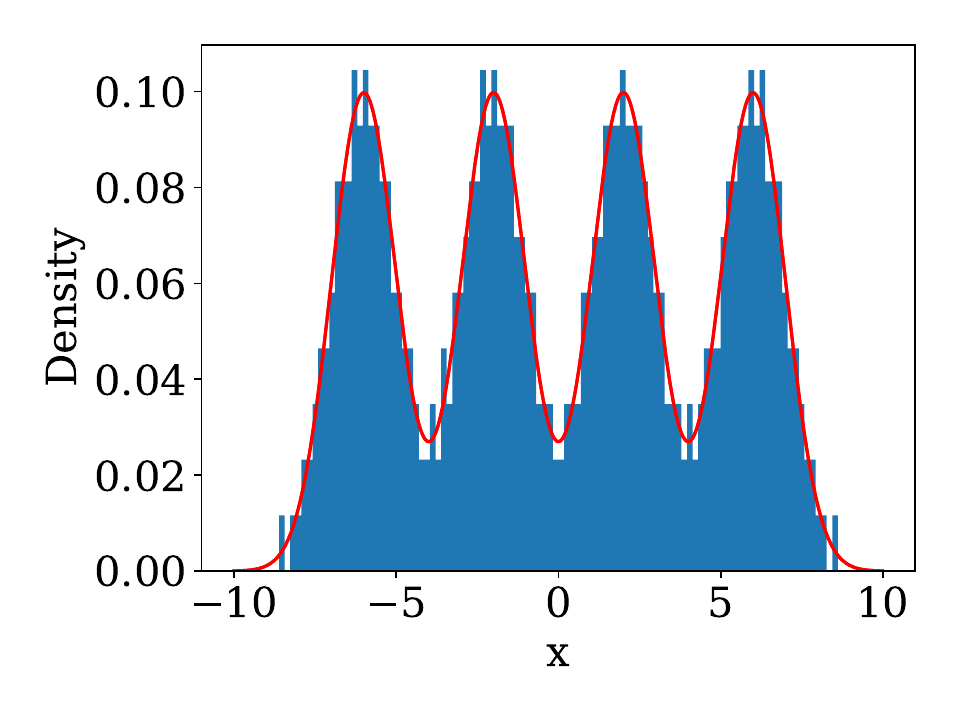}
\includegraphics[width=0.24\textwidth]{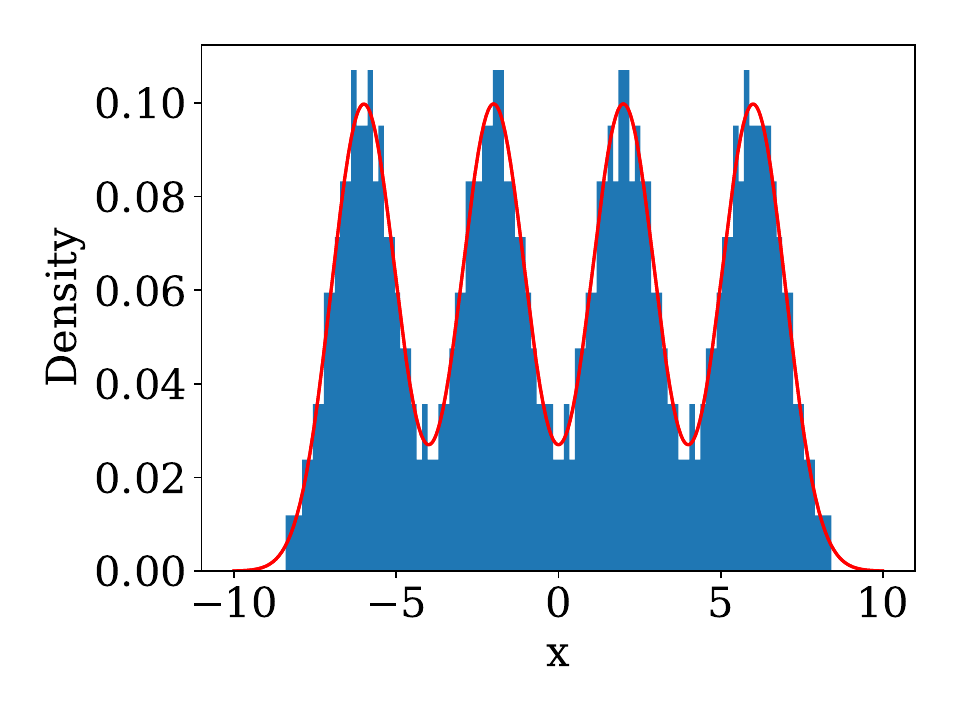}
\includegraphics[width=0.24\textwidth]{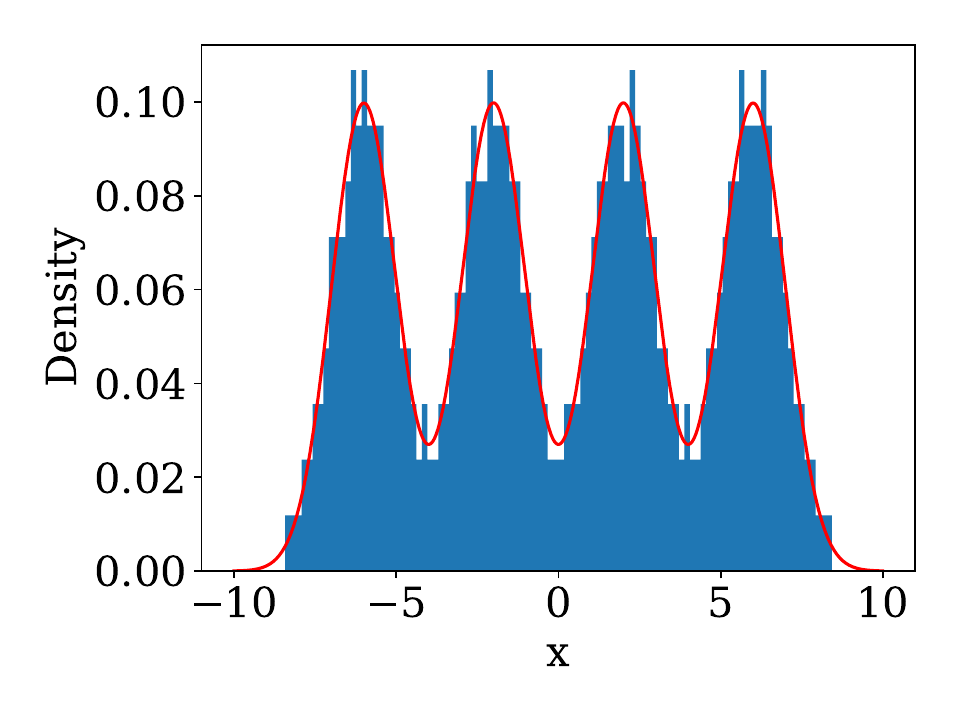}

\caption{Histogram of the empirical distribution of $N=500$ particles at final time, simulated according to standard SVGD dynamics for $T=500$ time units. The first three are using a kernel of the form  \eqref{eq:p kernel} for $p=2,1, \frac{1}{2}$, respectively. The last histogram employs a time-evolving kernel where the value of $p$ is evolved from $p_0=2$ to $p_1=\frac{1}{2}$.  The red line denotes the target density.\label{fig:one_dim_histogram}}
\end{center} 
\end{figure}

\begin{figure}[t!]
\begin{center}
\includegraphics[width=0.45\textwidth]{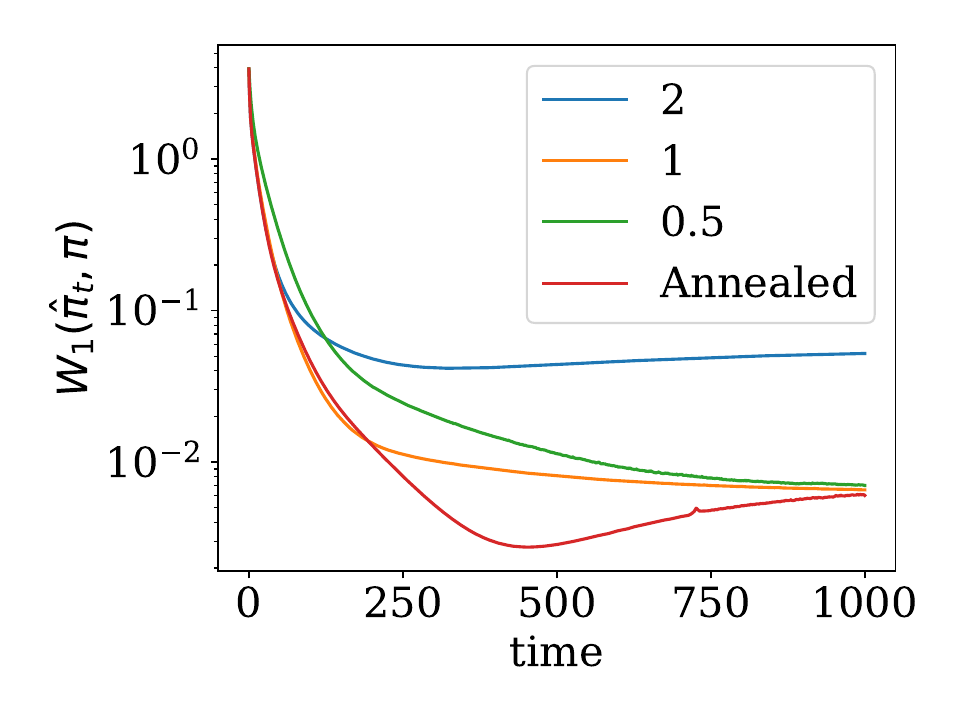}
\includegraphics[width=0.45\textwidth]{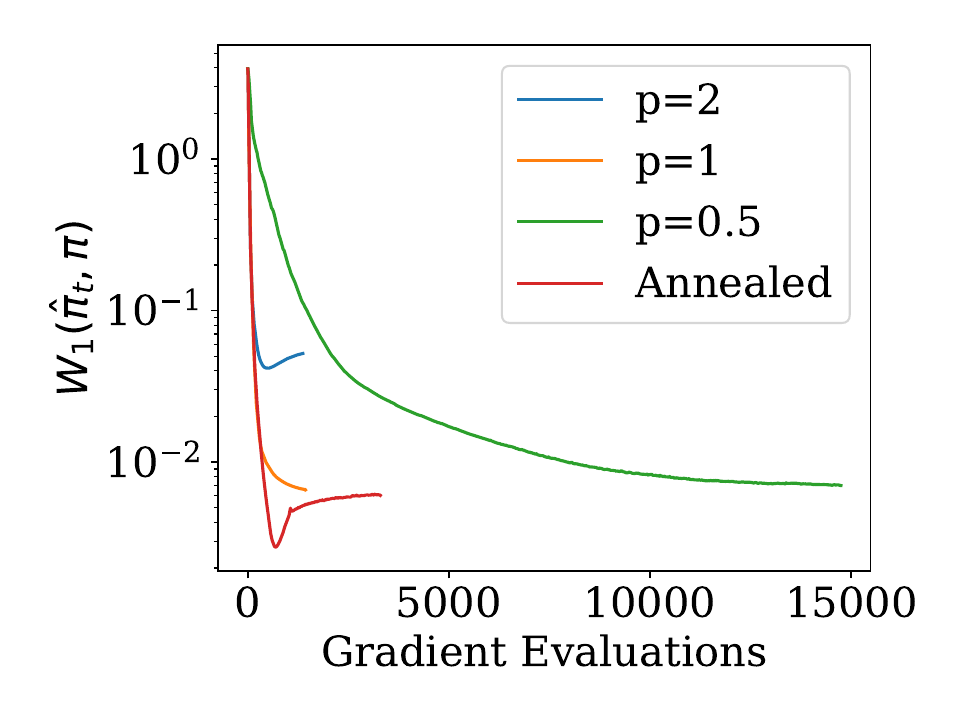}
\caption{Time evolution of the Wasserstein-1 error between the target and empirical distributions arising from simulating SVGD from $0$ to $T$ and different values of $p$.  In the left plot, the evolution is shown as a function of time.  In the right plot, it is shown as a function of the number of gradient evaluations, reflecting the true computational cost.\label{fig:one_dim_evolution}}
\end{center} 
\end{figure}

\begin{figure}[t!]
\begin{center}
\includegraphics[width=0.5\textwidth]{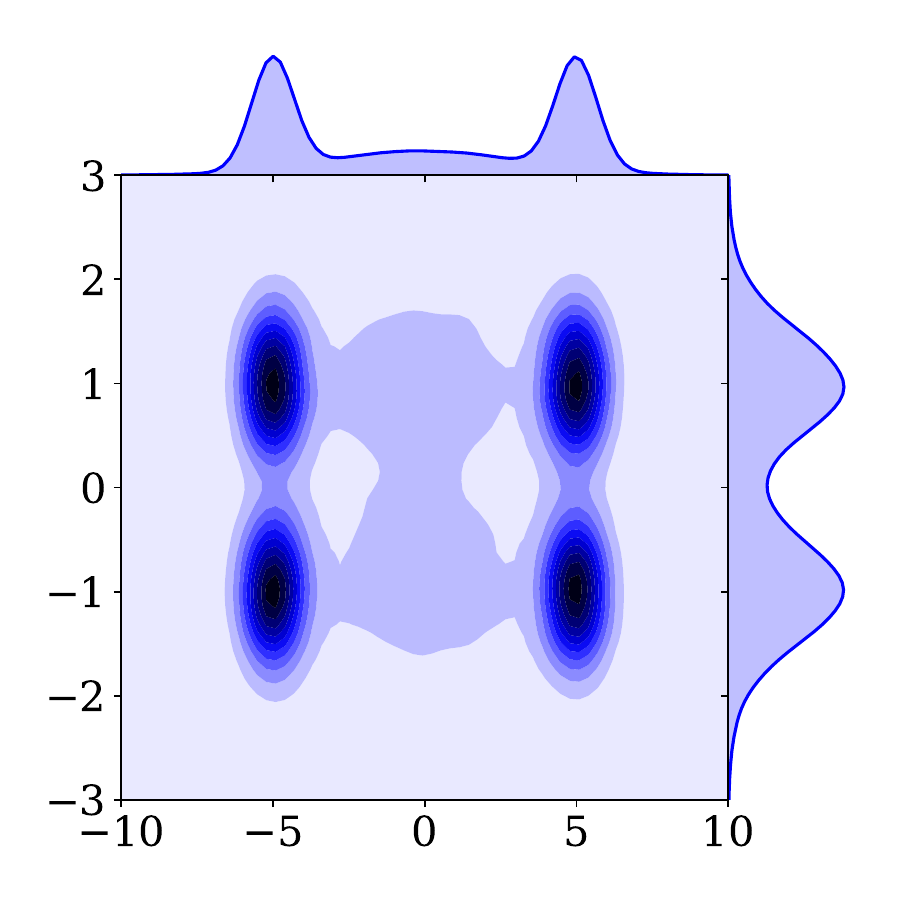}
\caption{Target distribution for the two-dimensional Gaussian mixture model example.\label{fig:two_dim_truth}}
\end{center} 
\end{figure} 

\begin{figure}[t!]
\begin{center}
\includegraphics[width=0.45\textwidth]{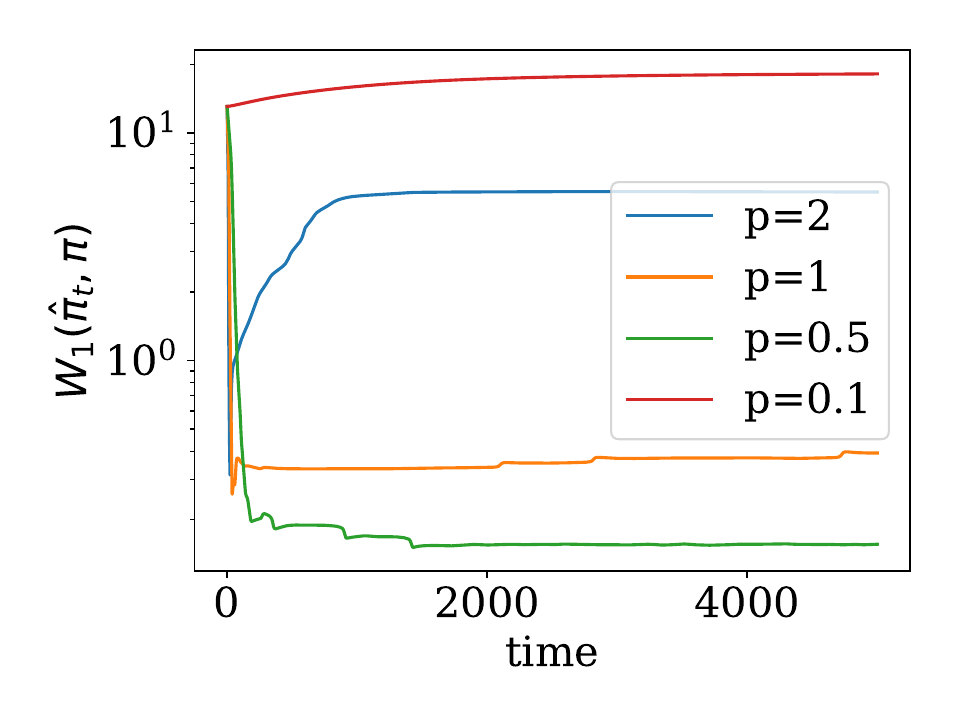}
\includegraphics[width=0.45\textwidth]{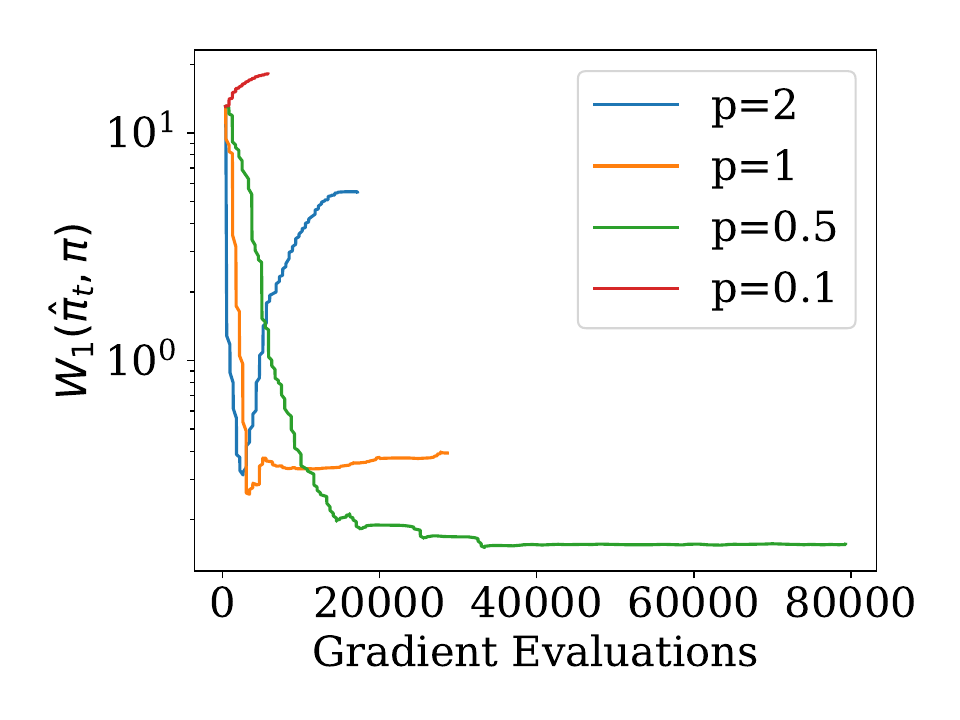}
\caption{Time evolution of the Wasserstein-1 error between the target and empirical distributions arising from simulating SVGD from $0$ to $T$ and different values of $p$ for the two-dimensional mixture model example.  In the left plot, the evolution is shown as a function of time.  In the right plot, it is shown as a function of the number of gradient evaluations, reflecting the true computational cost.\label{fig:two_dim_evolution}}
\end{center} 
\end{figure}

\section{Conclusions}
\label{sec:conclusions}
 
In this paper we have analysed the geometric properties of SVGD related to its gradient flow interpretation. In particular, we have extended the framework put forward in \citet{liu2017stein}, obtained the associated geodesic equations and used those results to derive functional inequalities connected to exponential convergence of SVGD dynamics close to equilibrium. We have leveraged the latter to develop principled guidelines for an appropriate choice of the kernel $k$ and verified those in numerical experiments. In particular, our theoretical considerations have led us to investigating singular kernels with adjusted tails. 

There are various avenues for further research. First, it would be interesting to place the geometric calculations in the framework of metric spaces developed in \citet{ambrosio2008gradient}, relaxing the regularity assumptions and placing in particular Proposition \ref{prop:geodesic equations} on a more rigorous foundation. It will be of key importance to extend the results obtained in Section \ref{sec:1d} to the multidimensional case. The numerical experiments have indicated that such an extension might be possible and yield further insights. Quantifying the speed of convergence for initial distributions far from equilibrium remains an open and challenging problem. As noted in Section \ref{sec:polynomial}, this might be possible (and first encouraging results are available) for polynomial kernels. Last but not least, we believe that understanding the properties of the finite-particle systems \eqref{eq:Stein ode} and \eqref{eq:noisy Stein} (as opposed to the mean field limit \eqref{eq:Stein pde}) will be important for further algorithmic advances. All of the preceding points are currently under investigation. 
\\\\

\noindent
{\bf Acknowledgement.} This research has been partially funded by 
Deutsche Forschungsgemeinschaft (DFG) through grant CRC 1114 \lq Scaling Cascades\rq  \,(project A02). NN would like to thank Alexander Mielke, Felix Otto and Sebastian Reich for stimulating discussions.  AD was supported by the Lloyds Register Foundation Programme on Data Centric Engineering and by The Alan Turing Institute under the EPSRC grant [EP/N510129/1]. We would like to thank the anonymous referees whose very careful reading and thoughtful remarks have helped to substantially improve the paper.

\appendix


\section{Analogies between Langevin dynamics and SVGD}
\label{app:Langevin}

In this appendix we will trace the similarities between overdamped Langevin dynamics and SVGD according to the gradient flow perspective. We note that a similar comparison has been made in \citet[Section 3.5]{liu2017stein}. Here our aim is to extend this discussion and place our results in this context.

\subsection{Overdamped Langevin dynamics, the Fokker-Planck equation and optimal transport}
\label{sec:Langevin}

To start with, let us consider the \emph{overdamped Langevin dynamics} \citep[Section 4.5]{pavliotis2014stochastic}  

\begin{equation}
\label{eq:overdamped Langevin}
\mathrm{d}X_t = -\nabla V(X_t) \, \mathrm{d}t + \sqrt{2}\, \mathrm{d}B_t, \quad X_0 = x_0. 
\end{equation}
It is well-known that under mild conditions on $V$ this SDE admits a unique strong solution $(X_t)_{t \ge 0}$ that is ergodic with respect to $\pi \propto e^{-V}$, see, for instance, \citet{roberts1996exponential}. This fact motivates using a suitable discretisation of \eqref{eq:overdamped Langevin} as a sampling scheme, laying the foundation for a number of (approximate) MCMC algorithms such as MALA and ULA \citep[Section 6.5.2]{robert2013monte}.  The law of $X_t$, denoted by $\rho_t := \Law(X_t)$, solves the \emph{Fokker-Planck equation} 
\begin{subequations}
	\label{eq:Fokker-Planck}
	\begin{align}
	\partial_t \rho & = \nabla \cdot(\rho \nabla V + \nabla \rho) \\
	\label{eq:FKP log grad}
	& = \nabla \cdot \left( \rho (\nabla V + \nabla \log \rho )\right).
	\end{align}
\end{subequations}
The value of the reformulation \eqref{eq:FKP log grad} becomes apparent when we notice that the Stein PDE \eqref{eq:Stein pde} can be written in the form
\begin{equation}
\nonumber
\partial_t \rho = \nabla \cdot \left( \rho \mathcal{T}_{k,\rho} (\nabla V + \nabla \log \rho )\right),
\end{equation} 
see Lemma \ref{lem:gradient} and Corollary \ref{cor:gradient flow}. In particular, the Fokker-Planck Onsager operator \citep{machlup1953fluctuations,mielke2016generalization,ottinger2005beyond}
$$
\mathbb{K}_\rho^{FP} : \phi \mapsto -\nabla \cdot(\rho \nabla \phi),
$$
should be compared to the Stein Onsager operator from Remark \ref{rem:Onsager}. 
As first observed in the seminal paper \citet{jordan1998variational}, the PDE \eqref{eq:Fokker-Planck} can be interpreted as the gradient flow of the $\mathrm{KL}$-divergence \eqref{eq:KL} with respect to the quadratic Wasserstein distance $\mathcal{W}_2$ using the Benamou-Brenier formula \citep{benamou2000computational}
\begin{equation}
\label{eq:Wasserstein}
\mathcal{W}_2^2(\mu,\nu) = \inf_{(\rho,v)} \left\{ \int_0^1 \Vert v_t \Vert_{L^2(\rho_t)}^2 \, \mathrm{d}t: \quad \partial_t \rho + \nabla \cdot(\rho v) = 0, \quad \rho_0 = \mu, \, \rho_1 =\nu \right\}. 
\end{equation}
As already noticed in Remark \ref{rem:d_W}, the Stein distance $d_k$ essentially differs from $\mathcal{W}_2$ only by exchanging the $L^2(\rho)$-norm for the $\mathcal{H}_k^d$-norm. The infimum in \eqref{eq:Wasserstein} remains the same if optimisation is carried out over gradient fields $v = \nabla \phi$, see for instance \citet[Section 1.4]{gigli2012second}. This is completely analogous to the optional constraint $ v_t \in \overline{\mathcal{T}_{k,\rho_t}\nabla C_c^\infty(\mathbb{R}^d)}^{\mathcal{H}_k^d}$ in Definition \ref{def:distance}, see \eqref{eq:Stein unconstrained}.
The geodesics associated to the distances $d_k$ and $\mathcal{W}_2$ are described by the systems of equations \eqref{eq:Stein geodesics} and \eqref{eq:Wasserstein geodesics}. As already observed in Remark \ref{rem:geodesics}, the equations pertaining to the Stein geometry are coupled, reflecting the fact that SVGD is based on an evolution of interacting particles.
In \citet[Section 3]{otto2000generalization}, the Hessian of the $\mathrm{KL}$-divergence in the Wasserstein geometry was computed; this expression should naturally be compared to the Hessian in the Stein geometry, see Lemma \ref{lem:Hessian}. Notably, the Wasserstein-Hessian can be related to the Ricci-curvature of the underlying manifold, an observation that has sparked numerous developments within the intersection between geometry and probability (see for instance \citet[Part III]{V2009}). As of now we are not able to spot a similar connection in \eqref{eq:Hessian}. We believe that a more intuitive (possibly geometric) understanding of \eqref{eq:Hessian} might lead to further algorithmic improvements of SVGD. Finally, the Wasserstein-Hessian has been leveraged in \citet{otto2001geometry} for the analysis of certain functional inequalities central to the understanding of exponential convergence to equilibrium of the overdamped Langevin dynamics \eqref{eq:overdamped Langevin}. We mention in particular the Poincar{\'e} inequality taking the same form as \eqref{eq:Poincare}, but with $\mathcal{L}$ given by
\begin{equation}
\label{eq:overdamped generator}
\mathcal{L} \phi = - \sum_{i=1}^d e^V \partial_i \left(e^{-V} \partial_i \phi\right),
\end{equation}  
i.e. only differing by the operator $T_{k,\pi}$. The viewpoint of \citet{otto2000generalization} led to a geometric understanding of the celebrated Bakry-{\'E}mery criterion \citep[Theorem 2]{otto2000generalization}; we note that our condition \eqref{eq:BK matern} has a similar flavour (albeit in a simplified context). Despite all those similarities, we would like to stress that the Fokker-Planck equation \eqref{eq:Fokker-Planck} governs the law of \eqref{eq:overdamped Langevin} while the Stein PDE \eqref{eq:Stein pde} arises as the mean-field limit for \eqref{eq:Stein ode} and \eqref{eq:noisy Stein}.  This fact makes a direct theoretical comparison between the corresponding algorithms difficult, see Remark \ref{rem:scalings}.
 
\section{Proofs for Section \ref{sec:noisy}}
\label{app:noisy Stein}

\begin{proof}\textbf{of Proposition \ref{prop:mean field limit}} \;\;
	Let $\phi \in C_c^\infty([0,\infty)\times\mathbb{R}^{d})$ be a smooth test function with compact support and define $\Phi \in C_c^\infty([0,\infty) \times \mathbb{R}^{Nd})$ by $\Phi(t,x):=\frac{1}{N}\sum_{i=1}^{N}\phi(t,x_i)$. Using the notation
	\begin{equation*}
	b(x,\rho)  :=\int_{\mathbb{R}^d}\left[-k(x,y)\nabla V(y)+\nabla_{y}k(x,y)\right]\mathrm{d}\rho(y),
	\end{equation*}
	It{\^o}'s formula implies
	\begin{align*}
	\mathrm{d}\Phi(t,\bar{X}_{t}) & =\frac{1}{N}\sum_{i=1}^{N} \left(\partial_{t}\phi(t,X_{t}^i)+\nabla\phi(t,X_{t}^i)\cdot b(X_{t}^i,\rho_{t}^{N})\right) \mathrm{d}t+\Tr \left(\mathcal{K}(\bar{X}_{t})\mathrm{Hess} \, \Phi(\bar{X}_{t})\right) \mathrm{d}t\\
	& \qquad+\frac{\sqrt{2}}{N}\sum_{i,j=1}^{N}\nabla\phi(X_{t}^i) \cdot \sqrt{\mathcal{K}(\bar{X}_{t})}_{ij} \,\mathrm{d}W_{t}^{j}.
	\end{align*}
	The Hessian $\mathrm{Hess} \, \Phi \in \mathbb{R}^{Nd \times Nd}$ consists of $N^2$ blocks of size $d \times d$ with 
	\[
	\left[\mathrm{Hess} \,\Phi(x)\right]_{ij}= \begin{cases}
	\frac{1}{N}\mathrm{Hess} \, \phi (x_i) & \mbox{if }i=j\\
	0 & \mbox{otherwise}
	\end{cases}, \quad (i,j) \in \{1,\ldots,N\}^2,
	\]
	so that it is a block diagonal matrix, with each diagonal block containing
	the Hessian of $\phi$.\\
	\\
	A simple calculation yields that
	\[
	\Tr (\mathcal{K}(x)\mathrm{Hess}\,\Phi(x))=\frac{1}{N^{2}}\sum_{i=1}^{N}k(x_{i},x_{i})\Tr (\mathrm{Hess}\,\phi(x_{i}))=\frac{1}{N^{2}}\sum_{i=1}^{N}k(x_{i},x_{i})\Delta \phi(x_{i}),
	\]
	so that 
	\[
	\Tr \left(\mathcal{K}(\bar{X}_{t})\mathrm{Hess} \, \Phi(\bar{X}_{t})\right)=\frac{1}{N}\int_{\mathbb{R}^d} k(x,x)\Delta\phi(x)\, \mathrm{d}\rho_{t}^{N}(x).
	\]
	It follows that 
	\begin{align*}
	\langle\phi(t,\cdot),\rho_{t}^{N}\rangle-\langle\phi(0,\cdot),\rho_{0}^{N}\rangle & =\int_{0}^{t}\langle\partial_{s}\phi(s,\cdot),\rho_{s}^{N}\rangle\,\mathrm{d}s+\int_{0}^{t}\langle\nabla\phi(s,\cdot) \cdot b(\cdot,\rho_{s}^{N}),\rho_{s}^{N}\rangle\,\mathrm{d}s\\
	& \qquad+\frac{1}{N}\int_{0}^{t}\langle k(\cdot,\cdot)\Delta\phi(\cdot),\rho_{s}^{N}\rangle\,\mathrm{d}s+\mathcal{N}_{t},
	\end{align*}
	where the brackets denote the duality pairing between test functions and measures. 
	The term $\mathcal{N}_{t}$ represents a local martingale with quadratic variation
	\begin{align*}
	[\mathcal{N}_{\cdot},\mathcal{N}_{\cdot}]_{t} & =\frac{2}{N^{2}}\int_{0}^{t}\sum_{i,j=1}^{N}\nabla\phi(X_{s}^i)\cdot\mathcal{K}(\bar{X}_{s})_{ij}\nabla\phi(X_{s}^j)\,\mathrm{d}s\\
	& =\frac{2}{N^{3}}\int_0^t\sum_{i,j=1}^{N}\nabla\phi(X_{s}^i)\cdot\nabla\phi(X_{s}^j)\, k(X_{s}^i,X_{s}^j)\,\mathrm{d}s\\
	& =\frac{2}{N} \int_0^t \int_{\mathbb{R}^d}\int_{\mathbb{R}^d} \nabla\phi(y)\cdot\nabla\phi(z)k(y,z)\, \mathrm{d}\rho_{s}^{N}(y) \, \mathrm{d}\rho_{s}^{N}(z) \, \mathrm{d}s.
	\end{align*}
	In particular, assuming that the family $\lbrace\rho_{\cdot}^{N}\::\:N\in\mathbb{N}\rbrace$
	possesses a limit point in $\mathcal{P}(C[0,T])$, it follows that $[\mathcal{N}_{\cdot},\mathcal{N}_{\cdot}]_{t}\sim O(N^{-1})$
	as $N\rightarrow\infty.$ Let $\rho_{\cdot}$ be a limit point of
	the family $\lbrace\rho_{\cdot}^{N}\::\:N\in\mathbb{N}\rbrace$, then
	formally as $N\rightarrow\infty$ we obtain the following relationship
	for the limiting distribution:
	\[
	\langle\phi(t,\cdot),\rho_{t}\rangle-\langle\phi(0,\cdot),\rho_{0}\rangle=\int_{0}^{t}\langle\partial_{s}\phi(s,\cdot),\rho_{s}\rangle\,\mathrm{d}s+\int_{0}^{t}\langle\nabla\phi(s,\cdot) \cdot b(\cdot,\rho_{s}),\rho_{s}\rangle\,\mathrm{d}s,
	\]
	so that the limit $\rho_{t}=\lim_{N\rightarrow\infty}\rho_{t}^{N}$
	satisfies the nonlinear transport equation
	\[
	\partial_{t}\rho_{t}(t,x)=-\nabla\cdot(b(x,\rho_{t})\rho_{t}),
	\]
	as required.
\end{proof}
\begin{proof}\textbf{of Proposition \ref{prop:ergodicity}}
\;\;For a textbook account of similar proof strategies we refer to \citet{khasminskii2011stochastic}, see also the proof of Theorem 3.1 in \citet{meyn1993stability}.
Let us define the set
\begin{equation}
\nonumber
D := (\mathbb{R}^d)^N \setminus \bigcup_{i \neq j} \left\{ (x_1, \ldots, x_n) \in (\mathbb{R}^d)^N: \quad x_i = x_j \right\}
\end{equation}
and the Lyapunov function
\begin{equation}
\label{eq:Lyapunov}
F(\bar{x}) =  \sum_{\substack{l,m =1 \\ l \neq m}}^N F_{lm}(x_l,x_m), \quad \bar{x} = (x_1\ldots,x_N) \in D,
\end{equation}
with
\begin{equation}
\nonumber
F_{lm}(x_l,x_m) = - \frac{1}{2} \chi(\vert x_l - x_m \vert^2) \log \vert  x_l - x_m  \vert^2.
\end{equation}
Here $\chi \in C_c^\infty(\mathbb{R})$ is assumed to be a fixed nonnegative cutoff function with $ \chi \equiv 1$ on $[0,1]$. We now argue that there exist constants $C_1, C_2 \in \mathbb{R}$ such that 
\begin{equation}
\label{eq:Lyapunov bound}
(\bar{L} F)(\bar{x}) \le C_1 \sum_{i=1}^N \vert \nabla V(x_i) \vert  + C_2, \quad \bar{x} = (x_1, \ldots x_N) \in D, 
\end{equation}
where $\bar{L}$ is the infinitesimal generator\footnote{Here we use the notation $\mathcal{K}: \nabla \nabla \phi = \sum_{ij} \mathcal{K}_{ij} \partial_i \partial_j \phi$.} of \eqref{eq:noisy Stein},
\begin{equation}
\label{eq:generator}
\bar{L} \phi = - \nabla \bar{V} \cdot \mathcal{K} \nabla \phi + (\nabla \cdot  \mathcal{K})  \cdot \nabla \phi + \mathcal{K} : \nabla \nabla \phi, \quad \phi \in \mathcal{D}(\bar{L}).
\end{equation}
For $l \neq m$, we see that
\begin{subequations}
	\label{eq:potential calculation}
	\begin{align}
	\label{eq:Lipschitz1}
	\left(-\nabla \bar{V} \cdot \mathcal{K} \nabla F_{lm} \right)(\bar{x}) = &   -\chi\left(\vert x_l - x_m \vert^2\right) \sum_{i=1}^N  \nabla V(x_i)  \cdot \frac{x_l - x_m}{(x_l - x_m)^2 }\left( h(x_i - x_m) - h(x_i - x_l) \right) \\
	\label{eq:remainder1}
	& + \frac{1}{2} \log \vert x_l - x_m \vert^2 \sum_{i,j=1}^N \nabla V(x_i) \cdot h(x_i - x_j)\nabla_{x_j} \chi \left( \vert x_l - x_m \vert^2 \right) \\
	\label{eq:estimate1}
	& \le \tilde{C}_1 \sum_{i=1}^N \vert \nabla V(x_i) \vert  + \tilde{C}_2,
	\end{align}
\end{subequations}
where here and in what follows $\tilde{C}_1$ and $\tilde {C}_2$ denote generic constants, the value of which can change from line to line. The estimate \eqref{eq:estimate1} follows from the fact that \eqref{eq:remainder1} is bounded (with compact support) by the construction of $\chi$, and by using Lipschitz continuity of $h$ in \eqref{eq:Lipschitz1}.
Similarly, we have that 
\begin{subequations}
	\label{eq:drift calculation}
	\begin{align}
	\left((\nabla \cdot \mathcal{K}) \cdot \nabla F_{lm} \right)(\bar{x})& =  \sum_{i,j=1}^N \nabla_{x_i} h(x_i - x_j) \cdot \nabla_{x_j} F_{lm}(x_l,x_m) 
	\\
	& =  \chi(\vert x_l - x_m \vert^2) \sum_{i=1}^N \left( \nabla_{x_i}(h (x_i - x_m) - h(x_i - x_l) \right) \cdot \frac{x_l - x_m}{(x_l - x_m)^2}
	\\
	& - \log \vert x_l - x_m \vert^2 \chi'(\vert x_l - x_m \vert^2) \sum_{i = 1}^N \nabla_{x_i} \left(  h (x_i - x_l) - h(x_i - x_m)\right) \cdot (x_l - x_m)
 	\end{align}
\end{subequations}
is bounded due to the one-sided Lipschitz bound \eqref{eq:one-sided Lipschitz}. Lastly,
\begin{subequations}
\label{eq:Laplace calculation}
\begin{align}
\left(\mathcal{K} : \nabla \nabla F_{lm} \right)(\bar{x})& = \sum_{i,j=1}^N h(x_i - x_j) \nabla_{x_i} \cdot \nabla_{x_j} F_{lm} 
 = - \sum_{i=1}^N \left( h(x_i - x_l) - h(x_i - x_m) \right) \nabla_{x_i} 
\cdot 
\\
& \cdot 
\left(\log \vert x_l - x_m \vert^2 \chi'\left(\vert x_l - x_m \vert^2 \right)  (x_l - x_m)  + \chi\left(\vert x_l - x_m \vert^2 \right) \frac{x_l - x_m}{(x_l - x_m)^2}\right)
\\
\label{eq:Laplace bound}
& \le \tilde{C} - 2 \chi\left(\vert x_l -x_m \vert^2 \right) \frac{d-2}{(x_l - x_m)^2} \left( h(0) - h(x_m - x_l) \right),
\end{align}
\end{subequations}
where we have again subsumed terms that are bounded by the construction of $\chi$ in the constant $\tilde{C}$. Note that the second term in \eqref{eq:Laplace bound} (including the minus sign) is nonpositive since $h$ is a positive definite function (see, for instance, \citet[Theorem 3.1(4)]{fasshauer2007meshfree}). Collecting \eqref{eq:potential calculation}, \eqref{eq:drift calculation} and \eqref{eq:Laplace calculation}, we indeed arrive at the Lyapunov bound \eqref{eq:Lyapunov bound}.

Now note that $F$ is bounded from below, and so we can choose a constant $c$ such that $\tilde{F} := F + c$ is nonnegative.
By the assumption that the initial condition is distinct, there exists $q_0 \in \mathbb{N}$ such that $\tilde{F}(\bar{X}_0) < q_0$. For $q > q_0$ let us define the stopping times
\begin{equation}
\nonumber
\tau_q = \inf\{ t \ge 0: \quad \tilde{F}(\bar{X}_t) = q  \}.
\end{equation}
By Dynkin's formula in combination with the bound \eqref{eq:Lyapunov bound} and Assumption \ref{ass:ergodicity}, we see that 
\begin{equation}
\label{eq:q independent bound}
\mathbb{E} [\tilde{F} (\bar{X}_{\tau_q \wedge t})] < C_t,
\end{equation}
for all $q > q_0$ and a constant $C_t$ that depends on $t$, but not on $q$. On the other hand, 
\begin{equation}
\label{eq:Chebyshev}
\mathbb{E}[\tilde{F}(\bar{X}_{\tau_q \wedge t})] = \mathbb{E} [\tilde{F}(\bar{X_t})\mathbf{1}_{\{t < \tau_q\}} ] + q \mathbb{P} [t \ge \tau_q] \ge q \mathbb{P} [t \ge \tau_q],
\end{equation}
where we have used the fact that $\tilde{F}$ is nonnegative. This, together with \eqref{eq:q independent bound},  immediately implies $\mathbb{P}[t \ge \xi] = 0$ for all $t \ge 0$, where $\xi := \lim_{q \rightarrow \infty} \tau_q$. Monotone convergence then shows that $\mathbb{P}[\xi = \infty] = 1$.

In other words, we have shown that $\bar{X}_t \in D$ almost surely, for all $t \ge 0$. Since $\mathcal{K}$ is strictly positive definite on $D$, there is an invariant measure with strictly positive Lebesgue density (see \eqref{eq:marginals}) and $D$ is path-connected \citep[Lemma 3.1]{bolley2018dynamics}, it follows that the process is irreducible and hence ergodic with unique invariant measure \eqref{eq:marginals}, see \citet{kliemann1987recurrence}.    
\end{proof}

\section{Proofs for Section \ref{sec:gradient flow}}
\label{app:geometry proofs}
Let us begin with the following auxiliary lemma:
\begin{lemma}
	\label{lem:orthogonality}
	Let $\rho \in \mathcal{P}_k(\mathbb{R}^d)$.
	Then $\overline{\mathcal{T}_{k,\rho}\nabla C_c^\infty(\mathbb{R}^d)}^{\mathcal{H}_k^d}$ is the orthogonal complement of $L^2_{\mathrm{div}}(\rho) \cap \mathcal{H}_k^d$ in $\mathcal{H}^d_k$, where $L^2_{\mathrm{div}}(\rho)$ is the space of weighted divergence-free vector fields, i.e.  
	\begin{equation}
	\label{eq:L2div}
	L^2_{\mathrm{div}}(\rho)  = \left\{ v \in (L^2(\rho))^d:   \langle v, \nabla \phi \rangle_{L^2(\rho)} = 0 \quad \text{for all } \phi \in C_c^\infty(\mathbb{R}^d) \right\}.
	\end{equation}
	Moreover, $L^2_{\mathrm{div}}(\rho) \cap \mathcal{H}_k^d$ is closed in $\mathcal{H}_k^d$.
\end{lemma}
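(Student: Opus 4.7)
The plan is to identify the space $L^2_{\mathrm{div}}(\rho)\cap\mathcal{H}_k^d$ as the $\mathcal{H}_k^d$-orthogonal complement of $T_{k,\rho}\nabla C_c^\infty(\mathbb{R}^d)$, after which both assertions follow from standard Hilbert space facts. The key is a duality identity between $T_{k,\rho}$ and the canonical embedding $\mathcal{H}_k\hookrightarrow L^2(\rho)$ (which is well-defined since $\rho\in\mathcal{P}_k(\mathbb{R}^d)$ implies $\mathcal{H}_k\subset L^2(\rho)$ by \cite[Theorem 4.26]{steinwart2008support}).

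First I would establish, for every $\phi\in L^2(\rho)$ and $u\in\mathcal{H}_k$, the identity
\begin{equation*}
\langle T_{k,\rho}\phi,u\rangle_{\mathcal{H}_k}=\int_{\mathbb{R}^d}u(y)\phi(y)\,\mathrm{d}\rho(y)=\langle u,\phi\rangle_{L^2(\rho)}.
\end{equation*}
This reduces to the reproducing property $u(y)=\langle u,k(\cdot,y)\rangle_{\mathcal{H}_k}$: formally, interchanging the integration in $y$ with the inner product in $\mathcal{H}_k$ gives the claim, and this interchange is justified by Cauchy--Schwarz together with the integrability $\int k(y,y)\,\mathrm{d}\rho(y)<\infty$. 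Applying the identity componentwise to $v\in\mathcal{H}_k^d$ and $\phi\in C_c^\infty(\mathbb{R}^d)$ yields
\begin{equation*}
\langle T_{k,\rho}\nabla\phi,v\rangle_{\mathcal{H}_k^d}=\sum_{i=1}^d\langle T_{k,\rho}\partial_i\phi,v_i\rangle_{\mathcal{H}_k}=\int_{\mathbb{R}^d}\nabla\phi\cdot v\,\mathrm{d}\rho.
\end{equation*}

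From here, $v\in\mathcal{H}_k^d$ is orthogonal to $T_{k,\rho}\nabla\phi$ for every $\phi\in C_c^\infty(\mathbb{R}^d)$ if and only if $\int\nabla\phi\cdot v\,\mathrm{d}\rho=0$ for every such $\phi$, i.e.\ if and only if $v\in L^2_{\mathrm{div}}(\rho)\cap\mathcal{H}_k^d$. In other words,
\begin{equation*}
\bigl(T_{k,\rho}\nabla C_c^\infty(\mathbb{R}^d)\bigr)^{\perp_{\mathcal{H}_k^d}}=L^2_{\mathrm{div}}(\rho)\cap\mathcal{H}_k^d.
\end{equation*}
Taking orthogonal complements again and using that $(A^\perp)^\perp=\overline{A}^{\mathcal{H}_k^d}$ for any linear subspace $A\subset\mathcal{H}_k^d$, together with the fact that $T_{k,\rho}\nabla C_c^\infty(\mathbb{R}^d)$ is manifestly a linear subspace of $\mathcal{H}_k^d$, yields
\begin{equation*}
\overline{T_{k,\rho}\nabla C_c^\infty(\mathbb{R}^d)}^{\mathcal{H}_k^d}=\bigl(L^2_{\mathrm{div}}(\rho)\cap\mathcal{H}_k^d\bigr)^{\perp_{\mathcal{H}_k^d}},
\end{equation*}
which is the first claim. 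Closedness of $L^2_{\mathrm{div}}(\rho)\cap\mathcal{H}_k^d$ in $\mathcal{H}_k^d$ then follows immediately, since any orthogonal complement in a Hilbert space is closed.

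The only technical point I expect to require care is the use of Fubini to justify the duality identity in the first step, since $T_{k,\rho}\phi$ is defined as an integral against the (possibly unbounded) kernel $k(\cdot,y)$; the bound $\|k(\cdot,y)\|_{\mathcal{H}_k}=\sqrt{k(y,y)}$ and the integrability assumption in the definition of $\mathcal{P}_k(\mathbb{R}^d)$ make this routine but should be written out explicitly. Everything else is a direct consequence of the reproducing property and the projection theorem.
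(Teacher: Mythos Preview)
Your proposal is correct and follows essentially the same route as the paper: both arguments hinge on the adjoint identity $\langle T_{k,\rho}\nabla\phi,v\rangle_{\mathcal{H}_k^d}=\langle v,\nabla\phi\rangle_{L^2(\rho)}$ (which the paper quotes from \cite[Theorem 4.26]{steinwart2008support} while you derive it from the reproducing property) and then apply $(A^\perp)^\perp=\overline{A}$. The only cosmetic difference is that the paper verifies closedness of $L^2_{\mathrm{div}}(\rho)\cap\mathcal{H}_k^d$ by a short sequential argument, whereas you get it for free as an orthogonal complement; your route is marginally cleaner here.
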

\begin{proof}\textbf{of Proposition \ref{prop:helmholtz}}
We begin by showing that $\overline{\mathcal{T}_{k,\rho}\nabla C_c^\infty(\mathbb{R}^d)}^{\mathcal{H}_k^d}$ is the orthogonal complement of $L^2_{\mathrm{div}}(\rho) \cap \mathcal{H}_k^d$ in $\mathcal{H}^d_k$. Indeed,
	using the relation $(U^\perp)^\perp = \overline{U}$ valid for arbitrary linear subspaces of Hilbert spaces, it is enough to show that 
	\begin{equation}
	\label{eq:perp}
	\left( \mathcal{T}_{k,\rho} \nabla C_c^\infty(\mathbb{R}^d)  \right)^{\perp_{\mathcal{H}_k^d}} = L^2_{\mathrm{div}}(\rho) \cap \mathcal{H}_k^d.
	\end{equation}
	By \citet[Theorem 4.26]{steinwart2008support}, we have that $\mathcal{T}_{k,\rho}$ is the adjoint of the inclusion $\mathcal{H}_k^d \hookrightarrow (L^2(\rho))^d$, implying
	\begin{equation}
	\label{eq:L2 H iso}
	\langle v, \nabla \phi \rangle_{L^2(\rho)} = \langle v, \mathcal{T}_{k,\rho} \nabla \phi \rangle_{\mathcal{H}^d_k},
	\end{equation}
	for all $v \in \mathcal{H}_k^d$ and $\phi \in C_c^\infty(\mathbb{R}^d)$. This proves \eqref{eq:perp} and thus the orthogonality statement follows. 
	We next show that $L^2_{\mathrm{div}}(\rho) \cap \mathcal{H}_k^d$ is closed in $\mathcal{H}_k^d$.
	For that, let $(v_n) \subset L^2_{\mathrm{div}}(\rho) \cap \mathcal{H}_k^d$ with $v_n \rightarrow v$ in $\mathcal{H}_k^d$. Using \eqref{eq:L2 H iso} we see that $v \in L^2_{\mathrm{div}}(\rho)$, implying that $L^2_{\mathrm{div}}(\rho) \cap \mathcal{H}_k^d$ is closed. The statement of Proposition \ref{prop:helmholtz} now follows from Theorem II.3 in  \citet{reed1972methods}.
\end{proof}
We now turn to the proof of Lemma \ref{lem:Riemann properties}.
\begin{proof}\textbf{of Lemma \ref{lem:Riemann properties}}\;\;
We only prove the second claim, as it immediately implies the first one. Assume that for $\xi \in T_\rho M$ there exist $v,w \in \overline{\mathcal{T}_{k,\rho}\nabla C_c^\infty(\mathbb{R}^d)}^{\mathcal{H}_k^d}$ such that 
\begin{equation}
\nonumber
\xi + \nabla \cdot (\rho v) = \xi + \nabla \cdot( \rho w) = 0 
\end{equation}
in the sense of distributions. It follows immediately that 
\begin{equation}
\nonumber
\int_{\mathbb{R}^d} \nabla \phi \cdot (v - w) \, \mathrm{d}\rho = 0, 
\end{equation}
for all $\phi \in C_c^\infty(\mathbb{R}^d)$, i.e. $v-w \in L^2_{\mathrm{div}}(\rho)$. Since $ \overline{\mathcal{T}_{k,\rho}\nabla C_c^\infty(\mathbb{R}^d)}^{\mathcal{H}_k^d} \cap L^2_{\mathrm{div}(\rho)} = \{0\}$ by Lemma \ref{lem:orthogonality} and $v - w \in  \overline{\mathcal{T}_{k,\rho}\nabla C_c^\infty(\mathbb{R}^d)}^{\mathcal{H}_k^d}$, we conclude that $v=w$.
Consequently, the map $v \mapsto \nabla (\rho v)$ is a bijection. The fact that it is also an isometry follows directly from the definition of $g_\rho$. 
\end{proof}
\begin{proof}\textbf{of Lemma \ref{lem:gradient}}\;\;
	By definition, the Riemannian gradient $\mathrm{grad}_\rho \mathcal{F} \in T_\rho M$ is determined by the requirement that
	\begin{equation}
	\label{eq:def gradient}
	g_\rho \left( \mathrm{grad}_\rho \mathcal{F}, \partial_t \mu_t \Big \vert_{t=0} \right) = \frac{\mathrm{d}}{\mathrm{d}t} \mathcal{F}(\mu_t)\Big \vert_{t=0},
	\end{equation}
	for all sufficiently regular curves $(\mu_t)_{t \in (-\varepsilon,\varepsilon)} \subset M$ with $\mu_0 = \rho$ and $\partial_t \mu_t \Big \vert_{t=0}  \in T_\rho M$. Given such a curve and corresponding vector fields $(w_t)_{t \in (-\varepsilon,\varepsilon)}$ satisfying $\partial_t \mu + \nabla \cdot (\mu w) = 0$ in the sense of distributions, we compute the right-hand side of \eqref{eq:def gradient},
\begin{equation}
\nonumber
\frac{\mathrm{d}}{\mathrm{d}t} \mathcal{F}(\mu_t)\Big \vert_{t=0} = \int_{\mathbb{R}^d} \frac{\delta \mathcal{F}}{\delta \mu}(\mu_t) \, \partial_t \mu_t \, \mathrm{d}x \Big \vert_{t = 0}
= \int_{\mathbb{R}^d}  \nabla \frac{\delta \mathcal{F}}{\delta \rho}(\rho) \cdot w_0 \, \mathrm{d}\rho.
\end{equation}
From the definition of $T_\rho M$, we have that $\partial_t \mu_t \Big \vert_{t=0}  \in T_\rho M$ implies $w_0 \in \mathcal{H}_k^d$. Therefore, using \citet[Theorem 4.26]{steinwart2008support}, we can write
\begin{equation}
\label{eq:gradient LHS}
\int_{\mathbb{R}^d}  \nabla \frac{\delta \mathcal{F}}{\delta \rho}(\rho) \cdot w_0 \, \mathrm{d}\rho = \left\langle \mathcal{T}_{k,\rho} \nabla \frac{\delta \mathcal{F}}{\delta \rho} (\rho) ,w_0 \right\rangle_{\mathcal{H}_k^d}.
\end{equation}
From the definition of $g_\rho$, the left-hand side of \eqref{eq:def gradient} can be expressed as
\begin{equation}
\label{eq:gradient RHS}
g_\rho \left( \mathrm{grad}_\rho \mathcal{F}, \partial_t \mu_t \Big \vert_{t=0} \right) = \langle v, w_0 \rangle_{\mathcal{H}_k^d}, 
\end{equation} 
where $\mathrm{grad}_\rho \mathcal{F} + \nabla \cdot (\rho v) = 0$, $v \in \overline{\mathcal{T}_{k,\rho} \nabla C_c^\infty(\mathbb{R}^d)}$. Now imposing equality of \eqref{eq:gradient LHS} and \eqref{eq:gradient RHS} for all $w_0 \in \overline{\mathcal{T}_{k,\rho} \nabla C_c^\infty(\mathbb{R}^d)}$ leads to the desired result. 
\end{proof}

\begin{proof}\textbf{of Lemma \ref{lem:properties distance}}
	
1.) We recall that metrics by definition satisfy the axioms
\begin{subequations}
	\begin{align}
	\label{eq:nonnegativ}
	d_k(\mu_1,\mu_2) \ge 0, \quad \quad &\text{(nonnegativity)} \\
	\label{eq:symmetry}
	d_k(\mu_1,\mu_2) = d_k(\mu_2,\mu_1), \quad \quad  &\text{(symmetry)} \\
	\label{eq:nondegenerate}
	d_k(\mu_1,\mu_2) = 0 \quad \iff \quad \mu_1 = \mu_2,  \quad \quad &\text{(nondegeneracy)}\\
	\label{eq:triangle}
	d_k(\mu_1,\mu_2) + d_k(\mu_2,\mu_3) \le d_k(\mu_1,\mu_3), \quad \quad &\text{(triangle inequality)}
	\end{align}
\end{subequations}
for $\mu_1, \mu_2, \mu_3 \in M$.
The properties \eqref{eq:nonnegativ} and \eqref{eq:nondegenerate} follow directly from the definition of $d_k$. For \eqref{eq:symmetry} note that $(\rho_t,v_t)_{t \in [0,1]} \in \mathcal{A}(\mu,\nu)$ if and only if   $(\rho_{1-t},-v_{1-t})_{t \in [0,1] } \in \mathcal{A}(\nu,\mu)$ as well as $v \in \overline{\mathcal{T}_{k,\rho} \nabla C_c^\infty(\mathbb{R}^d)}$ if and only if $-v \in \overline{\mathcal{T}_{k,\rho} \nabla C_c^\infty(\mathbb{R}^d)}$. The triangle inequality \eqref{eq:triangle} follows from considering concatenated paths from $\mu_1$ to $\mu_3$ via $\mu_2$. 

2.) From \citet[Theorem 4.26]{steinwart2008support} we have that
\begin{equation}
\nonumber
\Vert v \Vert_{L^2(\rho)} \le \int_{\mathbb{R}^d} k(x,x) \, \mathrm{d}\rho(x) \, \Vert v \Vert_{\mathcal{H}_k^d}, \quad v \in \mathcal{H}_k^d.
\end{equation} 
The claim now follows directly from the Benamou-Brenier formula for the quadratic Wasserstein distance, see \citet{benamou2000computational}, together with Lemma \ref{lem:properties distance}.\ref{it:unconstrained}.

3.) For fixed $\mu, \nu \in M$, consider a connecting curve $(\rho,v) \in \mathcal{A}(\mu,\nu)$. According to Lemma \ref{lem:orthogonality} we have the $\mathcal{H}_k^d$-orthogonal decompositions 
\begin{equation}
\label{eq:H_k decomp}
\mathcal{H}_k^d = \overline{\mathcal{T}_{k,\rho_t} \nabla C_c^\infty(\mathbb{R}^d)}^{\mathcal{H}_k^d} \oplus \left( L^2_{\mathrm{div}}(\rho_t) \cap \mathcal{H}_k^d\right), \quad t \in [0,1],
\end{equation} 
i.e. we can write 
\begin{equation}
\nonumber
v_t = u_t + w_t, \quad u_t \in  \overline{\mathcal{T}_{k,\rho_t} \nabla C_c^\infty(\mathbb{R}^d)}^{\mathcal{H}_k^d} , \quad w_t \in L^2_{\mathrm{div}}(\rho_t) \cap \mathcal{H}_k^d, 
\end{equation}
with $(u_t)_{t \in [0,1]}$ and $(w_t)_{t \in [0,1]}$ being uniquely determined. Since $w_t \in L^2_{\mathrm{div}}(\rho_t)$ for all $t \in [0,1]$, we have that $v$ satisfies the continuity equation \eqref{eq:weak continuity equation} if and only if $u$ does. By $\mathcal{H}_k^d$-orthogonality in \eqref{eq:H_k decomp}, we moreover have
\begin{equation}
\label{eq:norm decomposition}
\Vert v_t \Vert^2_{\mathcal{H}_k^d}  = \Vert u_t \Vert_{\mathcal{H}_k^d}^2 + \Vert w_t \Vert_{\mathcal{H}_k^d}^2, \quad t \in [0,1]. 
\end{equation}
Because \eqref{eq:norm decomposition} is optimised for $w_t = 0$ while keeping the continuity equation unchanged, it is clear that the objective in \eqref{eq:Stein unconstrained} enforces $w_t = 0$, or, equivalently, $v_t \in  \overline{\mathcal{T}_{k,\rho_t} \nabla C_c^\infty(\mathbb{R}^d)}^{\mathcal{H}_k^d}$, for all $t \in [0,1]$.
\end{proof}
\section{Proofs for Section \ref{sec:2nd order}}
\label{app:proofs geodesics}
\begin{proof}\textbf{of Proposition \ref{prop:geodesic equations}}\;\;
	The arguments are formal and proceed along the lines of \citet[Section 3]{otto2000generalization}.
	In \eqref{eq:Stein constrained} let us substitute $v_t = \mathcal{T}_{k,\rho_t} \nabla \Phi_t$ with $\Phi_t \in C_c^\infty(\mathbb{R}^d)$, $t \in [0,1]$, to obtain
	\begin{equation}
	\label{eq:d_k reformulation}
	d_k^2(\mu,\nu) = \inf_{(\rho,\Phi)} \left\{ \int_0^1 \Vert \mathcal{T}_{k,\rho_t} \nabla \Phi_t \Vert_{\mathcal{H}_k^d}^2 \, \mathrm{d}t: \quad \partial_t \rho + \nabla \cdot(\rho \mathcal{T}_{k,\rho} \nabla \Phi) =0, \quad \rho_0 = \mu, \quad \rho_1 = \nu  \right\},
	\end{equation}
	where the continuity equation is as usual interpreted in a weak sense, i.e. the pair $(\rho,\Phi)$ satisfies the constraints in  \eqref{eq:d_k reformulation} if and only if
	\begin{equation}
	\label{eq:weak cont}
	- \int _0^1 \int_{\mathbb{R}^d} \partial_t \Psi \, \mathrm{d}\rho \,\mathrm{d}t - \int_0^1 \langle \nabla \Psi, \mathcal{T}_{k,\rho}\nabla \Phi \rangle_{L^2(\rho)} \, \mathrm{d}t + \int_{\mathbb{R}^d} \Psi_1 \, \mathrm{d}\nu - \int_{\mathbb{R}^d} \Psi_0 \, \mathrm{d}\mu = 0,
	\end{equation}
	for all test functions $\Psi \in C_c^\infty([0,1]\times \mathbb{R}^d)$.
	Let us now define the following functional on pairs $(\rho,\Phi)$,
	\begin{equation}
	\mathcal{E}(\rho,\Phi) := \sup_{\Psi} \left\{- \int _0^1 \int_{\mathbb{R}^d} \partial_t \Psi \, \mathrm{d}\rho \,\mathrm{d}t - \int_0^1 \langle \nabla \Psi, \mathcal{T}_{k,\rho}\nabla \Phi \rangle_{L^2(\rho)} \, \mathrm{d}t + \int_{\mathbb{R}^d} \Psi_1 \, \mathrm{d}\nu - \int_{\mathbb{R}^d} \Psi_0 \, \mathrm{d}\mu \right\},
	\nonumber
	\end{equation}
	where the supremum is taken over all $\Psi \in C_c^\infty([0,1]\times \mathbb{R}^d)$. 
	Since the expression inside the supremum is linear in $\Psi$, it follows that $\mathcal{E}$ characterises weak solutions in the sense of \eqref{eq:weak cont} in the following way,
	\begin{subequations}
		\nonumber
		\begin{align}
		\mathcal{E}(\rho,\Phi) = 
		\begin{cases}
		0 & \text{if } (\rho,\Phi)  \, \text{solves \eqref{eq:weak cont},} \\
		+\infty & \text{otherwise.}
		\end{cases}
		\end{align}
	\end{subequations}
	We can therefore write
	\begin{subequations}
		\begin{align}
		\frac{1}{2} d_k^2(\mu,\nu) & = \inf_{(\rho,\Phi)} \sup_{\Psi}  \left\{  \frac{1}{2}\int_0^1 \Vert \mathcal{T}_{k,\rho_t} \nabla \Phi_t \Vert_{\mathcal{H}_k^d}^2 \, \mathrm{d}t + \mathcal{E}(\rho, \Phi)\right\} \\
		\label{eq:saddle}
		& = \inf_{(\rho,\Phi)} \sup_{\Psi} \Bigg\{  \frac{1}{2}\int_0^1 \Vert \mathcal{T}_{k,\rho_t} \nabla \Phi_t \Vert_{\mathcal{H}_k^d}^2 \, \mathrm{d}t
		\\
		\label{eq:saddle2}
		& - \int _0^1 \int_{\mathbb{R}^d} \partial_t \Psi \, \mathrm{d}\rho \,\mathrm{d}t  - \int_0^1 \int_0^1 \langle \nabla \Psi, \mathcal{T}_{k,\rho}\nabla \Phi \rangle_{L^2(\rho)} \, \mathrm{d}t + \int_{\mathbb{R}^d} \Psi_1 \, \mathrm{d}\nu - \int_{\mathbb{R}^d} \Psi_0 \, \mathrm{d}\mu \Bigg\}. 
		\end{align}
	\end{subequations}
	The term in brackets in \eqref{eq:saddle}-\eqref{eq:saddle2} is convex in $\Phi$ and concave (in fact, linear) in $\Psi$. Hence, it is justified to exchange infimum and supremum (see \citet{rockafellar1970convex},\citet[Section 1.1.6]{V2003}) to obtain
	\begin{subequations}
		\label{eq:d_k reform}
		\begin{align}
		\frac{1}{2} d_k^2(\mu,\nu) &  = \inf_{\rho} \sup_{\Psi} \Bigg\{  - \int _0^1 \int_{\mathbb{R}^d} \partial_t \Psi \, \mathrm{d}\rho \,\mathrm{d}t
		+ \int_{\mathbb{R}^d} \Psi_1 \, \mathrm{d}\nu - \int_{\mathbb{R}^d} \Psi_0 \, \mathrm{d}\mu
		\\
		\label{eq:opt Phi}
		& + \inf_{\Phi}  \left\{\frac{1}{2}\int_0^1 \left(\Vert \mathcal{T}_{k,\rho_t} \nabla \Phi_t \Vert_{\mathcal{H}_k^d}^2 \, \mathrm{d}t - \langle \nabla \Psi, \mathcal{T}_{k,\rho_t}\nabla \Phi \rangle_{L^2(\rho_t)} \right) \mathrm{d}t  \right\}\Bigg\}. 
		\end{align}
	\end{subequations}
	Using the fact that $\mathcal{T}_{k,\rho}$ is self-adjoint in $L^2(\rho)$ and that $\mathcal{T}_{k,\rho}^{1/2}:L^2(\rho) \rightarrow \mathcal{H}_k$ is an isometry \citep[Section 4.3]{steinwart2008support}, we see that
	\begin{equation}
	\label{eq:L2 H}
	\langle \nabla \Psi, \mathcal{T}_{k,\rho_t}\nabla \Phi \rangle_{L^2(\rho_t)} = \langle \mathcal{T}_{k,\rho_t}^{1/2}\nabla \Psi, \mathcal{T}_{k,\rho_t}^{1/2}\nabla \Phi \rangle_{L^2(\rho_t)} = \langle \mathcal{T}_{k,\rho_t} \nabla \Psi, \mathcal{T}_{k,\rho_t} \nabla \Phi \rangle_{\mathcal{H}_k^d}.
	\end{equation}
	Substituting into \eqref{eq:opt Phi}, it follows that
	\begin{equation}
 \nonumber
	\arginf_{\Phi}  \left\{\frac{1}{2}\int_0^1 \left(\Vert \mathcal{T}_{k,\rho_t} \nabla \Phi_t \Vert_{\mathcal{H}_k^d}^2 \, \mathrm{d}t - \langle \nabla \Psi, \mathcal{T}_{k,\rho_t}\nabla \Phi \rangle_{L^2(\rho_t)} \mathrm{d}t \right)  \right\} = \Psi,
	\end{equation}
	up to an additive constant, i.e. 
	\begin{equation}
 \nonumber
\inf_{\Phi}  \left\{\frac{1}{2}\int_0^1 \left(\Vert \mathcal{T}_{k,\rho_t} \nabla \Phi_t \Vert_{\mathcal{H}_k^d}^2 \, \mathrm{d}t - \langle \nabla \Psi, \mathcal{T}_{k,\rho_t}\nabla \Phi \rangle_{L^2(\rho_t)} \right) \mathrm{d}t  \right\} = - \frac{1}{2}\int_0^1 \Vert \mathcal{T}_{k,\rho_t} \nabla \Psi_t \Vert_{\mathcal{H}_k^d}^2  \mathrm{d}t.
\end{equation}
Using \eqref{eq:L2 H}, we obtain the expression
\begin{equation}
\nonumber
\frac{1}{2}\Vert \mathcal{T}_{k,\rho} \nabla \Psi \Vert_{\mathcal{H}_k^d}^2 = \frac{1}{2}\int_{\mathbb{R}^d} \int_{\mathbb{R}^d} \nabla \Psi(x) k(x,y) \nabla \Psi(y) \, \mathrm{d}\rho(x) \mathrm{d}\rho(y).
\end{equation}
Therefore, formally, we can compute the functional derivatives (see \eqref{eq:functional derivative}),
	\begin{subequations}
		\label{eq:functional derivatives}
		\nonumber 
		\begin{align}
		\frac{\delta}{\delta \rho} \left(\frac{1}{2}\Vert \mathcal{T}_{k,\rho} \nabla \Psi \Vert_{\mathcal{H}_k^d}^2\right) (x) & = \int_{\mathbb{R}^d} k(x,y) \, \nabla \Psi(x)\cdot  \nabla \Psi(y) \,  \mathrm{d} \rho(y)  = \nabla \Psi(x) \cdot (\mathcal{T}_{k,\rho} \nabla \Psi)(x), \\
		\frac{\delta}{\delta \Psi}\left(\frac{1}{2} \Vert \mathcal{T}_{k,\rho} \nabla \Psi \Vert_{\mathcal{H}_k^d}^2\right)(x) & = \nabla_x \cdot \left( \rho(x) \int_{\mathbb{R}^d} k(x,y)  \nabla \Psi(y)  \, \mathrm{d} \rho(y) \right) = \nabla \cdot  \left( \rho \mathcal{T}_{k,\rho} \nabla \Psi \right)(x). 
		\end{align}
	\end{subequations}
	The formal optimality conditions for \eqref{eq:d_k reform} are therefore given by the system \eqref{eq:Stein geodesics}.
\end{proof}

\begin{proof}\textbf{of Lemma \ref{lem:hessian negative}}\;\;
	Dealing first with \eqref{eq:Reg nonequilirbrium2}-\eqref{eq:Reg equilibrium} and noting $\nabla \Psi = a$, we observe that 
\begin{equation}
\nonumber
\sum_{i,j=1}^d\int_{\mathbb{R}^d} \int_{\mathbb{R}^d} \int_{\mathbb{R}^d} a_i a_j {\left( \partial_{x_i} \partial_{x_j} k(x,y)  \right) \left( k(y,z) - k(x,z) \right)} \, \mathrm{d}\rho(x)\mathrm{d}\rho(y)\mathrm{d}\rho(z) = 0,
\end{equation}
since $\partial_{x_i} \partial_{x_j} k(x,y) =  \partial_{x_i} \partial_{x_j} k(y,x)$ and $\left( k(y,z) - k(x,z) \right) = -\left( k(x,z) - k(y,z) \right)$. We hence obtain
\begin{subequations}
\nonumber
\begin{align}
\mathrm{Hess}_\rho( \Psi, \Psi) & = \sum_{i,l=1}^d a_i^2 \int_{\mathbb{R}^d} \int_{\mathbb{R}^d} \int_{\mathbb{R}^d}  \partial_{x_l} k(x,y) \partial_{y_l} k(y,z) \, \mathrm{d}\rho(x) \mathrm{d}\rho(y) \mathrm{d}\rho(z) \\
& = -\sum_{i,l=1}^d a_i^2 \int_{\mathbb{R}^d} \left( \int_{\mathbb{R}^d} k(x,y) \partial_{x_l} \rho(x) \, \mathrm{d}x \right)^2 \mathrm{d}\rho(y) < 0.
\end{align}
\end{subequations}
The inequality is strict since $k$ is assumed to be integrally strictly positive definite, and the density $\rho$ cannot be constant.
\end{proof}

\section{Proof of Lemma \ref{lem:Hessian}}
\label{app:proof Hessian}
The proof proceeds by direct calculation, using the geodesic equations \eqref{eq:Stein geodesics}. For convenience, let us introduce the notation
\begin{equation}
\label{eq:w}
w = \mathcal{T}_{k,\rho} \nabla \Psi.
\end{equation}
The following lemma will come in handy.
\begin{lemma}
	Let $\rho$ and $\Psi$ be smooth solutions to \eqref{eq:Stein geodesics}. Then
	\begin{equation}
	\label{eq:wt}
	\partial_t w_i = - \sum_{j=1}^d\int_{\mathbb{R}^d} k(\cdot,y) \partial_j \Psi(y) \partial_i w_j(y) \, \mathrm{d}\rho(y) - \sum_{j=1}^d \int_{\mathbb{R}^d}  k(\cdot,y) \partial_j \left( \partial_i \Psi(y) w_j(y) \rho(y) \right) \mathrm{d}y,
\end{equation}
	for $i=1,\ldots,d$.
\end{lemma}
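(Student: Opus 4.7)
The plan is to compute $\partial_t w_i$ directly from the definition $w_i(x) = \int_{\mathbb{R}^d} k(x,y)\,\partial_i \Psi(y)\,\rho(y)\,\mathrm{d}y$ by differentiating under the integral sign. Since $k$ does not depend on $t$, we obtain
\begin{equation}
\partial_t w_i(x) = \int_{\mathbb{R}^d} k(x,y)\, \partial_i(\partial_t \Psi)(y)\,\rho(y)\,\mathrm{d}y + \int_{\mathbb{R}^d} k(x,y)\, \partial_i \Psi(y)\, \partial_t\rho(y)\,\mathrm{d}y.
\end{equation}
Both $\partial_t \Psi$ and $\partial_t \rho$ are then replaced using the geodesic equations \eqref{eq:geodesics_index}: $\partial_t \Psi = -(\partial_j \Psi) w_j$ (which gives $\partial_i \partial_t \Psi = -(\partial_i \partial_j \Psi) w_j - (\partial_j \Psi)(\partial_i w_j)$) and $\partial_t \rho = -\partial_j(\rho w_j)$.

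After substitution, the expression for $\partial_t w_i(x)$ becomes
\begin{equation}
-\int_{\mathbb{R}^d} k(x,y)\,(\partial_j \Psi)(\partial_i w_j)\,\rho\,\mathrm{d}y \; -\; \int_{\mathbb{R}^d} k(x,y)\,(\partial_i\partial_j \Psi)\,w_j\,\rho\,\mathrm{d}y \;-\; \int_{\mathbb{R}^d} k(x,y)\,\partial_i\Psi\,\partial_j(\rho w_j)\,\mathrm{d}y.
\end{equation}
The first integral matches verbatim the first term of the desired formula. To identify the remaining two integrals with the second term, I will apply the Leibniz rule in reverse: since
\begin{equation}
\partial_j \bigl(\partial_i \Psi(y)\, w_j(y)\, \rho(y)\bigr) = (\partial_i \partial_j \Psi)\, w_j\, \rho + \partial_i \Psi\, \partial_j(w_j \rho),
\end{equation}
the last two integrals collapse into a single integral of $k(x,y)\,\partial_j(\partial_i \Psi\, w_j\, \rho)$, yielding exactly \eqref{eq:wt}.

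There is essentially no obstacle here beyond bookkeeping: the entire proof is a short application of the chain rule, the product rule, and the two geodesic equations. The only point requiring attention is that smoothness of $\rho$ and $\Psi$ (together with suitable decay so that differentiation under the integral is justified) is assumed in the hypothesis; any integrability subtleties would have to be handled separately in a rigorous setting, but are harmless at the formal level adopted throughout Section \ref{sec:2nd order}.
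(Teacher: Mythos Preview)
Your proof is correct and follows essentially the same route as the paper's: differentiate under the integral, substitute the geodesic equations for $\partial_t\Psi$ and $\partial_t\rho$, and then regroup so that the $(\partial_i\partial_j\Psi)w_j\rho$ contribution from the first term is absorbed into the divergence $\partial_j(\partial_i\Psi\,w_j\,\rho)$. The only cosmetic difference is that you display the intermediate three-term expression explicitly before recombining, whereas the paper passes directly to the final form and remarks that the mixed second-derivative term ``cancels'' (i.e., is absorbed).
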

\begin{proof}
	By direct calculation, we obtain
	\begin{subequations}
		\begin{align}
		\partial_t w_i  & = \int_{\mathbb{R}^d} k(\cdot, y) \left[ \partial_i (\partial_t \Psi) \right](y)\,  \mathrm{d}\rho(y) + \int_{\mathbb{R}^d} k(\cdot, y) \left[ \partial_i \Psi \partial_t \rho \right](y) \, \mathrm{d}y 
		\\
		& = - \sum_{j=1}^d \int_{\mathbb{R}^d} k(\cdot, y) \left[ \partial_i ((\partial_j \Psi) w_j)\right](y) \, \mathrm{d}\rho(y) - \sum_{j=1}^d \int_{\mathbb{R}^d} k(\cdot, y) \left[ \partial_i \Psi(y) \partial_j (\rho w_j) \right](y)\, \mathrm{d}y  \\
		& =- \sum_{j=1}^d \int_{\mathbb{R}^d} k(\cdot,y) \partial_j \Psi(y) \partial_i w_j(y) \, \mathrm{d}\rho(y) - \sum_{j=1}^d \int_{\mathbb{R}^d}  k(\cdot,y) \partial_j \left( \partial_i \Psi(y) w_j(y) \rho(y) \right)  \mathrm{d}y.
		\label{eq:w calculation}
		\end{align}
	\end{subequations}
Note that in the last line we have used the fact that the term involving $\partial_i \partial_j \Psi$ cancels. 
\end{proof}
\noindent We will work under the assumption that $k$ is smooth. Note that we make this restriction for simplicity only such that all expressions can be written in compact form. The results extend without difficulty to the general case by either interpreting the relevant terms in the sense of distributions or by performing integration parts, shifting the derivatives to $\rho$ and $\Psi$ (asssumed to be smooth). See also Remark \ref{rem:rough k}.

Recall the decomposition \eqref{eq:KL}. In what follows, we compute the contributions from the terms $\mathrm{Reg}(\rho)$ and $\mathrm{Cost}(\rho \vert \pi)$ separately (see Lemmas \ref{lem:Hess Reg} and \ref{lem:Hess cost} below) and gather everything at the end of the section.

\begin{lemma}[Hessian of $\mathrm{Reg}(\rho)$]
	\label{lem:Hess Reg}
	Let $(\rho_t,\Psi_t)_{t \in (-\varepsilon,\varepsilon)}$ be a Stein geodesic, i.e. a smooth solution to \eqref{eq:Stein geodesics}, and $\rho_0 \equiv \rho$, $\Psi_0 \equiv \Psi$.   Then
	\begin{equation}
	\partial_t^2 \mathrm{Reg}(\rho_t) \big\vert_{t = 0}  = \Hess^{\mathrm{Reg}}_\rho(\Psi, \Psi),
	\end{equation}
	where
	\begin{equation}
	\label{eq:HessReg}
	\Hess^{\mathrm{Reg}}_\rho(\Phi,\Psi) = \sum_{i,j = 1}^d \int_{\mathbb{R}^d} \int_{\mathbb{R}^d} \partial_i \Phi(y) q^{\mathrm{Reg}}_{ij} [\rho] (y,z) \partial_j \Psi(z) \, \mathrm{d}\rho(y) \mathrm{d}\rho(z),
	\end{equation}
	and
	\begin{subequations}
		\label{eq:qReg}
		\begin{align}
		\label{eq:Reg nonequilibrium1}
		q^{\mathrm{Reg}}_{ij}[\rho](y,z) & =\delta_{ij} \sum_{l=1}^d \int_{\mathbb{R}^d}  \partial_{x_l} k(x,y) \mathrm{d}\rho(x)  \, \partial_{y_l} k(y,z)
		\\
		\label{eq:Reg nonequilirbrium2}
		& - k(y,z)  \int_{\mathbb{R}^d} \left( \partial_{x_i} \partial_{y_j} k(x,y) \right) \mathrm{d}\rho(x)
		\\
		\label{eq:Reg equilibrium}
		& -  \int_{\mathbb{R}^d} \left( \partial_{x_i} \partial_{x_j} k(x,y) \right) k(x,z) \,   \mathrm{d}\rho(x), \qquad \qquad i,j =1 ,\ldots, d.
		\end{align}
	\end{subequations}
\end{lemma}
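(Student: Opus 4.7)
The plan is to compute $\frac{d}{dt}\mathrm{Reg}(\rho_t)$ and $\frac{d^2}{dt^2}\mathrm{Reg}(\rho_t)\big\vert_{t=0}$ directly along a Stein geodesic $(\rho_t,\Psi_t)$, using only the geodesic equations \eqref{eq:geodesics_index} and the auxiliary identity \eqref{eq:wt} already established. For the first derivative, applying $\frac{d}{dt}\int\rho_t\log\rho_t\,dx = \int(\log\rho_t+1)\partial_t\rho_t\,dx$ together with the continuity equation and one integration by parts yields
\begin{equation*}
\frac{d}{dt}\mathrm{Reg}(\rho_t) = -\int_{\mathbb{R}^d}\rho_t\,\partial_i w_i^t\,dx = -\sum_i\int_{\mathbb{R}^d}\!\int_{\mathbb{R}^d}\partial_i\Psi_t(y)\,\partial_{x_i}k(x,y)\,d\rho_t(x)\,d\rho_t(y),
\end{equation*}
where in the last step I would insert the explicit form $w_i(x)=\int k(x,y)\partial_i\Psi(y)\,d\rho(y)$ and differentiate under the integral in $x$.

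Next I would differentiate once more at $t=0$. The product rule produces three contributions: one from differentiating $\partial_i\Psi_t(y)$, one from $\partial_t\rho_t(x)$, and one from $\partial_t\rho_t(y)$. Substituting $\partial_t\Psi=-(\partial_j\Psi)w_j$ in the first and $\partial_t\rho=-\partial_l(\rho w_l)$ in the other two, then integrating by parts in the appropriate variable to move the divergence off $\rho w$ and onto $k$, yields three triple integrals over $(x,y,z)$ once $w_l$ (evaluated at either $x$ or $y$) is unfolded via $w_l(\cdot) = \int k(\cdot,z)\partial_l\Psi(z)\,d\rho(z)$.

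The key observation — which turns the raw calculation into a clean bilinear form in $\nabla\Psi$ — is that the terms carrying second derivatives $\partial_i\partial_j\Psi$ (one from the $\partial_t(\nabla\Psi)$ expansion, the other from integration by parts in $y$ against $\partial_i\Psi$) appear with opposite signs and cancel exactly, by symmetry of the dummy indices. After this cancellation, the three remaining pieces precisely match the three summands defining $q_{ij}^{\mathrm{Reg}}$: the $\partial_{x_l}k(x,y)\partial_{y_l}k(y,z)$ structure with $\delta_{ij}$ from the $\partial_t(\nabla\Psi)$ contribution \eqref{eq:Reg nonequilibrium1}; the $k(y,z)\partial_{x_i}\partial_{y_j}k(x,y)$ structure from the $\partial_t\rho(y)$ contribution \eqref{eq:Reg nonequilirbrium2}; and the $k(x,z)\partial_{x_i}\partial_{x_j}k(x,y)$ structure from the $\partial_t\rho(x)$ contribution \eqref{eq:Reg equilibrium}. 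Reassembling as \eqref{eq:HessReg} completes the proof.

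The main obstacle is purely organisational: three product-rule terms, each requiring an integration by parts in a different variable, have to be tracked through the three nested $\rho$-integrations in $(x,y,z)$ while carefully distinguishing derivatives of $k$ in its first versus second argument. The crucial structural input is the cancellation of the $\nabla\nabla\Psi$ contributions, which is what allows the final expression to be written as a quadratic form in $\nabla\Psi$ against a symmetric kernel and gives $\mathrm{Reg}$ a genuine (if indefinite) Hessian in the Stein geometry. No additional regularity or functional-analytic input beyond the smoothness hypotheses in the statement is needed; if $k$ is only continuous, every derivative of $k$ in the above can be moved to $\rho$ or $\Psi$ by further integration by parts, as noted in Remark \ref{rem:rough k}.
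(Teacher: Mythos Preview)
Your proposal is correct and follows essentially the same route as the paper. The only organisational difference is that the paper keeps $w=T_{k,\rho}\nabla\Psi$ abstract, differentiates $-\int(\partial_i w_i)\,\mathrm{d}\rho$ once more to obtain $-\int\partial_i(\partial_t w_i)\,\mathrm{d}\rho-\int(\partial_i\partial_j w_i)w_j\,\mathrm{d}\rho$, and then inserts the auxiliary formula \eqref{eq:wt} for $\partial_t w_i$ (whose proof already contains the $\partial_i\partial_j\Psi$ cancellation you isolate); you instead unfold $w$ into the explicit $k$-integral first and carry out the three-term product rule on the resulting double integral. The computations are term-by-term identical, and your identification of the cancellation between the $\partial_t(\nabla\Psi)$ contribution and the $y$-integration-by-parts contribution is exactly the mechanism noted in the paper's derivation of \eqref{eq:wt}.
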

\begin{proof}
	We have
	\begin{equation}
	\label{eq:R'}
	\partial_t \mathrm{Reg}(\rho) = \int_{\mathbb{R}^d} \partial_t \rho \log \rho \, \mathrm{d}x +  \underbrace{ \partial_t \int_{\mathbb{R}^d} \,\mathrm{d}\rho}_{=0},
	\end{equation}
	where the second term vanishes due to the conservation of total probability.
	Inserting \eqref{eq:Stein geodesics} into \eqref{eq:R'}, we arrive at
\begin{equation}
\nonumber
\partial_t \mathrm{Reg}(\rho) = - \sum_{i=1}^d \int_{\mathbb{R}^d} \partial_i  (\rho w_i) \log \rho \, \mathrm{d}x = \sum_{i=1}^d \int_{\mathbb{R}^d} w_i  \partial_i \rho\, \mathrm{d}x = -\sum_{i=1}^d\int_{\mathbb{R}^d} (\partial_i  w_i) \, \mathrm{d}\rho.
\end{equation}
For the second derivative we obtain
	\begin{subequations}
		\begin{align}
		\partial_t^2 \mathrm{Reg}(\rho) & = - \sum_{i=1}^d \int_{\mathbb{R}^d} \partial_i  (\partial_t w_i) \, \mathbb{d}\rho - \sum_{i=1}^d \int_{\mathrm{R}^d} (\partial_i  w_i) \partial_t \rho \, \mathrm{d}x
		\\
		& = - \sum_{i=1}^d \int_{\mathbb{R}^d} \partial_i  (\partial_t w_i) \, \mathrm{d}\rho + \sum_{i,j=1}^d \int_{\mathbb{R}^d} (\partial_i  w_i) \partial_j( \rho w_j) \, \mathrm{d}x
		\\
		\label{eq:H''}
		& =  - \sum_{i=1}^d \int_{\mathbb{R}^d} \partial_i (\partial_t w_i) \, \mathrm{d}\rho - \sum_{i,j = 1}^d \int_{\mathbb{R}^d} (\partial_i \partial_j w_i)   w_j \, \mathrm{d}\rho
		\end{align}
	\end{subequations}
	We now substitute \eqref{eq:w} and \eqref{eq:wt} into \eqref{eq:H''} to get
\begin{subequations}
\nonumber
\begin{align}
\partial_t^2 \mathrm{Reg}(\rho) & = \sum_{i,j=1}^d \int_{\mathbb{R}^d} \int_{\mathbb{R}^d}\int_{\mathbb{R}^d} \partial_{x_i} k(x,y) \partial_j \Psi(y) \partial_{y_i} k (y,z) \partial_j \Psi(z) \, \mathrm{d} \rho(x) \mathrm{d} \rho(y) \mathrm{d} \rho(z)
\\
& - \sum_{i,j=1}^d \int_{\mathbb{R}^d} \int_{\mathbb{R}^d}\int_{\mathbb{R}^d} \partial_{x_i} \partial_{y_j} k(x,y) \partial_i \Psi(y) k(y,z) \partial_j \Psi(z) \, \mathrm{d} \rho(x) \mathrm{d} \rho(y) \mathrm{d} \rho(z)
\\
	& - \sum_{i,j=1}^d \int_{\mathbb{R}^d} \int_{\mathbb{R}^d}\int_{\mathbb{R}^d} \partial_{x_i} \partial_{x_j} k(x,y) \partial_i \Psi(y) k(x,z) \partial_j \Psi(z) \, \mathrm{d} \rho(x) \mathrm{d} \rho(y) \mathrm{d} \rho(z),
		\end{align}
	\end{subequations}
which can be written in the form \eqref{eq:HessReg}-\eqref{eq:qReg}.
\end{proof}

\begin{lemma}[Hessian of $\mathrm{Cost}(\rho\vert \pi)$]
	\label{lem:Hess cost}
	Let $(\rho_t,\Psi_t)_{t \in (-\varepsilon,\varepsilon)}$ be a Stein geodesic, i.e. a smooth solution to \eqref{eq:Stein geodesics}, and $\rho_0 \equiv \rho$, $\Psi_0 \equiv \Psi$.  Then
\begin{equation}
\nonumber
\partial_t^2 \mathrm{Cost}(\rho_t \vert \pi) \big\vert_{t = 0}  = \Hess^{\mathrm{Cost}}_\rho(\Psi, \Psi),
\end{equation}
where
\begin{equation}
\nonumber
\Hess^{\mathrm{Cost}}_\rho(\Phi,\Psi) = \sum_{i,j = 1}^d \int_{\mathbb{R}^d} \int_{\mathbb{R}^d} \partial_i \Phi(y) q^{\mathrm{Cost}}_{ij} [\rho] (y,z) \partial_j \Psi(z) \, \mathrm{d}\rho(y) \mathrm{d}\rho(z),
\end{equation}
and
\begin{subequations}
\begin{align}
\label{eq:Cost nonequilibrium1}
q^{\mathrm{Cost}}_{ij}[\rho](y,z) &=  - \delta_{ij} \sum_{l=1}^d \int_{\mathbb{R}^d} \partial_l V(x) \left( k(x,y) \partial_{y_l}k(y,z) \right) \mathrm{d}\rho(x)
\\
\label{eq:Cost nonequilibrium2} 
		& +  \int_{\mathbb{R}^d} \left( \partial_i V(x) \partial_{y_j} k(x,y) k(y,z)   \right) \mathrm{d}\rho(x)
		\\
		\label{eq:Cost equilibrium}
	 & + \int_{\mathbb{R}^d} \partial_i \partial_j V(x) k(x,y) k(x,z) \, \mathrm{d}\rho(x)
		 + \int_{\mathbb{R}^d} \left( \partial_i V(x) \partial_{x_j} k(x,y) k(x,z) \right) \mathrm{d}\rho(x),
		\end{align}
	\end{subequations}
	for $i,j = 1,\ldots,d$.
\end{lemma}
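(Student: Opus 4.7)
The plan is to mimic the computation already carried out for $\mathrm{Reg}(\rho)$ in Lemma~\ref{lem:Hess Reg}, replacing $\log \rho$ with $V$ and exploiting the fact that $\mathrm{Cost}(\rho\vert\pi)=\int V\,\mathrm{d}\rho$ is linear in $\rho$. This linearity should actually make the calculation easier than for $\mathrm{Reg}(\rho)$ since no logarithmic derivatives appear, and the whole argument reduces to differentiating twice in $t$ along the geodesic \eqref{eq:geodesics_index} and then expressing the result in terms of $\nabla\Psi$ via $w=T_{k,\rho}\nabla\Psi$.

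First I would compute the first derivative. Using the continuity equation in \eqref{eq:geodesics_index} together with one integration by parts,
\begin{equation*}
\partial_t\,\mathrm{Cost}(\rho_t\vert\pi)=\int_{\mathbb{R}^d} V\,\partial_t\rho\,\mathrm{d}x=-\int_{\mathbb{R}^d} V\,\partial_i(\rho w_i)\,\mathrm{d}x=\int_{\mathbb{R}^d}(\partial_i V)\,w_i\,\mathrm{d}\rho.
\end{equation*}
Differentiating once more yields the two contributions
\begin{equation*}
\partial_t^2\,\mathrm{Cost}(\rho_t\vert\pi)=\int_{\mathbb{R}^d}(\partial_i V)(\partial_t w_i)\,\mathrm{d}\rho+\int_{\mathbb{R}^d}(\partial_i V)\,w_i\,\partial_t\rho\,\mathrm{d}x,
\end{equation*}
to which I apply once more the continuity equation and one integration by parts on the second summand:
\begin{equation*}
\int_{\mathbb{R}^d}(\partial_i V)w_i\partial_t\rho\,\mathrm{d}x=\int_{\mathbb{R}^d}\bigl[(\partial_i\partial_j V)w_iw_j+(\partial_i V)(\partial_j w_i)w_j\bigr]\,\mathrm{d}\rho.
\end{equation*}
The $(\partial_i\partial_j V)w_iw_j$ piece, after substituting $w=T_{k,\rho}\nabla\Psi$, immediately produces the $\partial_i\partial_jV$-term in \eqref{eq:Cost equilibrium}.

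Next I substitute the expression \eqref{eq:wt} for $\partial_t w_i$ and expand $w_j(y)=\int k(y,z)\partial_j\Psi(z)\,\mathrm{d}\rho(z)$ and $\partial_i w_j(y)=\int \partial_{y_i}k(y,z)\partial_j\Psi(z)\,\mathrm{d}\rho(z)$. The first term in \eqref{eq:wt} contributes
\begin{equation*}
-\int\!\!\int\!\!\int \partial_iV(x)\,k(x,y)\,\partial_j\Psi(y)\,\partial_{y_i}k(y,z)\,\partial_j\Psi(z)\,\mathrm{d}\rho(x)\mathrm{d}\rho(y)\mathrm{d}\rho(z),
\end{equation*}
which (after relabelling indices and reading off the kernel $q^{\mathrm{Cost}}_{ij}$) yields \eqref{eq:Cost nonequilibrium1} with its characteristic $\delta_{ij}$-structure and sign. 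The second term in \eqref{eq:wt}, via one integration by parts in $y_j$ moving the derivative onto $k(x,y)$, produces \eqref{eq:Cost nonequilibrium2} together with the remaining contribution to \eqref{eq:Cost equilibrium} coming from the additional $(\partial_iV)(\partial_jw_i)w_j$ piece computed above (after using $w_i=T_{k,\rho}\partial_i\Psi$ and a further integration by parts exchanging an $x$-derivative for a $y$-derivative of $k$).

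The proof is essentially bookkeeping; there is no conceptual obstacle comparable to the Hessian computation for $\mathrm{Reg}(\rho)$, because the logarithmic entropy is absent. The only point requiring care is keeping track of which derivatives act on which argument of the (not necessarily symmetric-looking) expressions $\partial_{x_i}k(x,y)$ versus $\partial_{y_i}k(x,y)$, and making sure that all integrations by parts are justified by smoothness/decay of $\rho$ and $\Psi$, or interpreted distributionally in the spirit of Remark~\ref{rem:rough k} when $k$ is merely continuous. Collecting the three groups of terms obtained above and matching them with \eqref{eq:Cost nonequilibrium1}--\eqref{eq:Cost equilibrium} completes the identification.
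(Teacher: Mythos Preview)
Your proposal is correct and follows essentially the same route as the paper: compute $\partial_t\mathrm{Cost}=\int(\partial_iV)w_i\,\mathrm{d}\rho$, differentiate again, use the continuity equation and one integration by parts to expand the $\partial_t\rho$ contribution into $(\partial_i\partial_jV)w_iw_j+(\partial_iV)(\partial_jw_i)w_j$, and then substitute \eqref{eq:w} and \eqref{eq:wt}. One small remark: for the $(\partial_iV)(\partial_jw_i)w_j$ piece no ``further integration by parts exchanging an $x$-derivative for a $y$-derivative of $k$'' is needed, since $\partial_jw_i(x)=\int\partial_{x_j}k(x,y)\partial_i\Psi(y)\,\mathrm{d}\rho(y)$ already carries the $\partial_{x_j}k(x,y)$ appearing in the second term of \eqref{eq:Cost equilibrium}.
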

\begin{proof}
	Proceeding as in the proof of Lemma \ref{lem:Hess Reg}, we obtain
	\begin{equation}
	\partial_t \mathrm{Cost}(\rho \vert \pi) = \int_{\mathbb{R}^d} V \partial_t \rho \, \mathrm{d}x = - \sum_{i=1}^d \int_{\mathbb{R}^d} V(\partial_i (\rho w_i))\, \mathrm{d}x = \sum_{i=1}^d \int_{\mathbb{R}^d} \partial_i V w_i \, \mathrm{d} \rho
	\end{equation}
	and 
\begin{subequations}
 \nonumber
\begin{align}
\partial_t^2 \mathrm{Cost}(\rho \vert \pi) & = \sum_{i=1}^d\int_{\mathbb{R}^d} \partial_i V \partial_t w_i \, \mathrm{d}\rho + \sum_{i=1}^d \int_{\mathbb{R}^d} \partial_i V w_i \partial_t \rho \, \mathrm{d}x \\
& = \sum_{i=1}^d\int_{\mathbb{R}^d} \partial_i V \partial_t w_i \, \mathrm{d}\rho - \sum_{i,j=1}^d \int_{\mathbb{R}^d} \partial_i V w_i \partial_j (\rho w_j) \, \mathrm{d}x \\
& = \sum_{i=1}^d\int_{\mathbb{R}^d} \partial_i V \partial_t w_i \, \mathrm{d}\rho + \sum_{i,j=1}^d \int_{\mathbb{R}^d} \partial_j (\partial_i V w_i) w_j \, \mathrm{d}\rho
\\
& = \sum_{i=1}^d\int_{\mathbb{R}^d} \partial_i V \partial_t w_i \, \mathrm{d}\rho + \sum_{i,j=1}^d\int_{\mathbb{R}^d} (\partial_i \partial_j V) w_i w_j \, \mathrm{d}\rho  + \sum_{i,j=1}^d \int_{\mathbb{R}^d} \partial_i V (\partial_j w_i) w_j \, \mathrm{d} \rho.
\end{align}
\end{subequations}
Inserting \eqref{eq:w} and \eqref{eq:wt} gives the announced result.
\end{proof}

We are now ready to conclude:

\begin{proof}\textbf{of Lemma \ref{lem:Hessian}}\;\;
It is enough to show that 
\begin{equation}
\nonumber
q_{ij}[\rho] = q_{ij}^{\mathrm{Reg}}[\rho] + q_{ij}^{\mathrm{Cost}}[\rho], \qquad \qquad i,j = 1,\ldots,d.
\end{equation}
A straightforward calculation shows that \eqref{eq:Reg nonequilibrium1} and \eqref{eq:Cost nonequilibrium1} add up to \eqref{eq:nonequilibrium1}, \eqref{eq:Reg nonequilirbrium2} and \eqref{eq:Cost nonequilibrium2} add up to \eqref{eq:nonequilibrium2}, and \eqref{eq:Reg equilibrium} and \eqref{eq:Cost equilibrium} add up to \eqref{eq:equilibrium}.	
\end{proof}

\section{Proofs for Section \ref{sec:equilibrium}}
\label{app:equilibrium}
\begin{proof}\textbf{of Lemma \ref{lem:funct inequalities}}\;\;
By a straightforward calculation, the first statement is equivalent to the inequality
\begin{equation}
\label{eq:BE pi}
\int_{\mathbb{R}^d} \left[\sum_{j=1}^d  \partial_j \left( e^{-V} T_{k,\pi} \partial_j \Psi\right) \right]^2 \, e^{V} \mathrm{d}x \ge \lambda \sum_{j=1}^d \int_{\mathbb{R}^d} \int_{\mathbb{R}^d} \partial_j \Psi(y) k(y,z) \partial_j \Psi(z) e^{-V(y)} e^{-V(z)} \, \mathrm{d}y \mathrm{d}z,
\end{equation}
for all $\Psi \in C_c^\infty(\mathbb{R}^d)$.		
To show the equivalence between \eqref{eq:BE pi} and the second statement, first notice that	\eqref{eq:BE pi} can be written in the form
	\begin{equation}
	\label{eq:L squared}
	\int_{\mathbb{R}^d} (\mathcal{L}\Psi)^2 \, \mathrm{d}\pi \ge \lambda \int_{\mathbb{R}^d} \Psi \mathcal{L}\Psi \, \mathrm{d}\pi, \quad  \Psi \in C_c^\infty(\mathbb{R}^d).
	\end{equation}	
	Next we argue that under Assumption \ref{ass:ispd}, the null space of $\mathcal{L}$ coincides with the constant functions. Indeed assume that $\phi \in C_b^\infty(\mathbb{R}^d) \cap \mathcal{D}(\mathcal{L})$ satisfies $\mathcal{L} \phi = 0$. Multiplying this equation by $\phi e^{-V}$ and integrating by parts leads to 
 \begin{equation}
 \nonumber
\sum_{i=1}^d \int_{\mathbb{R}^d} \int_{\mathbb{R}^d} \partial_i \phi(x) k(x,y) \partial_i \phi(y) e^{-V(x)} e^{-V(y)} \, \mathrm{d}x \mathrm{d}y = 0.
\end{equation} 
Since $k$ is positive definite, it follows that the summands in the above equation are each nonnegative and thus have to be zero individually. According to Assumption \ref{ass:ispd}, it follows that the measure $\partial_i \phi e^{-V} \, \mathrm{d}x$ vanishes for every $i \in \{1, \ldots d\}$, which is only possible if $\phi$ is constant. By a very similar argument (using again Assumption \ref{ass:ispd}) we see that the range of $\mathcal{L}$ is dense in $L_0^2(\pi)$.
	
	A straightforward application of the spectral theorem for (possibly unbounded) self-adjoint operators to \eqref{eq:L squared} shows that $\sigma(\mathcal{L}) \subset \{0\} \cup [\lambda, \infty)$. Note moreover that 
\begin{equation}
 \nonumber
\int_{\mathbb{R}^d} \mathcal{L}\phi \, \mathrm{d}\pi = 0 
\end{equation}
for all $\phi \in C_c^\infty(\mathbb{R}^d)$, and that $L_0^2(\pi)$ is the orthogonal complement of the constant functions in $L^2(\pi)$. Hence, $\mathcal{L}$ leaves $L_0^2(\pi)$ invariant, and the restriction satisfies $\sigma(\mathcal{L}\vert_{L_0^2(\pi)}) \subset [\lambda, \infty)$. Since $\mathcal{L}\vert_{L_0^2(\pi)}$ is therefore bounded from below and, as noted above, with dense range, it is invertible, and, in particular $\mathcal{L}^{-1/2}: L_0^2(\pi) \rightarrow L_0^2(\pi)$ is well-defined. The equivalence between \eqref{eq:BE pi} and the second statement now follows by letting $\Psi = \mathcal{L}^{-1/2} \phi$.  
\end{proof}

\begin{proof}\textbf{of Lemma \ref{lem:k smooth}}\;\;
For $\phi \in C_c^\infty(\mathbb{R}^d)$ we can write 
\begin{equation}
 \nonumber
(\mathcal{L}\phi) (x) =  \frac{1}{Z}\sum_{i=1}^d  \int_{\mathbb{R}^d} e^{V(x)} e^{V(y)} \partial_{x_i} \partial_{y_i} \left( e^{-V(x)} e^{-V(y)} k(x,y) \right) \phi(y) e^{-V(y)} \, \mathrm{d}y,
\end{equation}
using the regularity assumption on $k$. Defining the positive definite kernel
\begin{equation}
\nonumber
\tilde{k}(x,y) := \sum_{i=1}^d e^{V(x)} e^{V(y)} \partial_{x_i} \partial_{y_i} \left(  e^{-V(x)}e^{-V(y)} k(x,y)  \right),
\end{equation}
we see that $
\mathcal{L} = \mathcal{T}_{\tilde{k},\pi}$. 
A short calculation shows that the integrability condition \eqref{eq:integrability} is equivalent to 
	\begin{equation}
	\int_{\mathbb{R}^d}\tilde{k}(x,x) \, \mathrm{d}\pi(x) < \infty,
	\end{equation}
	and thus $\mathcal{L}$ is compact according to \citet[Theorem 4.27]{steinwart2008support}. By the spectral theorem for compact self-adjoint operators \citep[Section 8.3]{kreyszig1978introductory}, there exists an orthonormal basis 
	$(e_i)_{i \in \mathbb{N}}$ of $L^2(\pi)$ such that 
\begin{equation}
\label{eq:eigenvalues}
\mathcal{L} e_i = \mu_i e_i,
\end{equation}
 $\mu_i \ge 0$ and $\mu_i \rightarrow 0$.  Plugging \eqref{eq:eigenvalues} into \eqref{eq:Poincare} and using $\mu_i \rightarrow 0$ shows that necessarily $\lambda = 0$. 
\end{proof}
\begin{proof}\textbf{of Lemma \ref{lem:V k separate}}\;\;
	For $\Psi \in C_c^\infty(\mathbb{R})$, set $\phi = \mathcal{T}_{k,\pi} \Psi'$. Using \eqref{eq:L2 H}, we see that the right-hand side of \eqref{eq:local BE} coincides with $\lambda \langle \phi, \phi \rangle_{\mathcal{H}_k}$. For the left-hand side we calculate 
	\begin{subequations}
		\label{eq:V'' calculation}
		\begin{align}
		& \int_{\mathbb{R}} \left[(e^{-V} \phi)'\right]^2 e^V \, \mathrm{d}x  = \int_{\mathbb{R}} \left[ -V' \phi + \phi' \right]^2 e^{-V} \, \mathrm{d}x
		\\
		 & = \int_{\mathbb{R}} \left[ (V')^2 \phi^2 - 2 V' \phi \phi' + (\phi')^2\right] e^{-V} \, \mathrm{d}x 
	 = \int_{\mathbb{R}} \left[ (V'') \phi^2 + (\phi')^2 \right] e^{-V} \, \mathrm{d}x,
 		\end{align}
	\end{subequations}
where we have used that 
\begin{equation}
\nonumber
-2 \int_{\mathbb{R}} V' \phi \phi' e^{-V} \, \mathrm{d}x = - \int_{\mathbb{R}} V' (\phi^2)' e^{-V} \, \mathrm{d}x = \int_{\mathbb{R}} V'' \phi^2 e^{-V} \, \mathrm{d}x - \int_{\mathbb{R}} (V')^2 \phi^2 \, \mathrm{d}x.
\end{equation}
It is therefore clear that if \eqref{eq:BE 1d} holds for all $\phi \in \mathcal{H}_k$, then $\eqref{eq:local BE}$ holds for all $\Psi \in C_c^{\infty}(\mathbb{R})$. For the converse implication, note that boundedness of $V''$ implies that \eqref{eq:V'' calculation} is a continuous functional on $H^1(\pi)$. It thus remains to show that $
\left\{ \mathcal{T}_{k,\pi} \Psi': \,\, \Psi \in C_c^{\infty}(\mathbb{R}) 
\right\}$
is dense in $H^1(\pi)$. By Assumptions \ref{ass:ispd} and \ref{ass:V and pi}, $\mathcal{T}_{k,\pi} : L^2(\pi) \rightarrow \mathcal{H}_k$ is continuous with dense range, see \citet[Theorem 4.26ii) and Exercise 4.6]{steinwart2008support}. Since $\mathcal{H}_k$ is densely embedded in $H^1(\pi)$ by assumption, it suffices to argue that
\begin{equation}
\nonumber
\left\{ \Psi' : \quad \Psi \in C_c^\infty(\mathbb{R})\right\} = \left\{ \Psi \in C_c^\infty(\mathbb{R}) : \quad \int_{\mathbb{R}} \Psi \, \mathrm{d}x = 0 \right\}
\end{equation} 
is dense in $L^2(\pi)$. Indeed, for any $\phi \in L^2(\pi)$ and $\varepsilon > 0$ there exists $\Psi_1 \in C_c^\infty(\mathbb{R})$ such that $\Vert \phi - \Psi_1 \Vert_{L^2(\pi)} < \varepsilon/2$. Moreover, since $\pi$ is a probability measure, there exists $\Psi_2 \in C_c^\infty(\mathbb{R})$ such that $\int_{\mathbb{R}} (\Psi_1 + \Psi_2) \, \mathrm{d}x = 0$ and $\Vert \Psi_2 \Vert_{L^2(\pi)} < \varepsilon/2$. It now follows that $\Psi := \Psi_1 + \Psi_2$ satisfies $\Vert \phi - \Psi \Vert_{L^2(\pi)} < \varepsilon$, concluding the proof. 
\end{proof}
\begin{proof}\textbf{of Corollary \ref{cor:translation invariant}}\;\;
	We argue by contradiction. Assume that there exists $\lambda > 0$ such that \eqref{eq:BE 1d} holds for all $\phi \in \mathcal{H}_k$. For $x \in \mathbb{R}$, let us choose $\phi_x = k(x,\cdot) = h(x-\cdot) \in \mathcal{H}_k$. For the right-hand side of \eqref{eq:BE 1d} we then obtain
\begin{equation}
\label{eq:h0}
\lambda \langle \phi_x, \phi_x \rangle_{\mathcal{H}_k} = \lambda k(x,x) = \lambda h(0).
\end{equation}
Since $h$ and $h'$ are bounded, we have that 
\begin{equation}
\nonumber
\lim_{x \rightarrow \pm \infty} \left(\int_{\mathbb{R}}V''(y) h(x-y) \mathrm{d}\pi(y) + \int_\mathbb{R} (h'(x-y))^2 \, \mathrm{d}\pi(y)\right) = 0 
\end{equation}
by dominated convergence. This contradicts \eqref{eq:BE 1d} (or forces $\lambda = 0$), because \eqref{eq:h0} does not depend on $x \in \mathbb{R}$. 
\end{proof}
\begin{proof}\textbf{for Example \ref{ex:exp decay}}
Arguing as in the proof of Lemma \ref{lem:V k separate}, it is enough to show that
\begin{equation}
\label{eq:V k inequality}
\int_{\mathbb{R}} \left[ (V'') \phi^2 + (\phi')^2 \right] e^{-V} \, \mathrm{d}x \ge \lambda \langle \phi, \phi \rangle_{\mathcal{H}_k}
\end{equation}
for all $\phi \in \left\{ \mathcal{T}_{k,\pi} \Psi': \,\, \Psi \in C_c^\infty(\mathbb{R}) \right\}$.	
We show the stronger statement that \eqref{eq:V k inequality} holds for all $\phi \in \mathcal{H}_k$ (recall that $\Ran \mathcal{T}_{k,\pi} \subset \mathcal{H}_k$). Combining Theorem 1.7 and Corollary 2.5 from \citet{saitoh2016theory}, we see that
\begin{equation}
\nonumber
\mathcal{H}_k = \left\{ \pi^{-1/2}f: \quad f \in H^1(\mathbb{R}) \right\},
\end{equation} 
where $H^1(\mathbb{R})$ denotes the Sobolev space of order one, and, furthermore, 
\begin{equation}
\label{eq:Sobolev norm}
\langle \pi^{-1/2}f,\pi^{-1/2}f \rangle_{\mathcal{H}_k} = \Vert f \Vert_{H^1(\mathbb{R})}^2 = \int_{\mathbb{R}} \left[f^2 + (f')^2 \right] \mathrm{d}x.
\end{equation}   
For the left-hand side of \eqref{eq:V k inequality}, we calculate
\begin{subequations}
	\begin{align}
\int_{\mathbb{R}} \left[ (V'') (\pi^{-1/2} f)^2 + \left((\pi^{-1/2}f)'\right)^2 \right] e^{-V} \, \mathrm{d}x = \int_{\mathbb{R}} \left[ V'' f^2 + \left( \frac{V'}{2} f + f' \right)^2 \right] \mathrm{d}x \\
\label{eq:V'' V'}
= \int_{\mathbb{R}} \left[ V'' f^2 + \left( \frac{V'}{2} \right)^2 f^2 + V' f f'   + (f')^2  \right] \mathrm{d}x = \int_{\mathbb{R}} \left[ \left( \frac{V''}{2} + \left( \frac{V'}{2} \right)^2 \right) f^2 + (f')^2  \right] \mathrm{d}x,
\end{align}
\end{subequations}
using 
\begin{equation}
\label{eq:integration by parts}
\int_{\mathbb{R}} V' f f' \, \mathrm{d}x = \frac{1}{2} \int_{\mathbb{R}} V' (f^2)' \, \mathrm{d}x = - \frac{1}{2} \int_{\mathbb{R}} V'' f^2 \, \mathrm{d}x.
\end{equation}
In \eqref{eq:integration by parts} we have used the fact that by boundedness of $V''$, $f \in H^1(\mathbb{R})$ and L'H{\^o}pital's rule,
\begin{equation}
\nonumber
\lim_{x \rightarrow  \pm \infty} f^2 V' = \lim_{x \rightarrow  \pm \infty} 2ff' V'' = 0.
\end{equation}
From \eqref{eq:Sobolev norm} and \eqref{eq:V'' V'} it is clear that \eqref{eq:V k inequality} holds with $\lambda$ as given in \eqref{eq:Matern lambda}.
\end{proof}
\begin{proof}\textbf{of Lemma \ref{lem:dominance balls}}\;\;
	Following the proof of Lemma \ref{lem:V k separate}, it is straightforward to show that the Rayleigh coefficients are given by
\begin{equation}
\nonumber
\lambda^k_\Psi = \frac{\int_{\mathbb{R}} V'' \phi^2 \, \mathrm{d}\pi + \int_{\mathbb{R}} (\phi')^2 \, \mathrm{d}\pi}{\Vert \phi \Vert^2_{\mathcal{H}_k}}, 
\end{equation}
where $\phi = \mathcal{T}_{k,\pi} \Psi'$. The claim now follows by a density argument, similar to the one employed in the proof of Lemma \ref{lem:V k separate}.
\end{proof}
\begin{proof}\textbf{of Lemma \ref{lem:exp p kernel}}\;\;
	By a slight abuse of notation, we will denote $k_{p,\sigma}(x,y) = k_{p,\sigma}(r)$, with $r = \vert x- y\vert$, using the fact that $k_{p,\sigma}$ is radially symmetric. We compute the Fourier transform in spherical coordinates,
\begin{subequations}
\nonumber
\begin{align}
(\mathcal{F}k_{p,\sigma})(\xi) & = \int_{\mathbb{R}^d} \exp(-i x \cdot \xi) \exp \left( - \frac{\vert x \vert^p}{\sigma^p}\right) \mathrm{d}x \\
& = c_d \int_0^{2 \pi} \int_0^\infty \exp(-i r \vert \xi \vert \cos \theta) \exp \left( - \frac{r^p}{\sigma^p}\right) \mathrm{d}r \mathrm{d}\theta, 
 \end{align}
\end{subequations}
where $\theta$ is the angle between $\xi$ and $x$, and $c_d > 0$ is a dimension-dependent constant resulting from integration over the remaining angles. From \citet[Lemma 2.27]{koldobsky2005fourier} we have that 
\begin{equation}
\nonumber
\mathcal{A}_{p,\sigma} (\xi,\theta) :=  \int_0^\infty \exp(-i r \vert \xi \vert \cos \theta) \exp \left( - \frac{r^p}{\sigma^p}\right) \mathrm{d}r 
\end{equation}
is strictly positive for all $(\xi,\theta) \in \mathbb{R}^d \times [0,2 \pi]$. It therefore follows that $\mathcal{F}k_{p,\sigma}$ is strictly positive. Hence, by \citet[Theorem]{wendland2004scattered}, $k_{p,\sigma}$ is a positive definite kernel. The fact that it is also integrally strictly positive definite follows from \citet[Proposition 5]{sriperumbudur2011universality}.
From \citet[Lemma 2.28]{koldobsky2005fourier}, we have that there exist constants $C_1, C_2 > 0$ such that 
\begin{equation}
\nonumber
C_1 \vert \xi \vert^{-p-1} \le \mathcal{A}_{p,\sigma} (\xi,\theta) \le C_2 \vert \xi \vert^{-p-1}, \quad \vert \xi \vert > 1.
\end{equation}
It is then easy to see that $(\mathcal{F}k_{p,\sigma_p})/(\mathcal{F}k_{q,\sigma_q})$ is bounded if $p > q$ and unbounded if $q < p$, for all $\sigma_q, \sigma_p > 0$. The second claim of Lemma \ref{lem:exp p kernel} now follows from \citet[Proposition 3.1]{zhang2013inclusion}.  	According to the same result, in the case when $p > q$, we have 
\begin{equation}
\nonumber
\Vert \phi \Vert_{\mathcal{H}_{k_{q,\sigma_q}}} \le C \Vert \phi \Vert_{\mathcal{H}_{k_{p,\sigma_p}}}, \quad \phi \in \mathcal{H}_{k_{p,\sigma_p}},
\end{equation}
where 
\begin{equation}
\nonumber
C = \sqrt{\sup \frac{\mathcal{F}k_{p,\sigma_p}}{\mathcal{F}k_{q,\sigma_q}}}.
\end{equation}
Using 
\begin{equation}
\nonumber
(\mathcal{F}k_{p,L \sigma})(\xi) = \frac{1}{L^p}(\mathcal{F}k_{p, \sigma})(L^p \xi), \quad L > 0, 
\end{equation}
it is clear that $\sigma_p$ and $\sigma_q$ can be chosen in such a way that $C \le 1$, proving the third claim. 
\end{proof}

\bibliography{refs_final}

\end{document}